\def\eqref#1{equation~\ref{#1}}
\def\1{\bm{1}}
\DeclareMathAlphabet{\mathsfit}{\encodingdefault}{\sfdefault}{m}{sl}
\SetMathAlphabet{\mathsfit}{bold}{\encodingdefault}{\sfdefault}{bx}{n}
\newcommand{\E}{\mathbb{E}}
\newcommand{\R}{\mathbb{R}}
\newtheorem{proposition}{Proposition}[section]
\renewenvironment{proposition}
  {\begin{shadedbox}\begin{oldproposition}}
  {\end{oldproposition}\end{shadedbox}}
\newtheorem{assumption}{Assumption}[section]
\newtheorem{theorem}{Theorem}[section]
\renewenvironment{theorem}
  {\begin{shadedbox}\begin{oldtheorem}}
  {\end{oldtheorem}\end{shadedbox}}
\newtheorem{lemma}{Lemma}[section]
\renewenvironment{lemma}
  {\begin{shadedbox}\begin{oldlemma}}
  {\end{oldlemma}\end{shadedbox}}
\newtheorem{corollary}{Corollary}[section]
\renewenvironment{corollary}
  {\begin{shadedbox}\begin{oldcorollary}}
  {\end{oldcorollary}\end{shadedbox}}
\theoremstyle{definition}
\newtheorem{definition}{Definition}[section]
\renewenvironment{definition}
  {\begin{shadedbox}\begin{olddefinition}}
  {\end{olddefinition}\end{shadedbox}}
\newtheorem{example}{Example}[section]
\renewenvironment{example}
  {\begin{shadedbox}\begin{oldexample}}
  {\end{oldexample}\end{shadedbox}}
\theoremstyle{definition}
\theoremstyle{remark}
\newtheorem*{remark}{Remark}
\newcommand{\N}{\mathcal{N}}
\newcommand{\mat}[1]{\mathbf{#1}}
\newmdenv[
  backgroundcolor=gray!0, 
  linecolor=gray!100,       
  skipabove=1.5\topsep,
  skipbelow=1.5\topsep,
  leftmargin=0pt,
  rightmargin=0pt,
  innertopmargin=1.1\topsep,
  innerbottommargin=1.1\topsep
]{shadedbox}
\newcommand{\cmark}{\ding{51}}%
\newcommand{\xmark}{\ding{55}}%
\icmltitlerunning{On the Collapse of Generative Paths: A Criterion and Correction for Diffusion Steering}
\begin{document}

\twocolumn[
  \icmltitle{On the Collapse of Generative Paths:\\ A Criterion and Correction for Diffusion Steering}



  \icmlsetsymbol{equal}{*}
  \icmlsetsymbol{corr}{$\dagger$}

  \begin{icmlauthorlist}
    \icmlauthor{Ziseok Lee}{equal,corr,snu}
    \icmlauthor{Minyeong Hwang}{equal,kaist}
    \icmlauthor{Wooyeol Lee}{snu}
    \icmlauthor{Sanghyun Jo}{ogq,snuai}
    \icmlauthor{Jihyung Ko}{snuai}
    \icmlauthor{Young Bin Park}{calici}
    \icmlauthor{Jae-Mun Choi}{calici}
    \icmlauthor{Eunho Yang}{corr,kaist,aitrics}
    \icmlauthor{Kyungsu Kim}{corr,snu,snuai,snuti}
  \end{icmlauthorlist}

    \icmlaffiliation{snu}{Department of Biomedical Sciences, Seoul National University, Seoul, South Korea}
    \icmlaffiliation{snuti}{School of Transdisciplinary Innovations, Seoul National University, Seoul, South Korea}
    \icmlaffiliation{snuai}{Interdisciplinary Program in AI, Seoul National University, Seoul, South Korea}
    \icmlaffiliation{kaist}{Kim Jaechul Graduate School of AI, Seoul, South Korea}
    \icmlaffiliation{ogq}{OGQ, South Korea}
    \icmlaffiliation{calici}{Calici, South Korea}
    \icmlaffiliation{aitrics}{AITRICS, South Korea}

  \icmlcorrespondingauthor{Kyungsu Kim}{kyskim@snu.ac.kr}
  \icmlcorrespondingauthor{Eunho Yang}{eunhoy@kaist.ac.kr}
  \icmlcorrespondingauthor{Ziseok Lee}{ziseoklee@snu.ac.kr}

  \icmlkeywords{Machine Learning, ICML, Diffusion Steering, Feynman-Kac, Probability Path, Generative Modeling, Structure-based Drug Design, Adaptive Exponents, Compositional Sampling}

  \vskip 0.3in
]



\printAffiliationsAndNotice{\url{https://ziseoklee.github.io/projects/ACE/}. *: Equal Contribution, $\dagger$: Corresponding Author.}

\begin{abstract} 
    Inference-time steering adapts pretrained diffusion and flow models to new tasks without retraining, often utilizing ratio-of-densities constructions that reweight time-indexed marginals with fixed exponents. We identify \emph{Marginal Path Collapse}, a failure mode in which the intermediate density defined by such compositions becomes non-normalizable despite valid endpoints. This collapse can arise when composing heterogeneous experts trained with mismatched noise schedules (and/or negative exponents / partial supports). To address this, we provide (i) a sharp sufficient \emph{Path Existence Criterion} that characterizes when the composed intermediate densities are mathematically well-defined, and (ii) \emph{Adaptive Path Correction with Exponents (ACE)}, which generalizes Feynman–Kac steering to support time-varying exponents. Our analysis reveals that ACE controls the quantile radius of the intermediate distributions, providing a theoretical mechanism for path stabilization observed in experiments. On flexible-pose scaffold decoration, a drug design task composed of de-novo, conformer, and protein-conditioned experts, ACE prevents collapse and significantly outperforms constant-exponent baselines. Furthermore, ACE improves attribute success rates in compositional image generation, establishing it as a general framework for compositional sampling.
\end{abstract}
\section{Introduction}
\label{sec:intro}

\begin{figure}[t]
    \centering
    \includegraphics[width=\linewidth]{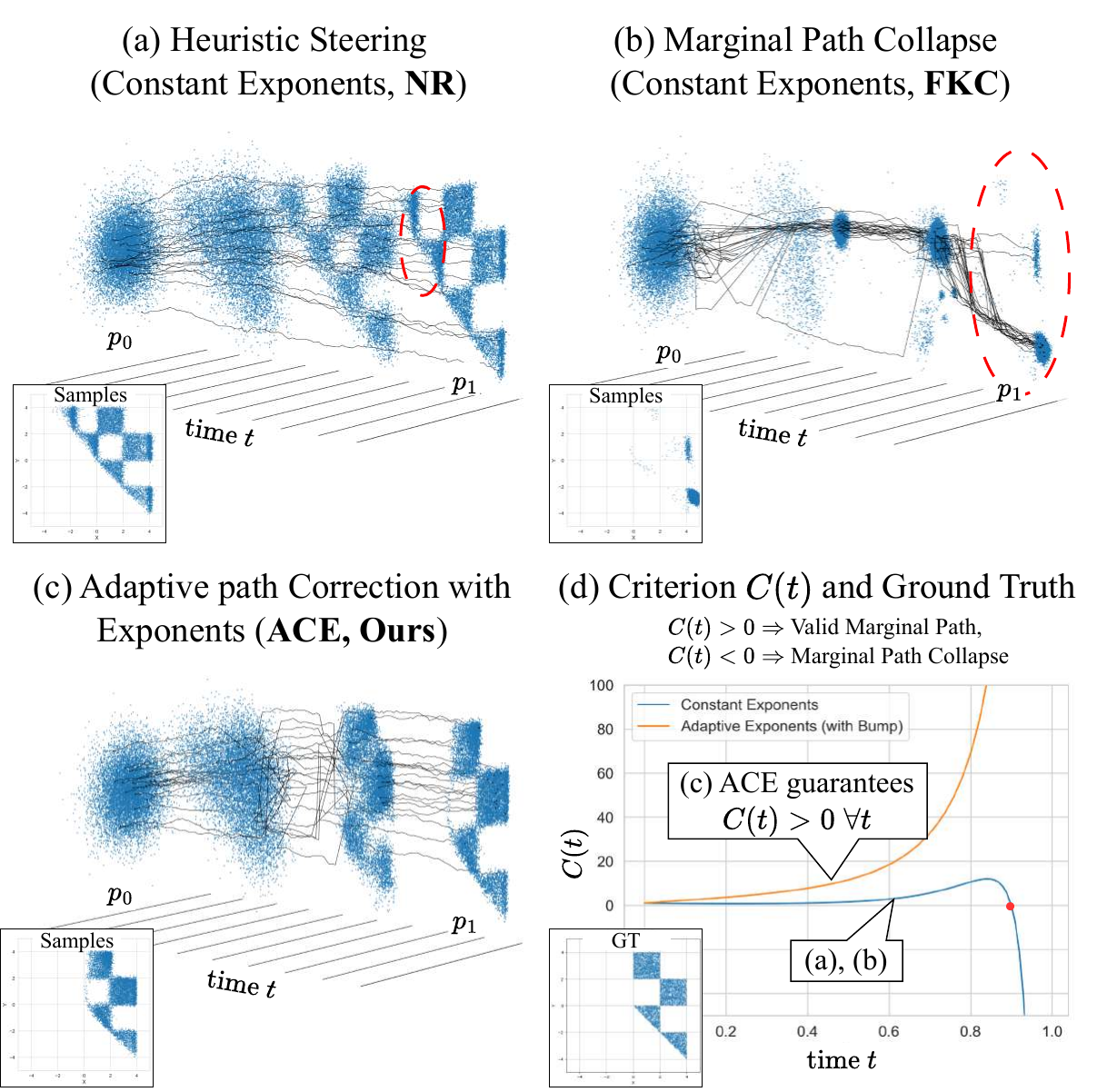}
    \vspace{-1em}
    \caption{\textbf{Marginal Path Collapse breaks heterogeneous composition.} We synthesize a 2D Checkerboard via three heterogeneous experts: two 1D priors and a 2D constraint. (a) \textbf{Standard Steering (NR)} is biased, failing to filter low-density artifacts. (b) \textbf{Feynman--Kac Corrector (FKC)} fails catastrophically because the path existence criterion $C(t)$ becomes negative (Collapse). (c) \textbf{ACE (Ours)} dynamically adjusts exponents to ensure $C(t)>0$, recovering the ground truth. (d) \textbf{Validity Analysis:} ACE maintains positive criterion values while baselines dive into invalid regions.}
    \label{fig:2d_path_collapse_checker}
    \vspace{-1.5em}
\end{figure}

Generative models based on stochastic interpolants, such as diffusion \citep{ho2020denoising,song2021scorebased} and flow matching \citep{lipman2023flow}, have become the standard for content creation and scientific discovery \cite{rombach2022high,schiff2025simple,xie2024diffdec}. A key to their success is \emph{inference-time steering}: adapting pretrained models to new goals and tasks via \emph{modular composition}, avoiding the prohibitive cost of monolithic retraining. A widely used steering mechanism is the \emph{ratio-of-densities} $p(x)^{\gamma_1}/q(x)^{\gamma_2}$, which reweights probability landscapes to enforce constraints. More generally, we study compositions of time-indexed marginals $p^*_t\propto \prod_iq_{i,t}^{\gamma_i(t)}, \gamma_i(t)\in\mathbb{R}$. This template subsumes classifier-free guidance\footnote{CFG: $p^*_t=p_t(x \mid y)^{\gamma}/ p_t(x)^{\gamma-1}$} \citep{ho2021classifier}, Bayesian composition, product-of-experts and more (see Appendix \ref{sec:background_ratios}). While these methods are typically applied in homogeneous settings (same noise schedule, same dimensions), we target the open challenge of general heterogeneous composition. 

\textbf{On the Heterogeneity Gap.} 
Existing methods \cite{skreta2025feynman,mark2025feynman} implicitly rely on the constant-exponent ratio path being normalizable at all sampling times. While typically not an issue in homogeneous setups (e.g., standard CFG \citep{ho2021classifier}), this assumption breaks down for \emph{heterogeneous} compositions essential to science. Here, heterogeneity means experts are trained on different noise schedules and data dimensions. For example, \emph{scaffold decoration} \citep{xie2024diffdec} in drug design naturally combines heterogeneous de-novo (DN) \citep{edm,edmsyco}, conformer (CONF) \citep{geodiff,etflow}, and pocket-conditioned (SBDD) experts \citep{schneuing2024diffsbdd,huang2024binddm} to generate molecules that satisfy topological, geometric, and pocket constraints simultaneously. Crucially, optimal performance requires task-specific noise schedules (e.g., rapid noise decay for refinement vs. slow decay for exploration) \citep{midi,adaptivenoiseschedule,channelwisenoiseschedule,flexibletrajectories}, making heterogeneous scheduler combinations not merely incidental, but often necessary. Our DN/CONF/SBDD survey (Appendix~\ref{subsec:necessityoftaskspecificnoiseschedulers}, Tables~\ref{tab:schedulersforconformergeneration}--\ref{tab:schedulersforsbdd}) confirms that task-specific scheduling is standard. In fact, in our scaffold-decoration setting, forcing schedule alignment can be suboptimal (Appendix~\ref{sec:timereparameterizationisnotaneffectivesolution}).

\textbf{Marginal Path Collapse (MPC).}
We identify a critical failure mode in these settings: \emph{Marginal Path Collapse}. This occurs when the intermediate density becomes non-normalizable due to mismatched tail contraction induced by schedules and exponents, even if endpoints remain valid (Fig.~\ref{fig:2d_path_collapse_checker}). When collapse happens (the normalizer diverges), the score is mathematically undefined. While numerical SDE solvers may still run, they effectively simulate an unintended path, causing the terminal distribution to diverge from the target. We show that MPC arises naturally and can be common when composing mismatched schedules (Appendix~\ref{subsec:prevalenceandconsequencesofmarginalpathcollapse}).

\textbf{Proposed Framework for Guaranteed Validity.}
We provide a comprehensive diagnosis and solution (Table~\ref{tab:method_comparison}):

\textbf{1. Diagnosis: Path Existence Criterion (PEC).} We derive a rigorous criterion that certifies path existence when $C(t)>0$ and detects collapse when $C(t)<0$, based on schedules and exponents. The PEC explains precisely why constant-exponent baselines fail. Under Gaussian-to-compactly-supported settings, these are sharp universal conditions; the boundary case $C(t)=0$ is handled separately in Appendix~\ref{app:compactly_supported}.


\textbf{2. Solution: Adaptive Path Correction with Exponents (ACE).} We generalize Feynman–Kac steering to support time-varying exponents. ACE dynamically adjusts steering weights to satisfy the PEC, provably ensuring path existence up to the final timestep. {We also show that ACE acts as a variance reduction mechanism, minimizing the quantile radius of intermediate distributions to stabilize sampling.}

\textbf{Experimental Validation.} On a synthetic benchmark, ACE eliminates collapse and reduces error by $4\times$. On flexible-pose scaffold decoration, ACE enables the stable composition of DN/CONF/SBDD experts, surpassing constant-exponent baselines across metrics. In our flexible-pose setting, ACE exceeds specialized monolithic baselines in optimization success rates while maintaining competitive drug-likeness. Finally, on COCO-MIG (Appendix~\ref{sec:image}), ACE improves attribute success rates ($+9.6\%p$) over constant baselines, establishing it as a general framework for both heterogeneous and homogeneous diffusion steering.

\textbf{Conflict of Interest Disclosure.} The authors declare no financial conflicts of interest related to this work.

\vspace{-0.5em}
\begin{table}[h]
\caption{Comparison of steering methodologies. Unlike heuristics or standard correctors (FKC) which assume homogeneity, ACE guarantees path validity under general heterogeneous conditions.}
\centering
\vspace{-0.5em}
\renewcommand{\arraystretch}{1.2}
\setlength{\tabcolsep}{4pt} 
\resizebox{\linewidth}{!}{%
\begin{tabular}{>{\RaggedRight\arraybackslash}p{8.5em} >{\RaggedRight\arraybackslash}p{6.8em} >{\RaggedRight\arraybackslash}p{8em} >{\RaggedRight\arraybackslash}p{7.5em}}
\hline
\textbf{Feature} & \textbf{Heuristics (e.g., CFG)} & \textbf{FKC \citep{skreta2025feynman}} & \cellcolor{blue!8}\textbf{ACE (Ours)} \\ 
\hline
Heterogeneity Support & Heuristically & \xmark{} (requires homogeneity) & \cellcolor{blue!8}\textbf{\cmark} \\
Time-Varying $\gamma(t)$ & Heuristically & \xmark{} (constant $\gamma$) & \cellcolor{blue!8}\textbf{\cmark} \\
Path Existence under Heterogeneity & \xmark & \xmark & \cellcolor{blue!8}\textbf{Guaranteed under PEC} \\ 
\hline
\end{tabular}
}
\label{tab:method_comparison}
\vspace{-1.5em}
\end{table}

\section{Method}
\label{sec:method}

\textbf{Step 1.} Compute the path existence criterion $C(t)$ from noise schedules and exponents. If it goes negative or near-zero, constant-exponent steering is invalid/unstable.

\textbf{Step 2.} Choose bump function(s) to modify either one positive-exponent expert, or a covering subset of positive-exponent experts under partial support, so that $C_k(t)\ge\delta >0$ for every covered coordinate $k$ on the sampling timesteps $\{0,t_1,\dots,t_\text{end}\}$.

\textbf{Step 3.} Sample the corrected path with a weighted particle method (ACE), which reduces to FKC when $\dot \gamma _i(t)=0$.

\subsection{Preliminaries: Heterogeneous Ratio-of-Densities}
\label{sec:preliminaries}

\textbf{Heterogeneous Experts via Stochastic Interpolants.} We consider $n$ expert probability paths (or \emph{experts}) $\{q^{(i)}_t\}_{t\in[0,1]}$, where each expert $i$ is generated by a distinct \emph{stochastic interpolant} $X^{(i)}_t = \alpha^{(i)}_t X^{(i)}_0 + \beta^{(i)}_t X^{(i)}_1$. Here, we assume $X^{(i)}_0 \sim \mathcal{N}(0, I_{d_i})$, $X^{(i)}_1 \sim q^{(i)}_1$ is the expert's target, and $\alpha^{(i)}_t, \beta^{(i)}_t$ are differentiable noise schedules satisfying standard conditions. This formulation unifies diffusion and flow matching \citep{albergo2025stochasticinterpolatnsunifyingframework}. Crucially, we allow for \emph{heterogeneous noise schedules} where $\alpha^{(i)}_t \neq \alpha^{(j)}_t$.

Associated with each expert is a stochastic differential equation (SDE) on $\mathbb{R}^{d_i}$: $X^{(i)}_0\sim\mathcal{N}(0, I_{d_i})$ and 
\vspace{-0.5em}
\begin{equation}
    dX^{(i)}_t = \Big(v^{(i)}_t(X^{(i)}_t) + \tfrac{(\sigma^{(i)}_t)^2}{2} s^{(i)}_t(X^{(i)}_t)\Big)dt + \sigma^{(i)}_t dW^{(i)}_t
    \label{eq:expert_sde}
\end{equation}
\vspace{-0.5em}
where $s^{(i)}_t = \nabla \log q^{(i)}_t$ is the score and $v^{(i)}_t$ is the velocity.

\textbf{The Composed Path.} To compose experts of varying dimensionalities $d_i$, we embed them into a common ambient space $\R^d$ ($d = \max_i d_i$). Given time-dependent exponents $\gamma_i(t)$, the \emph{heterogeneous ratio-of-densities} is defined as:
\vspace{-0.5em}
\begin{equation}
    h_t(x) := \prod_{i=1}^n \big(\tilde q^{(i)}_t(x)\big)^{\gamma_i(t)}, \qquad x\in\R^d.
    \label{eq:heterogeneous_ratio_of_densities}
\vspace{-0.5em}
\end{equation}
We say the family\footnote{For expert $i$ acting on coordinates $I_i \subset [d]$, the projection $\pi_i:\R^d\to\R^{d_i}$ and embedding $\iota_i:\R^{d_i}\to\R^d$ are used for $\tilde q_t^{(i)}(x) := q_t^{(i)}(\pi_i(x))$ and vector fields $\tilde{v}^{(i)}_t = \iota_i \circ v^{(i)}_t \circ \pi_i$.} has the \emph{path existence property} if $Z_t = \int_{\R^d} h_t(x)dx < \infty$ for all $t \in [0, t_\text{end}]$. In this case, $p_t^* = h_t/Z_t$ is the valid normalized probability path.

{\textbf{Path existence requirement.} SDE/ODE samplers and Feynman–Kac correctors assume that $p^*_t = h_t/Z_t$ exists at every timestep, requiring the $h_t$ to be integrable. If $h_t$ becomes non-normalizable (\emph{Marginal Path Collapse}), then $p^*_t$ and its score cease to exist. The sampler still solves a well-posed ODE/SDE, but the resulting flow transports a different density path $\{p'_t\}$ rather than the intended $\{p^*_t\}$. The terminal distribution $p'_1$ produced by the sampler no longer matches the desired target $p^*_1$ (see Appendix~\ref{app:pathexistencerequirement}).}

\subsection{Marginal Path Collapse}
\label{subsec;marginalpathcollapse}
Even if both endpoints $h_0$ and $h_1$ are integrable, \emph{path-existence} is not guaranteed. A simple Gaussian example demonstrates the phenomenon (Figure \ref{fig:path_collapse_2d_gaussian}).

\textbf{Why collapse occurs.}  
Let $h_t(x) = q^{(1)}_tq^{(2)}_t/q^{(3)}_tq^{(4)}_t$, where each component is a Gaussian path\footnote{A path from $q_0=\mathcal{N}(0,\sigma_1^2I)$ to $q_1=\N(0,\sigma_2^2I)$ has the closed form expression $q_t=\N\left(0,(\sigma_1^2(1-t)^2+\sigma_2^2t^2)I\right)$} under the linear interpolant $X_t = (1-t)X_0 + tX_1$:  
\begin{align}
q^{(1)}_t &= \N\!\Big(0,\;\big((1-t)^2 + \tfrac{1}{2}t^2\big)I\Big), \nonumber \\
q^{(2)}_t &= \N\!\Big(0,\;\big((1-t)^2 + 7t^2\big)I\Big), \nonumber \\
q^{(3)}_t = q^{(4)}_t &= \N\!\Big(0,\;\big(\tfrac{3}{2}(1-t)^2 + t^2\big)I\Big)
\label{eq:gaussian_counterexample}
\end{align}
At $t=0$ and $t=1$, the ratio $h_t\propto \mathcal{N}(0,\sigma_\text{eff}^2(t)I)$ for some finite $\sigma^2_\text{eff}(t)$, making it integrable. However, by directly computing the ratio at $t=0.5$, $h_t(x) \geq C\cdot  \exp(+0.01\|x\|^2)$, for some constant $C>0$, which is not integrable on $\R^d$. In fact, we can plot the probability path $p^*_t$ and its effective variance $\sigma^2_\text{eff}(t)$ across time as in Figure \ref{fig:path_collapse_2d_gaussian}, showing that intermediate paths do not exist despite valid endpoints. 
We analyze the more general Gaussian mixture case in Appendix ~\ref{app:gmm}.

The Gaussian example provides a crucial intuition: \emph{Marginal Path Collapse} occurs when the variances of the numerator terms shrink "slower" than the variances of the denominator terms. This creates a temporary, fatal imbalance where the combined density becomes explosive rather than decaying at infinity. While this closed-form example is illustrative, most real-world models, especially those operating on complex data like molecules or images, involve non-Gaussian and \emph{compactly supported target distributions} where such a direct variance calculation is impossible.

A natural question arises: can we find a general criterion that diagnoses the risk of collapse for these more complex cases, without needing a closed-form expression for the path?

\begin{figure}[h]
    \centering
    \includegraphics[width=\linewidth]{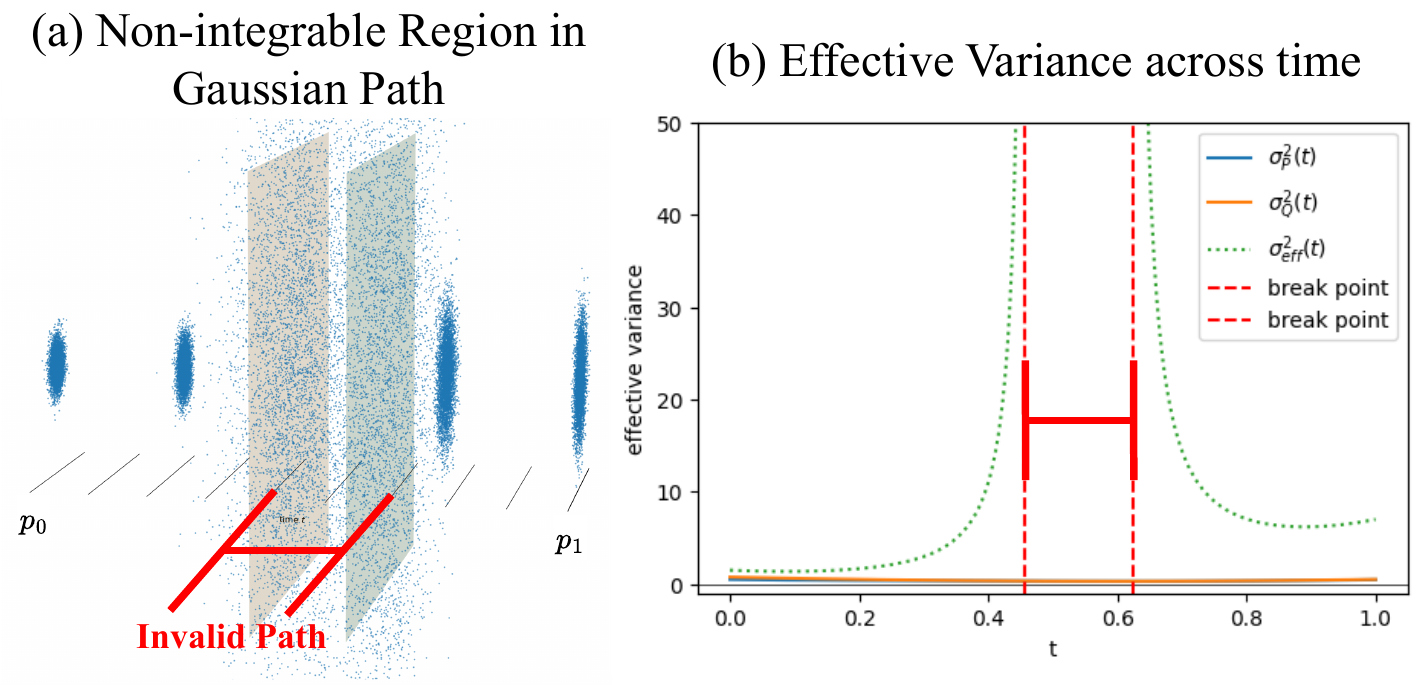}
\vspace{-0.5em}
    \caption{\textbf{Non-integrable Region in the ratio-of-Gaussians example (Eq. \ref{eq:gaussian_counterexample}).} The path is well-defined at the endpoints, but the intermediate variance explodes, causing \emph{Marginal Path Collapse}.}
    \label{fig:path_collapse_2d_gaussian}
\vspace{-0.5em}
\end{figure}

\subsection{Path Existence Criterion for Compactly Supported Targets}

Compactly supported distributions are common in creative and scientific applications, where data are bounded by physical constraints. In this setting, we can derive a clean criterion $C(t)$ for path existence.

\begin{theorem}[Path Existence Criterion (PEC)]
\label{thm:main_compactly_supported}
For each $i \in[n]$, let $\gamma_i(t),\alpha^{(i)}_t,\{q^{(i)}_t\}_{t \in [0,1]},h_t(x),I_i$ be as defined in Sec.~\ref{sec:preliminaries}. We only assume additionally that $q^{(i)}_1$ has compact support for all $i\in[n]$.

If $h_1(x)$ is integrable and for every coordinate $k \in \{1,\dots,d\}$ and all $t \in [0,t_\text{end}]$ ($t_\text{end}<1$),
\begin{equation}
C_k(t) := \sum_{i:\,k \in I_i} \frac{\gamma_i(t)}{(\alpha^{(i)}_t)^2} > 0,
    \label{eq:criterion_compactly_supported}
\end{equation}
then $\{h_t\}_{t \in [0,t_\text{end}]}$ has the path existence property. Conversely, if there exists a coordinate $k^*\in\{1,\dots,d\}$ and $t^* \in [0,t_\text{end}]$ such that $C_{k^*}(t^*)<0$, then $h_t$ is not integrable at $t^*$ (Marginal Path Collapse). We write the PEC by $C(t)=\min _k C_k(t)$.
\end{theorem}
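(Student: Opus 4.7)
The plan is to reduce path existence to a tail estimate of each expert density $q_t^{(i)}$ and then combine these estimates through the logarithm of $h_t$. For $t \in [0,1)$, compact support of $q_1^{(i)}$ together with the interpolant representation means that $q_t^{(i)}$ is the convolution of the pushforward of $q_1^{(i)}$ under $z \mapsto \beta_t^{(i)} z$ with a Gaussian of variance $(\alpha_t^{(i)})^2 I_{d_i}$. Writing this out and expanding $\|y - \beta_t^{(i)} z\|^2 = \|y\|^2 - 2 \beta_t^{(i)} \langle y, z \rangle + (\beta_t^{(i)})^2 \|z\|^2$, the $\|y\|^2$ term pulls out of the convolution integral. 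Bounding the remaining integral using $\mathrm{supp}(q_1^{(i)}) \subset B_{R_i}(0)$ gives the two-sided asymptotic
$$
\log q_t^{(i)}(y) = -\frac{\|y\|^2}{2(\alpha_t^{(i)})^2} + O(\|y\|) \qquad \text{as } \|y\| \to \infty,
$$
where the upper bound uses $\int q_1^{(i)} = 1$ and the lower bound uses positivity of $q_1^{(i)}$ on a small neighborhood inside its support.

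Pulling this through the lift $\tilde q_t^{(i)}(x) = q_t^{(i)}(\pi_i(x))$ and expanding $\|\pi_i(x)\|^2 = \sum_{k \in I_i} x_k^2$ yields, after exchanging summations,
$$
\log h_t(x) = -\frac{1}{2} \sum_{k=1}^d C_k(t)\, x_k^2 + O(\|x\|) \qquad \text{as } \|x\| \to \infty,
$$
where exactly $C_k(t) = \sum_{i:\,k \in I_i} \gamma_i(t)/(\alpha_t^{(i)})^2$ emerges as the coefficient of $x_k^2$. The sign of $\gamma_i(t)$ only flips the direction of the tail bound for that term, so the two-sided estimate survives the weighted sum for any finite choice of exponents.

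If every $C_k(t) > 0$, the diagonal quadratic form is strictly positive definite and dominates the linear correction, giving $h_t(x) \lesssim \exp(-c\|x\|^2)$ at infinity; combined with the fact that $h_t$ is continuous and bounded on every compact set (again from the Gaussian convolution form), this delivers $h_t \in L^1(\mathbb{R}^d)$ by a standard Gaussian tail comparison. Conversely, if $C_{k^*}(t^*) < 0$, restricting $x$ to the $x_{k^*}$-axis shows that $h_{t^*}(x)$ dominates $\exp\!\big(\tfrac{1}{2}|C_{k^*}(t^*)|\, x_{k^*}^2 - c|x_{k^*}|\big)$ in that direction, precluding integrability. The endpoint $t=1$ is handled by the separate hypothesis $h_1 \in L^1$, which is genuinely needed because the criterion degenerates as $\alpha_t^{(i)} \to 0$.

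The main technical obstacle is not the computation itself but the careful two-sided control on the convolution: one must ensure that the $\langle y, z\rangle$ cross term contributes only at the linear-in-$\|y\|$ scale and does not inadvertently generate an additional quadratic contribution that would shift the effective coefficient of $x_k^2$. Compact support of $q_1^{(i)}$ is precisely what makes this work, since it bounds $|\langle y, z\rangle| \le R_i \|y\|$ uniformly over $z \in \mathrm{supp}(q_1^{(i)})$. A secondary subtlety is that coordinates $k$ not contained in any $I_i$ yield an empty sum $C_k(t) = 0$, which correctly violates the hypothesis — as it should, because in such a direction $h_t$ is constant and cannot be integrable over $\mathbb{R}^d$.
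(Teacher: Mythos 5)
Your proposal follows essentially the same route as the paper: derive two-sided Gaussian-envelope bounds on each $q_t^{(i)}$ from the convolution structure and compact support (this is the paper's Proposition~\ref{prop:gaussian_to_cmp_supp_density}), raise to the exponents $\gamma_i(t)$ so that the quadratic coefficients aggregate to $C_k(t)$, and then invoke the fact that $\exp(-\tfrac12 x^\top A x + O(\|x\|))$ is in $L^1$ iff $A\succ 0$. One minor imprecision to flag: for the converse, ``restricting $x$ to the $x_{k^*}$-axis'' (a measure-zero set) does not by itself preclude integrability over $\mathbb{R}^d$ — one must integrate the lower bound over a tube of positive radius around that axis, using that the other coordinates contribute only a bounded factor there; the paper's Lemma~\ref{lemma:integrabilityofratiooftwogaussiancs} makes exactly this tube argument explicit. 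With that clarification your proof is sound, and your closing observation about coordinates $k$ lying in no $I_i$ (so $C_k(t)=0$ and $h_t$ is necessarily non-integrable along that direction) is a correct and useful sanity check the paper does not spell out.
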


We provide a proof in Theorem~\ref{thm:integrability_preservation_condition_for_compactly_supported}. This theorem provides a tractable checklist: to certify path existence, one only needs to verify endpoint integrability and the positivity of the coefficients $C_k(t)$, which in practice can be checked on the discrete timesteps used by the sampler $\{0, t_1,...,t_\text{end}\}$. As for the boundary case $C(t)=0$, Example~\ref{example:boundary_case} gives a 1D example showing that $C(t)=0$ can yield either path existence or collapse. Hence, $C(t)>0$ and $C(t)<0$ represent the sharpest possible universal conditions. 

Figure~\ref{fig:common_noise_schedules} illustrates the effect of this test on standard noise schedules. Panel~(a) plots $C(t)$ for several heterogeneous compositions $h_t = q_t^{(1)} q_t^{(2)} / q_t^{(3)}$ built from linear, cosine, DDPM, and related schedules, revealing that many combinations enter a region with $C(t) < 0$. This shows that widely used heuristic schedules can induce Marginal Path Collapse even when each individual expert is well behaved.

\begin{figure}[h]
    \centering
    \includegraphics[width=\linewidth]{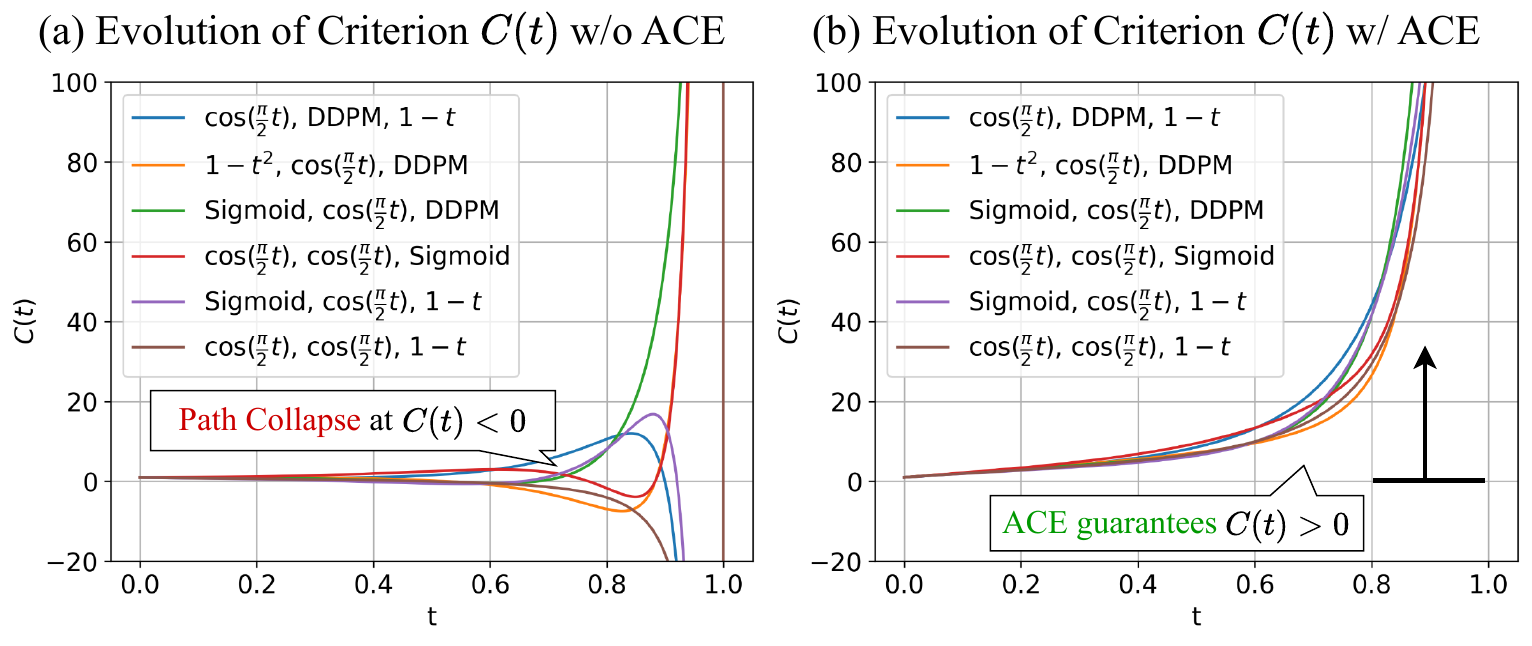}
    \vspace{-2em}
    \caption{\textbf{Common noise schedules and Marginal Path Collapse.}
    (a) Path-existence criterion $C(t)$ (Eq.~\ref{eq:criterion_compactly_supported}) for several
    heterogeneous three-expert compositions $h_t = q_t^{(1)} q_t^{(2)} / q_t^{(3)}$, formed from
    these schedules. Many combinations enter a region where $C(t) < 0$, implying non-normalizable
    intermediate densities. (b) The corrected exponents of ACE ensure $C(t) > 0$ for all $t\le t_\text{end}$, guaranteeing path existence.}
    \label{fig:common_noise_schedules}
\end{figure}

\begin{proposition}[Concentration Control (Informal)]
\label{prop:concentration}
Under the Gaussian-prior-to-compactly-supported-target assumptions of Theorem~\ref{thm:main_compactly_supported}, the intermediate density $p^*_t$ satisfies sub-Gaussian tail bounds controlled by the PEC $C(t)$. Specifically, the $(1-\varepsilon)$-quantile radius $R_t(\varepsilon)$ scales approximately as:
\begin{equation}
    R_t(\varepsilon) \approx O\left( \frac{1}{\sqrt{C(t)}}\right).
\end{equation}
\end{proposition}

The formal proof is provided in Proposition~\ref{prop:radius_bound_general}. We verify the requisite quadratic envelope condition in Proposition~\ref{prop:gaussian_to_cmp_supp_density}, which establishes that Gaussian-to-compact interpolants admit the bounds necessary for both Theorem~\ref{thm:main_compactly_supported} and the concentration result above.

\begin{figure}[h]
    \centering
    \includegraphics[width=\linewidth]{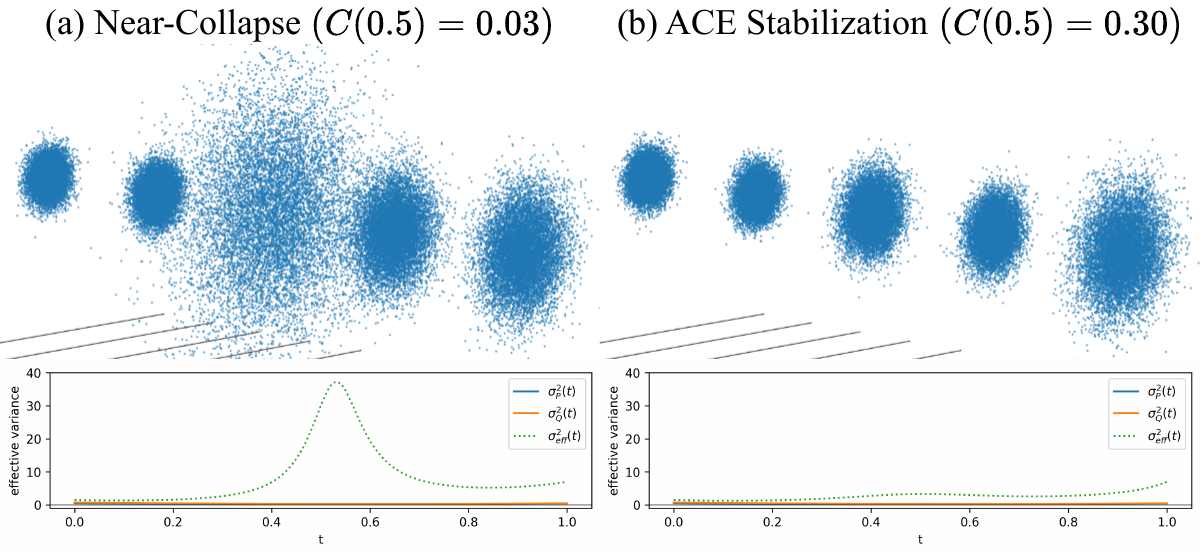}
    \vspace{-2em}
    \caption{\textbf{Concentration control via Precision $C(t)$.} We illustrate Proposition~\ref{prop:concentration} on the example path (Eq.~\ref{eq:gaussian_counterexample}). (a) A path near the existence boundary ($C(t) \approx 0$) exhibits extreme variance expansion (Green curve) and dispersed trajectories, consistent with the inverse-precision radius bound $R_t \propto C(t)^{-1/2}$. (b) ACE raises $C(t)$, strictly bounding the quantile radius and forcing trajectories to stay concentrated near the mode.}
    \label{fig:stable_path}
\vspace{-1.5em}
\end{figure}

\textbf{Interpretation.} This result bridges the gap between binary validity and continuous quality. Even if the path theoretically exists (valid), a vanishing $C(t) \to 0^+$ implies $R_t \to \infty$. This yields large effective radii, producing overly diffuse intermediate distributions that destabilize particle weights and degrade sample quality (Fig.~\ref{fig:stable_path}, (a)). ACE adapts exponents over time to keep $C(t)$ positive \emph{and} bounded away from zero. This acts as a focusing force, shrinking the effective radius and concentrating probability mass along the optimal trajectory (Fig.~\ref{fig:stable_path}, (b)).

This discovery motivates the central goal of our work: to develop a method that can correct any given set of schedules to ensure the path existence criterion is always satisfied, thereby transforming unstable heuristics into a robust, guaranteed methodology. We introduce this method next.

\subsection{Adaptive Path Correction with Exponents: Bump Function Protocol}

We develop our correction protocol, ACE, by first constructing a valid exponent schedule $\tilde{\gamma}_i(t)$ and then deriving the sampling dynamics that follow this corrected path.

In practice, we need control over the initial distribution $p^*_0=\Pi_{i=1}^n(p^{(i)}_0)^{\gamma_i(0)}/Z_0$ (usually fixed to $\N(0,I)$) and the target distribution $p^*_1 = \Pi_{i=1}^n(p^{(i)}_1)^{\gamma_i(1)}/Z_1$. However, we do not need to fix the intermediate marginals $p^*_t,\;t\in(0,1)$. The idea is to choose an appropriate $\tilde{\gamma}_i(t)$ that preserves the original exponent values at the beginning and end, $\tilde{\gamma}_i(0)=\gamma_i(0),\tilde{\gamma}_i(1)=\gamma_i(1)$, while ensuring the intermediate densities are all normalizable.

\begin{theorem} [Adaptive Exponents (Theorem \ref{thm:adaptive_exponents_appendix})]
    Let a set of noise schedules $\{\alpha^{(i)}_t\}$ and exponent boundary values $\{\gamma_i(0), \gamma_i(1)\}$ be given. Assume that the criterion $C(t)$ (Theorem~\ref{thm:main_compactly_supported}) is positive at $t=0$. 
    
    Then, there exists a set of differentiable functions $\{\tilde{\gamma}_i(t)\}_{i=1}^n$ such that $\tilde{\gamma}_i(0) = \gamma_i(0)$, $\tilde{\gamma}_i(1) = \gamma_i(1)$ for all $i$, and $C(t) > 0$ is satisfied for all $t \in [0,t_\text{end}]$.   
    \label{thm:adaptive_exponents_main}
\end{theorem}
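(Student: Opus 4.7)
The plan is to construct $\tilde{\gamma}_i(t)$ by perturbing a linear interpolation between the prescribed endpoint exponents with a smooth bump that vanishes at $t=0$ and $t=1$. I would first set $\gamma_i^{\mathrm{lin}}(t) := (1-t)\gamma_i(0) + t\gamma_i(1)$, which already matches the required boundary values, then pick any smooth $\phi:[0,1]\to[0,1]$ with $\phi(0)=\phi(1)=0$ and $\phi(t)>0$ for $t\in(0,1)$ (for instance $\phi(t)=\sin^2(\pi t)$, or a compactly supported $C^\infty$ bump), and define
\[
\tilde{\gamma}_i(t) := \gamma_i^{\mathrm{lin}}(t) + B\,\phi(t),
\]
with an amplitude $B>0$ to be fixed later. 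Because $\phi(0)=\phi(1)=0$, the endpoint conditions $\tilde{\gamma}_i(0)=\gamma_i(0)$ and $\tilde{\gamma}_i(1)=\gamma_i(1)$ hold automatically, and differentiability of $\tilde{\gamma}_i$ is inherited from the smoothness of $\phi$ and of the base interpolation.

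\textbf{Reduction.} Under this ansatz the criterion decomposes as
\[
C_k(t) = C_k^{\mathrm{lin}}(t) + B\,\phi(t)\,M_k(t), \qquad M_k(t) := \sum_{i:\,k\in I_i} \frac{1}{(\alpha^{(i)}_t)^2},
\]
where $M_k(t)$ is strictly positive on $[0,1)$ since each $\alpha^{(i)}_t>0$ there and every coordinate lies in some $I_i$. The problem then reduces to choosing $B$ large enough to make $C_k(t)>0$ uniformly in $k$ and $t\in[0,1)$.

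\textbf{Region-wise positivity.} I would then split $[0,1)$ into three pieces. Near $t=0$: because $\phi(0)=0$, $C_k(0)=C_k^{\mathrm{lin}}(0)=\sum_{i:k\in I_i}\gamma_i(0)>0$ by the boundary hypothesis, and continuity of $C_k^{\mathrm{lin}}$ extends positivity to some $[0,\delta_0]$; the bump adds a nonnegative term, so $C_k>0$ there for any $B\geq 0$. Near $t=1$: the hypothesis $\lim_{t\to 1^-}C_k^{\mathrm{lin}}(t)>0$ (which matches the theorem's assumed limit since $\gamma_i^{\mathrm{lin}}(t)\to\gamma_i(1)$) gives an interval $(1-\delta_1,1)$ on which $C_k^{\mathrm{lin}}>0$, and again $B\phi M_k\geq 0$ preserves positivity. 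On the compact middle slice $[\delta_0,1-\delta_1]$, both $\phi$ and $M_k$ attain strictly positive minima while $C_k^{\mathrm{lin}}$ is bounded below, so for $B$ sufficiently large the bump term dominates and keeps $C_k>0$ throughout. Taking the maximum of the required $B$ over the finitely many coordinates $k$ yields a single amplitude that works uniformly.

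\textbf{Main obstacle.} The delicate step is the behavior at the right endpoint $t\to 1^-$, where $1/(\alpha^{(i)}_t)^2$ diverges while $\phi(t)$ vanishes, producing an indeterminate bump contribution $B\phi(t)M_k(t)$. The rescue is purely a sign argument: because $\phi\geq 0$ and $M_k>0$, this contribution is nonnegative regardless of its precise asymptotics near $t=1$, so it cannot undo the strictly positive limit of $C_k^{\mathrm{lin}}$ guaranteed by hypothesis. A secondary but routine check is that the amplitude $B$ demanded on the middle region is finite, which follows immediately from the boundedness of $C_k^{\mathrm{lin}}$ and the positivity of $\min\phi M_k$ on any closed subinterval of $(0,1)$.
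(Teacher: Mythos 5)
Your proof is correct and follows the same high-level bump-function strategy as the paper, but with one genuine structural difference that is worth noting. You perturb \emph{every} exponent by the same bump $B\phi(t)$, so the criterion gains a term $B\phi(t)M_k(t)$ with $M_k(t)=\sum_{i:k\in I_i}(\alpha^{(i)}_t)^{-2}$. The paper instead identifies the time $t_{\min}$ where the criterion is most negative, selects a \emph{single} index $j$ (the one maximizing $\gamma_j(t_{\min})/(\alpha^{(j)}_{t_{\min}})^2$), and adds the bump $B\,t(1-t)$ only to $\tilde\gamma_j$. Both constructions are valid: the positivity of the added term and the compactness argument go through identically. What the paper's single-index perturbation buys is a minimal-change property mentioned in its Remark---it keeps the other $\gamma_i$ untouched, which matters for numerical stability since $B$ multiplicatively rescales the component drift/score in the sampler, and so spreading the bump across all experts would amplify approximation error from every network rather than just one. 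What your construction buys is a cleaner reduction: the positivity of $M_k(t)$ is immediate, the three-region split is self-contained, and your argument naturally handles the coordinate-wise $C_k(t)$ from the main-text statement (whereas the appendix proof works with the scalar $S(t,\{\gamma_i\})$ and would need to be repeated per coordinate). Your ``main obstacle'' discussion about the indeterminate product $\phi(t)M_k(t)$ near $t=1$ is resolved correctly by the sign argument, matching the paper's implicit reasoning.
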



We provide a constructive proof in Theorem \ref{thm:adaptive_exponents_appendix} showing that there always exist positive constants $B_1, B_2,\tau>0$ such that changing the exponent schedules of a covering subset of positive-exponent experts ensures coordinate-wise path existence. In particular, when one expert covers all coordinates, this reduces to a single bump $\tilde{\gamma}_j(t)=\gamma_j(t)+b(t)$ for one index $j$. We concretely design the bump function $b(t) = B_1Q(t)+B_2L_\tau(t)$ using quadratic $Q(t)=t(1-t)$ and linear $L_\tau(t) = \min(t, \tau (1-t))$ ''bump'' functions whose endpoint values are fixed to $0$ and intermediate values are strictly positive.

\textbf{ACE Sampler.} Correcting $\gamma_j(t)$ fixes the target path. Now we have established "what" we are going to sample from. The next question is "how" to sample from the corrected probability path (i.e., a sampler that remains correct when $\dot \gamma_j(t)\ne 0$). This introduces an additional weight term $\sum_i \dot \gamma_i(t)\log \tilde q^{(i)}_t(X_t)$, which ACE tracks via auxiliary dynamics. By extending the {Feynman--Kac \cite{skreta2025feynman}} weighted SDE to time-dependent exponents $\gamma_i(t)$, the following theorem establishes our sampling algorithm:

\begin{theorem}[ACE Sampling Dynamics]
\label{thm:ace_sampling_main}
Assume the Path Existence Criterion (Theorem~\ref{thm:main_compactly_supported}) holds. Let $v^*_t$ be a chosen vector field (e.g., velocity of the geometric mean). The target path $p^*_t \propto \prod (\tilde{q}^{(i)}_t)^{\gamma_i(t)}$ is realized by the weighted SDE:
\begin{align}
    dX_t &= \Big(v^*_t(X_t) + \tfrac{\sigma_t^2}{2}s^*_t(X_t)\Big)dt + \sigma_t dW_t, 
    \label{eq:ace_sde_main}
\end{align}
where $s^*_t = \sum \gamma_i(t) \tilde{s}^{(i)}_t$ is the weighted score. Crucially, to account for the time-varying exponents $\dot{\gamma}_i(t)$, the particle importance weights $w_t$ must evolve via:
\begin{align}
    d\log w_t(X_t) =& \Bigg[\mathcal{F}\Big(X_t, s^*_t, v^*_t, \{\gamma_i, \tilde{s}^{(i)}_t, \tilde{v}^{(i)}_t\}_{i=1}^n\Big)  \nonumber \\ & +  \sum_{i=1}^n\dot{\gamma}_i(t) \log \tilde{q}^{(i)}_t(X_t)\Bigg]dt
\label{eq:ace_weight_main}
\end{align}
The explicit form of $\mathcal{F}$ and the auxiliary SDE (derived via It\^o's formula) for tracking $\log \tilde q^{(i)}_t(X_t)$'s are provided in Theorem~\ref{thm:heterogeneous_fkc}.
\end{theorem}

We provide the full derivation and proof in Theorem \ref{thm:heterogeneous_fkc}. Note we can minimize costly divergence computations when simulating the SDE in Theorem \ref{thm:ace_sampling_main} by choosing $v^*_t=\sum_{i=1}^n \gamma_i(t)\tilde{v}^{(i)}_t$. We used this choice in our scaffold decoration experiment for numeric stability and efficiency.

\textbf{Practical Remark.} Theorem~\ref{thm:ace_sampling_main} characterizes the weighted SDE whose marginal $p_t^*$ follows the corrected ratio-of-densities path. In practice, we implement this dynamics using a particle system with importance weights: at each step we propagate particles under the drift and diffusion in Eq.~\ref{eq:ace_sde_main}, update their log-weights via Eq.~\ref{eq:ace_weight_main}, and trigger a resampling step whenever the effective sample size (ESS) falls below a threshold. During resampling, high-weight samples are duplicated and low-weight ones are removed in proportion to their weights. As illustrated in Figure~\ref{fig:ace_trajectories}, this procedure eliminates out-of-distribution trajectories in ACE, whereas the no-resampling heuristic (NR) leaves invalid samples in the batch. The complete algorithm for Theorem~\ref{thm:ace_sampling_main} is provided in Algorithm~\ref{alg:ACE}.

\vspace{-1em}
\begin{figure}[h]
    \centering
    \includegraphics[width=\linewidth]{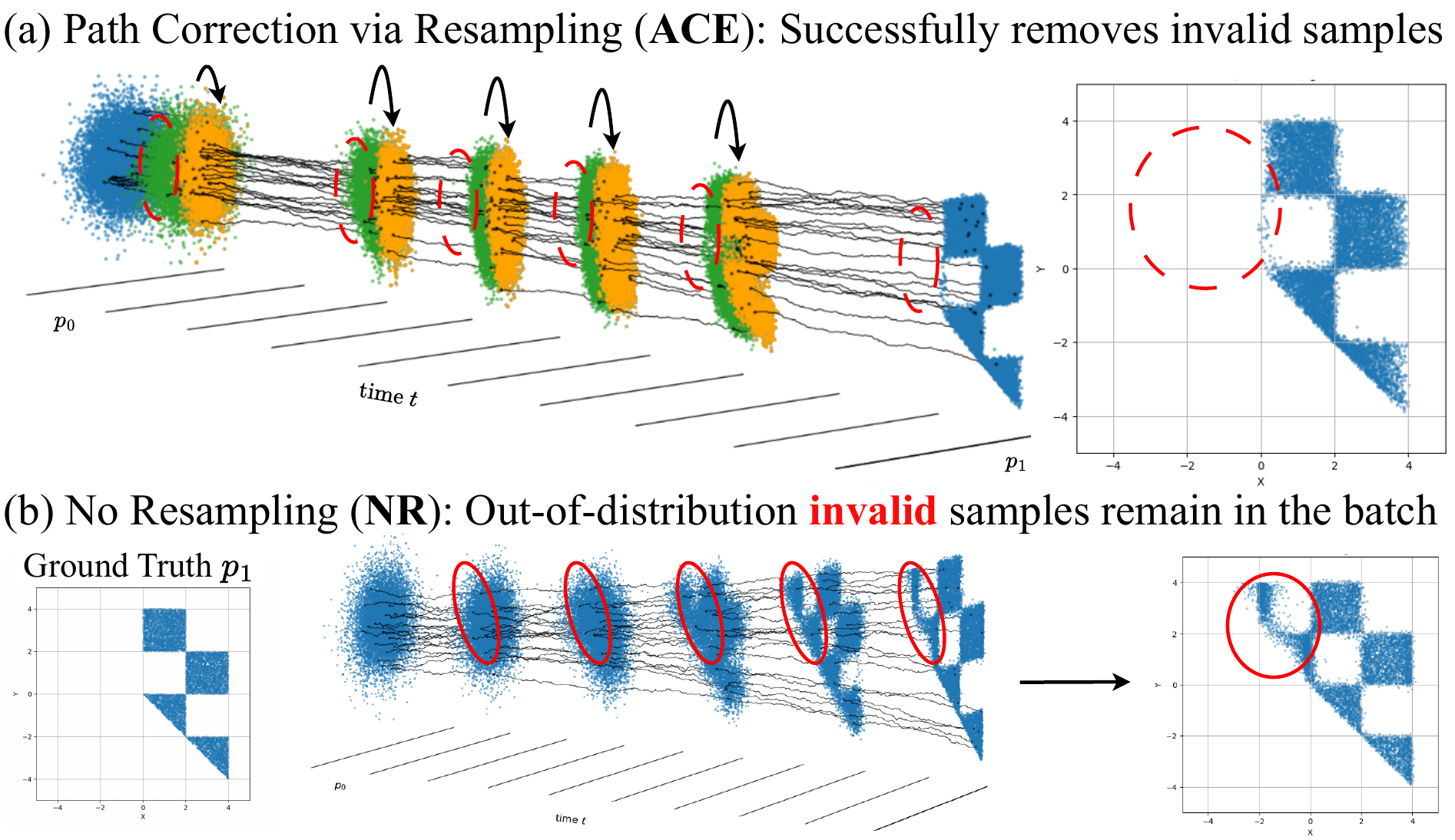}
    \vspace{-2em}
    \caption{\textbf{Visualization of the sampling trajectories.} (a) ACE appropriately assigns weights to valid samples such that at each resampling step (green-to-orange), invalid samples are discarded. (b) No resampling (NR), a common heuristic (e.g., CFG), has no corrective mechanism that removes out-of-distribution samples.}
    \label{fig:ace_trajectories}
\end{figure}
\vspace{-1em}

\textbf{ACE as a repair pipeline.} ACE should be understood as a path-repair framework rather than merely a sampler applied after validity is assumed. The original constant-exponent ratio path is first diagnosed by the Path Existence Criterion. When the criterion is violated, Theorem ~\ref{thm:adaptive_exponents_main} modifies the exponent schedule while preserving the endpoint exponents, thereby restoring a normalizable path. The weighted dynamics of Theorem ~\ref{thm:ace_sampling_main} are then applied to this corrected path. Thus, standard CFG/NR/FKC may remain numerically computable on a collapsed path, but they are no longer transporting the intended ratio-of-densities path; ACE restores the missing probability path before sampling it.

\textbf{Remark.} Standard Feynman--Kac Correctors \citep{skreta2025feynman} assume constant constraints ($\dot{\gamma}=0$), in which case the \emph{Adaptive Correction} vanishes. ACE generalizes this by explicitly tracking the component log-densities $\log \tilde{q}^{(i)}_t$ (Eq.~\ref{eq:ace_sampler_log_q}) to correct for the distributional shift induced by the changing schedule. Compare Algorithms ~\ref{alg:ACE} and ~\ref{alg:FKC}.

\subsection{Application: Flexible-Pose Scaffold Decoration}
\label{application:flexibleposescaffolddecoration}

We now demonstrate the practical necessity of our framework on \textit{flexible-pose scaffold decoration}, a task that illustrates all components of the heterogeneous ratio-of-densities setting developed so far. Here the goal is to generate 3D molecules $\mathcal{M}=(\mathcal{M}^{\mathrm{sc}},\mathcal{M}^{\mathrm{R}})$ that preserve a given scaffold bond topology $\mathcal{T}^{\mathrm{sc}}$ and stably bind to a protein pocket $\mathcal{P}$, while allowing the scaffold's 3D pose to adapt within the pocket (Figure~\ref{fig:flexibleposescaffolddecortaion}). Let $A=\{\mathcal{T}(\mathcal{M}^\text{sc})=\mathcal{T}^\text{sc}\}$ and $B=\{\mathcal{M}\leftrightarrow \mathcal{P}\}$ where $\mathcal{M}\leftrightarrow\mathcal{P}$ denotes stable binding\footnote{Defined via docking energy $U^{\mathrm{Dock}}(\mathcal{M}\leftrightarrow \mathcal{P}) < \tau^{\mathrm{Dock}}$.} and $\mathcal{T}(\cdot)$ extracts molecular topology (Appendix~\ref{subsec:heterogeneousconditioningstructureinscientifictasks}). Since the scaffold coordinates determine the scaffold topology, $A$ gives no additional information about the R-group or pocket binding once $\mathcal{M}^\text{sc}$ is known. Thus $A\perp (\mathcal{M}^\text{R},B)\mid \mathcal{M}^\text{sc}$, and Bayes' rule gives the heterogeneous ratio-of-densities:
\begin{align}
    &p(\mathcal{M}^\text{sc},\mathcal{M}^\text{R}\mid A, B)\\
    &\underset{}{\propto} p(\mathcal{M}^\text{sc},\mathcal{M}^\text{R},B\mid A) \\
    &\underset{}{\propto} p(\mathcal{M}^\text{sc}\mid A)p(\mathcal{M}^\text{R},B\mid A, \mathcal{M}^\text{sc})\\
    &\underset{}{\propto}
    p(\mathcal{M}^{\mathrm{sc}} \mid A)
    \frac{p(\mathcal{M}^\text{sc},\mathcal{M}^\text{R} \mid B)}{p(\mathcal{M}^{\mathrm{sc}})}
\end{align}
This yields Eq.~\ref{eq:flexdeco_density}.  Importantly, Eq.~\ref{eq:flexdeco_density} is a decomposition of the endpoint target distribution, not an assumption that the noisy intermediate marginals satisfy the same Bayesian identity. The intermediate function $h_t$, when normalizable, defines an algorithmic transport path $p^*_t=h_t/Z_t$. This path does not assign physical meaning to noisy molecules, but acts as a probability bridge between Gaussian noise and the decomposed endpoint target.
\begin{equation}
\small
    p(\mathcal{M}\mid \mathcal{T}^{\mathrm{sc}},\mathcal{P}) 
    \underset{\text{\tiny Bayes}}{\propto} 
    \frac{
        p(\mathcal{M}^{\mathrm{sc}} \mid \mathcal{T}(\mathcal{M}^{\mathrm{sc}})=\mathcal{T}^{\mathrm{sc}})
        \; p(\mathcal{M} \mid \mathcal{M}\leftrightarrow\mathcal{P})
    }{
        p(\mathcal{M}^{\mathrm{sc}})
    }
    \label{eq:flexdeco_density}
\end{equation}

We introduce a guidance scale $\omega\ge 1$ so that larger $\omega$ enforces stronger scaffold and pocket conditioning.  

\vspace{-1em}
\begin{equation}
    p_{\omega}(\mathcal{M})
    \propto
    p(\mathcal{M})
    \left(
        \frac{p(\mathcal{M}\mid\mathcal{T}^{\mathrm{sc}},\mathcal{P})}{p(\mathcal{M})}
    \right)^{\!\omega}
    \label{eq:cfg_ratio}
\end{equation}
This formulation (Eq.~\ref{eq:flexdeco_density}--\ref{eq:cfg_ratio}) decomposes into four factors implemented with three pretrained diffusion experts:
\begin{itemize}
    \item ($q^{(1)}$ and $q^{(2)}$) Unconditional de-novo model (DN) for $p(\mathcal{M}^{\mathrm{sc}})$ and $p(\mathcal{M})$ 
    \item ($q^{(3)}$) Topology-conditioned conformer model (CONF) for $p(\mathcal{M}^{\mathrm{sc}} \mid \mathcal{T}^{\mathrm{sc}})$ 
    \item ($q^{(4)}$) Pocket-conditioned SBDD model for $p(\mathcal{M}\mid \mathcal{P})$ 
\end{itemize}
Writing the four factors with exponents 
$\gamma_1(t) = -\omega,\; \gamma_2(t) = -(\omega-1),\; \gamma_3(t)= \omega,\; \gamma_4(t) = \omega$
shows that Eq. \ref{eq:cfg_ratio} is exactly a \emph{heterogeneous} ratio-of-densities composition of the form $p_t^* \propto \prod_i (q_t^{(i)})^{\gamma_i(t)}$ (Sec.~\ref{sec:preliminaries}).

For the \emph{heterogeneous noise schedules} of the DN, CONF, and SBDD experts used here (full survey in  Tables~\ref{tab:schedulersforconformergeneration}--~\ref{tab:schedulersforsbdd}), the constant-exponent path obtained from Eq.~\ref{eq:cfg_ratio} violates the path-existence criterion (Theorem~\ref{thm:main_compactly_supported}) for guidance scales $\omega \ge 1.1$, resulting in Marginal Path Collapse. Therefore, to raise $\omega$ while preserving path existence, we construct an \emph{ACE-corrected} exponent schedule using Theorem~\ref{thm:adaptive_exponents_main}. In practice, we apply a bump function of the form $b(t) = B_1\, t(1 - t) + B_2\,\min(t,\tfrac{t_\text{end}}{1-t_\text{end}}(1-t)),$ where $B_1, B_2$ are positive constants. This bump is applied to one of the positive-exponent terms (specifically, \( \gamma_4 \)), which is sufficient to ensure \( C(t) > 0 \) for all \( t \), thereby guaranteeing a valid probability path. Finally, by simulating the importance-weighted SDE of Theorem~\ref{thm:ace_sampling_main}, ACE provides samples consistent with the corrected path, and hence from the desired target distribution $p_\omega (\mathcal{M})$ in Eq.~\ref{eq:cfg_ratio}.
\section{Experiments}
\label{sec:experiments}

\textbf{Synthetic Checker Dataset.}
We construct a synthetic benchmark that mirrors the heterogeneous conditioning structure 
encountered in our molecular task (Eq.~\ref{eq:flexdeco_density}). In both settings, one
condition acts \emph{locally} on a subset of variables (e.g., scaffold topology $A$ affecting only 
$\mathcal{M}^{\mathrm{sc}}$), while another acts \emph{globally} on the full configuration 
(e.g., pocket binding $B$ affecting $(\mathcal{M}^{\mathrm{sc}},\mathcal{M}^{\mathrm{R}})$). 
This form of heterogeneous conditioning is pervasive in scientific problems such as scaffold decoration, linker generation, and protein–protein glue design as discussed in Appendix~\ref{subsec:heterogeneousconditioningstructureinscientifictasks}.

To create the simplest possible analog of Eq.~\ref{eq:flexdeco_density}, we let $X$ and $Y$ be 1D variables having joint prior $(X,Y)\sim p_\texttt{Checker}$ and impose two constraints: $A=\{X\ge 0\}$ (local constraint on $X$) and $B=\{X+Y\ge 0\}$ (global constraint coupling $(X,Y)$). This induces the heterogeneous factorization

\vspace{-2em}
\begin{align}
    p(X,Y\mid A,B)&\underset{\tiny \text{Bayes}}{\tiny \propto} p(X,Y,B\mid A) \nonumber \\ &\underset{\tiny \text{Bayes}}{\tiny \propto}  p(X\mid A)p(Y,B\mid A,X) \nonumber \\ &\underset{\tiny (*)}{\tiny \propto}p(X\mid A)\frac{p(X,Y\mid B)}{p(X)} 
    \label{eq:bayesdecomposition}
\end{align}
Note that this factorization is required only at the target endpoint. For $(*)$, we use the fact that once $X$ is given, $A$ has no effect on $Y,B$, which can be expressed by $A\perp Y,B\mid X$, implying $p(Y,B\mid A,X)=p(Y,B\mid X)$. Thus, exactly as in Eq.~\ref{eq:flexdeco_density}, the target distribution naturally decomposes into \emph{three heterogeneous experts}: a 1D expert for $p(X\mid A)$, a 2D expert for $p(X,Y\mid B)$, and a 1D expert for $p(X)$.

\begin{table}[h]
\centering
\caption{Distributional similarity metrics ($\downarrow$). For each metric, we report the minimum and mean $\pm$ standard deviation across 5 seeds. Best values are in \textbf{bold}. NR denotes no resampling, the common heuristic using only mixed scores. FKC refers to Feynman--Kac correctors, which fail when path existence is not satisfied.}
\vspace{-0.5em}
\label{tab:metrics_common_combination}
\resizebox{\linewidth}{!}{
\begin{tabular}{lccccccc c}
\toprule
\multirow{2}{*}{Method}
& \multirow{2}{*}{$B_1$}
& \multicolumn{2}{c}{$W_1$ ($\downarrow$)} 
& \multicolumn{2}{c}{$W_2$ ($\downarrow$)} 
& \multicolumn{2}{c}{MMD (RBF) ($\downarrow$)} \\
\cmidrule(lr){3-4} \cmidrule(lr){5-6} \cmidrule(lr){7-8}
& & Mean $\pm$ Std & Max 
& Mean $\pm$ Std & Max 
& Mean $\pm$ Std & Max \\
\midrule
NR & - & 0.89 $\pm$ 0.02 & 0.91 & 1.18 $\pm$ 0.02 & 1.20 & 0.092 $\pm$ 0.004 & 0.098 \\
FKC & - & 1.37 $\pm$ 1.09 & 2.92 & 1.59 $\pm$ 1.16 & 3.24 & 0.419 $\pm$ 0.579 & 1.380 \\
\midrule
\multirow{6}{*}{\begin{tabular}{@{}c@{}}ACE\\($B_2=1.5$)\end{tabular}} & 0 & 0.78 $\pm$ 0.15 & 1.06 & 1.00 $\pm$ 0.17 & 1.29 & 0.092 $\pm$ 0.027 & 0.131 \\
 & \cellcolor{blue!8}10 & \cellcolor{blue!8}\textbf{0.20} $\pm$ 0.04 & \cellcolor{blue!8}0.25 & \cellcolor{blue!8}\textbf{0.29} $\pm$ 0.05 & \cellcolor{blue!8}0.36 & \cellcolor{blue!8}\textbf{0.012} $\pm$ 0.003 & \cellcolor{blue!8}0.015 \\
 & 20 & 0.39 $\pm$ 0.24 & 0.82 & 0.54 $\pm$ 0.31 & 1.08 & 0.035 $\pm$ 0.025 & 0.080 \\
 & 30 & 0.31 $\pm$ 0.04 & 0.37 & 0.46 $\pm$ 0.10 & 0.57 & 0.032 $\pm$ 0.005 & 0.040 \\
 & 40 & 0.47 $\pm$ 0.13 & 0.82 & 0.65 $\pm$ 0.16 & 1.09 & 0.072 $\pm$ 0.041 & 0.184 \\
 & 50 & 0.66 $\pm$ 0.25 & 1.11 & 0.87 $\pm$ 0.29 & 1.37 & 0.153 $\pm$ 0.140 & 0.402 \\
\bottomrule
\end{tabular}
}
\vspace{-1.5em}
\end{table}

\begin{table*}[h]
\centering
\caption{Performance comparison on the CrossDocked dataset, evaluating docking affinity (Vina Score) and drug-likeness (QED, SA, and Lipinski). For OSR and drug-likeness, higher values indicate superior performance; for Vina scores, lower (more negative) values are preferred. Best results are highlighted in bold, and second-best results are underlined. NR and FKC suffer from path existence criterion (PEC) violations. Ref. denotes values for the reference molecule and Pocket-Worst indicates the average of the pocket-wise worst scores.}
\vspace{-0.5em}
\label{tab:mainresult}
\resizebox{\textwidth}{!}{%
\begin{tabular}{lcccccccccccccccccc}
\toprule
\multirow{2}{*}{Method} & \multirow{2}{*}{$\omega$} & \multirow{2}{*}{PEC} & \multirow{2}{*}{OSR($\uparrow$)} & \multicolumn{4}{c}{Vina Score ($\downarrow$)} & \multicolumn{3}{c}{QED ($\uparrow$)} & \multicolumn{3}{c}{SA ($\uparrow$)} & \multicolumn{3}{c}{Lipinski ($\uparrow$)}\\
\cmidrule(lr){5-8} \cmidrule(lr){9-11} \cmidrule(lr){12-14} \cmidrule(lr){15-17}
 &  &  & & Pocket-Worst & Top25\% & Avg & Std & Top25\% & Top50\% & Avg & Top25\% & Top50\% & Avg & Top25\% & Top50\% & Avg \\
\midrule
Ref. & - & - & - & - & -8.0 & -6.77 & 1.89 & 0.62 & 0.47 & 0.48 & 0.73 & 0.66 & 0.67 & 1.0 & 1.0 & 0.95\\
\midrule
\multirow{4}{*}{ACE} & 1.1 & \cmark & \underline{0.71} & -6.74 & -8.30 & -7.02 & 0.19 & \textbf{0.65} & \underline{0.50} & \underline{0.51} & \textbf{0.69} & 0.56 & \textbf{0.57} & 1.00 & 1.00 & \textbf{ 0.99} \\
& 1.2 & \cmark & 0.65 & \underline{-6.78} & \underline{-8.40} & \underline{-7.08} & 0.20 & 0.63 & 0.49 & 0.51 & 0.65 & 0.55 & 0.55 & 1.00 & 1.00 & \underline{0.98} \\
& 1.3 & \cmark & 0.68 & -6.64 & -8.20 & -6.91 & 0.19 & 0.62 & 0.49 & 0.50 & 0.67 & \textbf{0.58} &\textbf{0.57} & 1.00 & 1.00 & 0.97\\
& \cellcolor{blue!8} 1.4 & \cellcolor{blue!8} \cmark & \cellcolor{blue!8} \textbf{0.75} & \cellcolor{blue!8} \textbf{-6.84} & \cellcolor{blue!8} \textbf{-8.70} & \cellcolor{blue!8} \textbf{-7.10} & \cellcolor{blue!8} 0.19 & \cellcolor{blue!8} \underline{0.64} & \cellcolor{blue!8} \textbf{ 0.54 }&  \cellcolor{blue!8} \textbf{0.53} & \cellcolor{blue!8} 0.65 & \cellcolor{blue!8} \underline{0.57} & \cellcolor{blue!8} 0.56 & \cellcolor{blue!8} 1.00 & \cellcolor{blue!8} 1.00 & \cellcolor{blue!8} \underline{0.98}\\
\midrule
\multirow{4}{*}{FKC} & 1.1 & \xmark & 0.52 & -5.99 & -7.90 & -6.54 & 0.37 & 0.62 & 0.45 & 0.47 & \underline{0.68} & 0.55 & \textbf{0.57} & 1.00 & 1.00 & \underline{0.98} \\
& 1.2 & \xmark & 0.43 & -5.56 & -7.40 & -6.21 & 0.45 & 0.61 & 0.47 & 0.47 & 0.64 & 0.54 & 0.55 & 1.00 & 1.00 & 0.97\\
 & 1.3 & \xmark & 0.49 & -5.90 & -7.90 & -6.45 & 0.40 & 0.54 & 0.42 & 0.41 & 0.67 & 0.56 & \textbf{0.57} & 1.00 & 1.00 & 0.97\\
 & 1.4 & \xmark & 0.40 & -5.53 & -7.50 & -6.24 & 0.46 & 0.56 & 0.46 & 0.44 & 0.65 & 0.55 & 0.56 & 1.00 & 1.00 & \underline{0.98} \\
\midrule
\multirow{4}{*}{NR} & 1.1 & \xmark & 0.46 & -4.98 & -7.70 & -6.32 & 0.82 & 0.54 & 0.42 & 0.42 & 0.62 & 0.52 & 0.53 & 1.00 & 1.00 & 0.97\\
 & 1.2 & \xmark & 0.46 & -5.13 & -7.60 & -6.33 & 0.77 & 0.52 & 0.42 & 0.41 & 0.64 & 0.54 & 0.54 & 1.00 & 1.00 & 0.96 \\
 & 1.3 & \xmark & 0.42 & -5.19 & -7.50 & -6.23 & 0.73 & 0.56 & 0.45 & 0.44 & 0.62 & 0.54 & 0.54 & 1.00 & 1.00 & \underline{0.98} \\
& 1.4 & \xmark & 0.47 & -5.10 & -7.70 & -6.35 & 0.82 & 0.54 & 0.39 & 0.40 & 0.64 & 0.55 & 0.54 & 1.00 & 1.00 & 0.97 \\
\bottomrule
\end{tabular}%
}
\end{table*}

\begin{table*}[h]
\centering
\caption{Performance comparison on the CrossDocked dataset against fixed-pose scaffold decoration baselines, evaluating docking affinity (Vina Score) and drug-likeness (QED, SA, and Lipinski). For OSR and drug-likeness, higher values indicate superior performance; for Vina scores, lower (more negative) values are preferred. Best results are highlighted in bold, and second-best results are underlined. Asterisks (*) denote models that require a reference ligand pose. Pocket-Worst indicates the average of the pocket-wise worst scores.}
\vspace{-0.5em}
\label{tab:secondresult}
\resizebox{\textwidth}{!}{%
\begin{tabular}{lccccccccccccccc}
\toprule
\multirow{2}{*}{Method} & \multirow{2}{*}{$\omega$} & \multirow{2}{*}{OSR($\uparrow$)} & \multicolumn{4}{c}{Vina Score ($\downarrow$)} & \multicolumn{3}{c}{QED ($\uparrow$)} & \multicolumn{3}{c}{SA ($\uparrow$)} & \multicolumn{3}{c}{Lipinski ($\uparrow$)}\\
\cmidrule(lr){4-7} \cmidrule(lr){8-10} \cmidrule(lr){11-13} \cmidrule(lr){14-16}
 & & & Pocket-Worst & Top25\% & Avg & Std & Top25\% & Top50\% & Avg & Top25\% & Top50\% & Avg & Top25\% & Top50\% & Avg \\
\midrule
Reference & - & - & - & -8.0 & -6.77 & 1.89 & 0.62 & 0.47 & 0.48 & 0.73 & 0.66 & 0.67 & 1.0 & 1.0 & 0.95\\
\midrule
AutoFragDiff* & - & 0.36 & -5.14 & -7.60 & -6.01 & 0.61 & \underline{0.70} & \textbf{0.59} & \textbf{0.57} & \textbf{0.80} & \textbf{0.70} & \textbf{0.71} & 1.00 & 1.00 & 0.97  \\
Delete* & - & 0.47 & -3.78 & -8.30 & -5.07 & 1.26 & \textbf{0.71} & \underline{0.55} & \underline{0.55} & \underline{0.75} & \underline{0.65} & \underline{0.66} & 1.00 & 1.00 & \underline{0.98} \\
\midrule
\multirow{2}{*}{ACE} & 1.1 & \underline{0.71} & \underline{-6.74} & -8.30 & \underline{-7.02} & 0.19 & 0.65 & 0.50 & 0.51 & 0.69 & 0.56 & 0.57 & 1.00 & 1.00 & \textbf{0.99} \\
 & \cellcolor{blue!8} 1.4 &  \cellcolor{blue!8} \textbf{0.75} & \cellcolor{blue!8} \textbf{-6.84} & \cellcolor{blue!8} \textbf{-8.70} & \cellcolor{blue!8} \textbf{-7.10} & \cellcolor{blue!8} 0.19 & \cellcolor{blue!8} 0.64 & \cellcolor{blue!8} 0.54 & \cellcolor{blue!8} 0.53 & \cellcolor{blue!8} 0.65 & \cellcolor{blue!8} 0.57 & \cellcolor{blue!8} 0.56 & \cellcolor{blue!8} 1.00 & \cellcolor{blue!8} 1.00 & \cellcolor{blue!8} \underline{0.98}  \\
\multirow{2}{*}{ACE-lite} & 1.1 & 0.67 & -6.42 & -8.30 & -6.99 & 0.42 & 0.61 & 0.48 & 0.50 & 0.67 & 0.55 & 0.55 & 1.00 & 1.00 & \underline{0.98} \\
 & 1.4 & 0.66 & -6.30 & \underline{-8.50} & -7.00 & 0.52 & 0.62 & 0.50 & 0.50 & 0.66 & 0.55 & 0.56 & 1.00 & 1.00 & 0.97 \\
\bottomrule
\end{tabular}%
}
\vspace{-0.5em}
\end{table*}

To test the theoretical phenomenon identified by our path-existence criterion, for Table~\ref{tab:metrics_common_combination}, we deliberately used experts pretrained under a set of noise schedules $\alpha^{(1)}_t=\cos(\tfrac{\pi}{2}t), \;\alpha^{(2)}_t=\texttt{DDPM},\;\alpha^{(3)}_t=1-t$ known to induce \emph{Marginal Path Collapse} (See Figure~\ref{fig:common_noise_schedules}). {Importantly, the collapse behavior we observe here is \emph{not} specific to this choice: Figures~\ref{fig:grid_plot_common_schedules}--\ref{fig:cos_cos_linear} evaluate other combinations of \emph{widely used} schedules (DDPM, sigmoid, linear, cosine, polynomial) and show that the same failure patterns arise.} Mathematical definitions of $p_\texttt{Checker}$ and schedulers as well as hyperparameter studies are in Appendix \ref{sec:setup_app}.

\textbf{Flexible-Pose Scaffold Decoration.} We evaluate ACE on a realistic scientific task requiring heterogeneous DN/CONF/SBDD composition. Dataset details, implementation, and metrics appear in Appendix~\ref{app:expsetupformolecule}. Briefly, we combine pretrained models EDM~\citep{edm} as DN, GeoDiff~\citep{geodiff} as CONF, and DiffSBDD~\citep{schneuing2024diffsbdd} as SBDD as defined in Section \ref{application:flexibleposescaffolddecoration}. We evaluate on the test set of CrossDocked2020 \citep{francoeur2020crossdocked}. As diffusion-steering baselines we adopt NR and FKC \citep{skreta2025feynman}, and as task-specific scaffold-decoration models we include {Delete \citep{zhang2023delete}, and AutoFragDiff \citep{ghorbani2024autofragdiff}. We also implement a practical variant of ACE (ACE-lite), omitting the resampling step to reduce computational overhead.}

\vspace{0.1em}
\begin{figure}[h]
    \centering
    \includegraphics[width=1.0\linewidth]{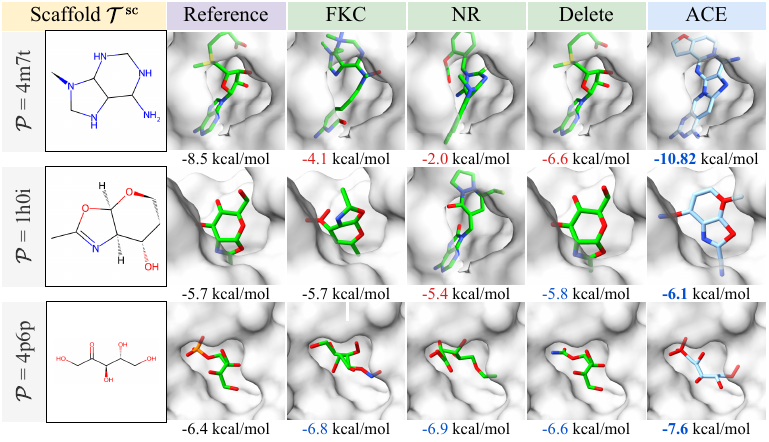}
    \caption{\textbf{ACE enables high-guidance steering for Flexible-Pose Scaffold Decoration}. ACE generates highly dockable molecules by accurately modeling the ratio-of-density path, whereas FKC and NR produce suboptimal results due to ill-defined probability paths. Our flexible-pose formulation further relaxes the fixed-pose constraint of Delete~\cite{zhang2023delete}, allowing exploration of a larger search space.}
    \label{fig:flexibleposescaffolddecortaion}
\end{figure}

\begin{table}[htbp]
\centering
\caption{Computational cost reported in terms of VRAM usage (GB) and runtime (min/molecule). ACE-lite significantly reduces the runtime overhead while maintaining performance.}
\label{tab:cost}
\footnotesize 
\setlength{\tabcolsep}{2.5pt} 
\resizebox{\linewidth}{!}{
\begin{tabular}{lccccccc}
\toprule
{Method} & {ACE} & {ACE-lite} & {Autofragdiff} & {Delete} & {FKC} & {NR} \\
\midrule
{VRAM} & 1.57 & 1.54 & 2.13 & 1.46 & 1.53 & 1.54 \\
{Time} & 0.34 & 0.19 & 0.29 & 0.31 & 0.19 & 0.19 \\
\bottomrule
\end{tabular}
}
\vspace{-1em}
\end{table}

\subsection{Results}

We present the main quantitative results in Tables~\ref{tab:metrics_common_combination}--\ref{tab:secondresult}. Across both synthetic and molecular settings, ACE achieves
substantial improvements over NR and FKC, reflecting practical gains of correcting heterogeneous ratio-of-density paths to enforce the path existence. We clarify that ACE differs from FKC only by its adaptive exponent correction\footnote{$B_1\in[0,50]$, $B_2{=}1.5$ for the 2D experiment and \(B_1=30, B_2 =0.037, 0.136, 0.236, 0.336 \) for \(\omega = 1.1, 1.2, 1.3, 1.4\) for the scaffold decoration experiment} and NR is FKC without resampling; all other components and hyperparameters are shared across methods.

\section{Discussion}
\label{section:discussion}
\label{sec:discussion}

\textbf{ACE Prevents Collapse at High Guidance Scales.}
High guidance scales ($\omega > 1$) are crucial for generating high-affinity molecules but are also where the risk of path collapse is highest. Our results make this trade-off explicit. As shown in Tables~\ref{tab:mainresult} and~\ref{tab:secondresult}, ACE prevents path collapse across all tested configurations, enabling performance to scale with guidance weight. At $\omega{=}1.4$, ACE reaches an average Vina score of $-7.10$ kcal/mol. In contrast, the FKC baseline suffers from catastrophic collapse: at $\omega{=}1.4$, its success rate drops (OSR 0.4), docking scores plateau ($-6.24$), and drug-likeness degrades (QED $0.44$ vs. $0.53$ for ACE).

\textbf{Prevalence of Marginal Path Collapse.}
Collapse is not an edge case; it is the default behavior for heterogeneous composition. In Appendix~\ref{subsec:prevalenceandconsequencesofmarginalpathcollapse}, we evaluated 125 annealed compositions of standard noise schedules (DDPM \citep{ho2020denoising}, cosine \citep{nichol2021improved}, sigmoid \citep{geodiff}, linear \citep{lipman2023flow}, and polynomial \citep{edm}). Excluding trivial homogeneous cases, the collapse rate for heterogeneous pairs increases sharply with guidance scale: from 41\% ($w{=}1.0$) to 80\% ($w{=}15$) (Table~\ref{tab:collapse_freq} and Figure~\ref{fig:criterion_realistic_combinations}). Furthermore, our analysis suggests that the mere \emph{presence} of collapse is more critical than its duration; even short intervals of non-normalizability ($<10\%$ of trajectory) are sufficient to destabilize importance weights in FKC, whereas ACE reliably recovers the target distribution regardless of collapse duration (Figures~\ref{fig:grid_plot_common_schedules}--\ref{fig:cos_cos_linear}). The failure trends are consistent: NR suffers from inherent approximation errors regardless of path existence, whereas FKC performance degrades specifically because criterion violations destabilize the importance weights. In contrast, ACE reliably restores valid paths and consistently recovers the target distribution across all tested durations. 

\textbf{Finite scores do not imply valid steering.} Marginal Path Collapse is a global normalizability failure, not necessarily a local numerical failure. The mixed score field may remain finite, so an SDE/ODE solver can still run. However, once $Z_t=\infty$ for some $t$, this field is no longer the score of the intended density $p^*_t=h_t/Z_t$. The solver therefore transports an unintended path $p'_t$, and any later agreement with the target would require accidental off-path recovery. Example~\ref{example:finite-score} makes this explicit in a Gaussian example where the mixed-score dynamics remain well posed but reach the wrong terminal distribution.

\textbf{Sensitivity Analysis of Bump Parameters.}
Since the path-existence criterion $C(t)$ depends solely on analytical schedules, the bump parameters $B_1, B_2$ can be pre-screened without expensive model evaluation. Empirically, we identify a stable region ($B_1\in[10,30]$, $B_2\in[1.0,2.0]$, see Tables~\ref{tab:sensitivity_w1}--\ref{tab:sensitivity_mmd}) that consistently yields strong performance. We observe a fundamental trade-off: while increasing $B_1, B_2$ guarantees path existence, these parameters also act as scalar multipliers on the score and velocity fields, linearly amplifying inherent network approximation errors. Intuitively, excessive correction steers the sampling path unnecessarily far from the optimal transport trajectory. We therefore recommend initializing with conservative values (e.g., $B_1=20, B_2=2$) and increasing them only if the criterion remains unsatisfied.

\textbf{Superiority over Specialized Baselines \& Efficiency Trade-offs.}
ACE demonstrates that composing generalist experts can outperform specialized, monolithic models. As detailed in Table~\ref{tab:secondresult}, ACE achieves a peak OSR (Optimization Success Rate) of \textbf{0.75}, significantly outperforming specialized fixed-pose baselines such as AutoFragDiff and Delete, which struggle with "hard" pose constraints (OSR ranging 0.22--0.47). ACE’s flexible-pose framework navigates the chemical space more effectively to identify higher-quality candidates. To mitigate the computational overhead of multi-expert evaluation, we proposed \emph{ACE-lite}, which skips time-intensive resampling. This variant reduces inference time by $\sim44\%$ (0.34 to 0.19 min/molecule, see Table ~\ref{tab:cost}) while maintaining competitive success rates, though it exhibits higher variance in docking scores due to the reintroduction of bias. Although this entails a trade-off in worst-case performance due to non-filtering of invalid paths, ACE-lite maintains competitive OSR over baselines. 

\textbf{Existing Time-varying Exponent Schedules.} To disentangle our bump function from general time-varying exponents in prior work, we tested empirical dynamic schedules \citep{wang2024analysis} on scaffold decoration (3 seeds) and 2D checker (5 seeds): linearly increasing ($w\alpha t$), decreasing ($w\alpha(1-t)$), $\Lambda$-shaped ($w\alpha(0.5-|t-0.5|)$), and V-shaped ($w\alpha|t-0.5|$) for $\alpha\in\{0.5,1,2,4,8\}$ ($w=1.4$ for scaffold, $w=1.0$ for 2D). For scaffold decoration, we also tested quadratic bumps $b(t)=B_1 t(1-t)$ on $\gamma_4$ ($B_1\in\{10,20,30\}$), scaled to deliberately fail the Path Existence Criterion. Across all samplers (ACE/FKC/NR), these heuristic exponent schedules failed to prevent path collapse, degrading performance versus ours. These results confirm that the gains come from validity-preserving exponent design rather than time variation alone; full results are in Appendix~\ref{app:time-varying-baseline}.

\textbf{Schedule alignment experiment.}
Using time-reparameterization~\citep{lai2025principlesdiffusionmodels}, schedule alignment can restore marginal validity but remains suboptimal in terms of sample quality, as it disregards model-specific allocation of exploration and refinement across time. In the scaffold decoration experiment described in Appendix~\ref{sec:timereparameterizationisnotaneffectivesolution}, FKC with aligned schedules consistently underperforms ACE due to the non-uniform time evolution induced by reparameterization, which skips task-critical regions of the trajectory and destabilizes the composed generative path.

\textbf{More Use Cases: Fragment-Based Drug Design.} We also demonstrate ACE on a harder drug design task: fragment linking (Appendix~\ref{app:expsetupforfragmentlinking}, Example~\ref{example:fragment-baseddrugdesign}). Generating a molecule $\mathcal{M}$ containing two fragment topologies $\mathcal{T}_1,\mathcal{T}_2$ binding to pocket $\mathcal{P}$ is given by:$$p_w(\mathcal{M})\propto p(\mathcal{M})\left(\frac{p(\mathcal{M}|\mathcal{T}_1,\mathcal{T}_2,\mathcal{P})}{p(\mathcal{M})}\right)^w$$ACE combines two conformer/de novo experts with an SBDD model (see Eq.~\ref{eq:fusion_fbdd_formulation} for formal derivations) and applies the bump function to avoid path collapse. We evaluated on non-overlapping fragment pairs sampled from CrossDocked test set: ACE achieved strong performance (OSR $\ge$ 0.45), remained highly competitive with FFLOM~\citep{jin2023fflom}, a specialized fragment-linking model, and outperformed FKC and NR, which degraded due to path collapse. Full results are in Table~\ref{tab:fragment-linking}.

\textbf{Additional Gains Beyond Collapse Repair.} We further evaluate ACE in homogeneous settings on a compositional image generation benchmark in Appendix~\ref{sec:image}. Even where path existence holds everywhere, applying time-varying exponents via ACE sharpens intermediate distributions (Proposition~\ref{prop:concentration}), yielding quantitative gains ($+9.57\%p$ on COCO-MIG) over constant-exponent steering (NR, FKC).

\textbf{Scope.} ACE is most useful when a desired target can be expressed as a ratio/product of existing endpoint densities and the experts can be embedded into a common sampling space. It does not repair inaccurate or semantically misaligned experts, nor does it remove the need for compatible output representations. Within this scope, however, ACE addresses a common bottleneck in modular scientific generation: different experts often use different schedules and supports, making naive constant-exponent composition invalid.

\textbf{Limitations.} ACE scales linearly with expert count; however, ACE-lite removes gradient overhead, matching standard guidance costs. ACE is a steering correction, not a model correction; if experts are misaligned, ACE strictly enforces a valid path between suboptimal distributions. Our criterion covers Gaussian-based interpolants on compactly supported targets; non-Gaussian priors are future work.

\vspace{0.7em}
\section{Related Work}
\label{sec:relatedworkmain}
We provide a comprehensive review in Appendix~\ref{sec:related}.

\textbf{Inference-time steering.}
Diffusion samplers are often steered via ratio-of-densities compositions, most notably classifier-free guidance (CFG) \citep{classifierguidance,chung2025cfg}. However, standard steering is typically deployed without verifying that the \emph{induced intermediate densities} remain normalizable under the chosen schedules. Feynman--Kac correctors (FKC) \citep{stoltz2010free,skreta2025feynman,mark2025feynman} provide a principled particle interpretation under constant and positive exponent settings, but do not address heterogeneous experts with negative exponents, where we observe \emph{Marginal Path Collapse}.

\textbf{Time-dependent guidance.}
Time-varying guidance schedules have been explored primarily in \emph{CFG} (homogeneous) for image generation \citep{wang2024analysis,anonymous2026stagewise,zhang2025howmuchtoguide,koulischer2025dynamic,tuomas2024applying}, largely as empirical rules to improve sample quality in regimes where path existence is typically not a concern. In contrast, ACE targets \emph{general heterogeneous ratio-of-densities composition} beyond CFG, and derives time-dependent exponents from theory: (i) a \emph{condition for path existence} (Theorem~\ref{thm:main_compactly_supported}) and (ii) a \emph{control of quantile radius} (Proposition~\ref{prop:concentration}).

\vspace{0.7em}

\vspace{0.1em}
\section{Conclusion}
\label{sec:conclusion}


We identified \emph{Marginal Path Collapse} as a fundamental failure mode in ratio-of-densities diffusion steering and introduced ACE, a framework that provably resolves it. Our Path Existence Criterion serves as a rigorous diagnostic tool, while ACE generalizes the Feynman--Kac steering framework to support dynamic exponent scheduling, guaranteeing a valid probability path from noise to data. Empirically, ACE eliminates collapse, reduces distributional error by over $4\times$, and enables stable scaffold-based molecular design where existing methods fail. By transforming ratio-of-densities steering from an unstable heuristic into a theoretically grounded methodology, our work establishes the necessary foundations for robust inference-time control. We hope these contributions pave the way for reliable composition of heterogeneous generative models in both creative applications and high-stakes scientific domains. 
\clearpage

\section*{Impact Statement} This work introduces a framework for stabilizing heterogeneous model composition, with immediate applications in AI for Science and Structure-Based Drug Design (SBDD). By resolving \emph{Marginal Path Collapse}, ACE enables the reliable use of generative experts for multi-constraint tasks, potentially accelerating the identification of therapeutic candidates. Additionally, our \emph{Path Existence Criterion} improves energy efficiency by preventing computational waste on generative processes that are mathematically destined to fail. Regarding ethics, we acknowledge the dual-use risks inherent in generative chemistry (e.g., designing harmful compounds). However, ACE is a steering algorithm, not a standalone model; it operates strictly within the manifold defined by the pretrained experts. Therefore, its safety profile remains tied to the alignment of the underlying base models. We believe that ensuring mathematical reliability is a prerequisite for building safe and beneficial AI tools.

\section*{Acknowledgments}
This work was partly supported by the KHIDI grant funded by the Korean government (MOHW) [No.RS-2025-02307233], the NRF or IITP grants funded by the Korean government (MSIT) [No.RS-2026-25472075, No.RS-2026-25483206, No.RS-2025-02305581, No.RS-2025-25442338 (AI Star Fellowship-SNU), No.RS-2021II211343 (SNU AI), RS-2019-II190075, and No. RS-2023-00209060], the ITIP grant funded by the Korean government (MOTIR) [No.RS-2026-25549946], the Advanced GPU Utilization and AI Computing Infrastructure Enhancement User Support Programs funded by the Korean government (MSIT) [No.05-26-04-0094], the Research grant from SNU, and the Strategic Hub grant for International Research Collaboration of SNU. Kyungsu Kim is affiliated with the School of Transdisciplinary Innovations, Department of Biomedical Science, Interdisciplinary Program in Artificial Intelligence (IPAI), Medical Research Center, and AI Institute at SNU.

\bibliography{main}
\bibliographystyle{icml2026}

\appendix
\clearpage
\onecolumn

\numberwithin{equation}{section}   
\renewcommand{\theequation}{\thesection.\arabic{equation}}

\renewcommand{\thefigure}{\thesection.\arabic{figure}}
\setcounter{figure}{0}

\renewcommand{\thetable}{\thesection.\arabic{table}}
\setcounter{table}{0}

{\Large \centering \bfseries Appendix \par}
\addcontentsline{toc}{section}{Appendix} 

\startcontents[appendix]
\printcontents[appendix]{}{1}{}


\section*{Use of Large Language Models} Large language models (LLMs) were used to assist with writing, including grammar, style, and clarity improvements, and for organizational feedback on early drafts. LLMs were not used for generating original scientific content, designing experiments, or analyzing results. All technical contributions, experiments, and conclusions are the work of the authors.

\section*{Reproducibility Statement}
To ensure the reproducibility of our results, we have provided the complete source code for ACE, including the implementation of the time-varying exponent scheduler and the Path Existence Criterion, at \url{https://github.com/ziseoklee/ACE}. All experiments were conducted using fixed random seeds, which are specified in the configuration files. We have detailed the exact hyperparameters for the baseline methods (CFG, FKC) and our method (bump parameters, guidance scales) in Appendix~\ref{sec:setup_app}. For the molecular benchmarks, we utilized the standard CrossDocked-2020 splits to ensure fair comparison with prior work.

\section*{Compute Resources}
All experiments were performed on a computing cluster equipped with NVIDIA RTX 6000 Ada Generation GPUs and A100 GPUs.

\section{Theoretical Foundations of ACE for Heterogeneous Ratio-of-Densities}
\label{app:theoreticalfoundations}

\subsection{Main Theorems and Proofs} 

Below we provide the statement and proofs of the main theorem.

\begin{definition}[Ratio-of-Density Probability Path]
\label{prop:existence_of_heterogeneous_product_density}
Fix $t\in[0,1]$ and let $\mu$ be a $\sigma$-finite reference measure on $\mathbb{R}^d$. For $i=1,\dots, n$, let $q^{(i)}_t:\mathbb{R}^{d_i}\to [0,\infty)$ be measurable densities and $\tilde{q}^{(i)}_t = q^{(i)}_t \circ \pi_i$ be their canonical lifts to $\R^d$. 
We define the \textbf{ratio-of-densities path} as the family of functions $\{p^*_t\}_{t \in [0,1]}$ given by:
\begin{equation}
    p^*_t(x) := \frac{h_t(x)}{Z_t}, \quad \text{where} \quad h_t(x) = \prod_{i=1}^{n} \left(\tilde{q}^{(i)}_t(x)\right)^{\gamma_i},
\end{equation}
provided the following existence conditions are satisfied:
\begin{itemize}
    \item \textbf{Support Inclusion:} Let $I_+ = \{i:\gamma_i>0\}$ and $I_- = \{i:\gamma_i<0\}$. Then $\operatorname{supp}(\prod_{i\in I_+}\tilde{q}^{(i)}_t) \subseteq \operatorname{supp}(\prod_{i\in I_-}\tilde{q}^{(i)}_t) =: \Omega_t$, preventing singularities on the boundary $\partial \Omega_t$.
    \item \textbf{Integrability:} The unnormalized product $h_t$ belongs to $L^1(\mu)$ with $0 < Z_t < \infty$.
\end{itemize}
\end{definition}

\begin{remark}
Measurability of $h_t$ is guaranteed as the product of compositions of measurable functions with continuous projections. The integrability condition ensures $p^*_t$ is a valid Radon-Nikodym derivative $d\mathbb{P}^*_t/d\mu$.
\end{remark}

\begin{proposition}[Expectation identity under Feynman--Kac dynamics]\label{prop:unbiased}
Let $(p_t)_{t\in[0,T]}$ be a family of probability densities on $\R^d$ evolving according to the weighted Feynman--Kac PDE
\begin{align}
    \frac{\partial}{\partial t} p_t(x) 
    = - \nabla \!\cdot\! \bigl(p_t(x)\, v_t(x)\bigr) 
       + \frac{\sigma_t^2}{2}\, \Delta p_t(x) + p_t(x)\!\left( g_t(x) - \int g_t(y)\,p_t(y)\,dy \right), \label{eq:fk-pde-appendix}
\end{align}
where $v_t:\R^d\to\R^d$ is a drift field, $\sigma_t>0$ a scalar diffusion coefficient, and $g_t:\R^d \to\R$ a measurable weight function.  

Consider the diffusion process
\[
    dx_t = v_t(x_t)\,dt + \sigma_t\, dW_t,
    \qquad x_0 \sim p_0,
\]
driven by a standard Brownian motion $(W_t)_{t\ge0}$.  
Define the weight process
\[
    w_T \;=\; \exp\!\left(\int_0^T g_s(x_s)\,ds\right).
\]

Then for any bounded test function $\phi:\R^d\to\R$,
\begin{align}
    \mathbb E_{p_T}[\phi(x_T)] 
    \;=\; \frac{1}{Z_T}\, \mathbb E\!\left[\, w_T \, \phi(x_T) \,\right], \label{eq:unbiased-result}
\end{align}
where the expectation on the right is with respect to the law of the process $(x_t)_{t\in[0,T]}$, and $Z_T>0$ is a normalizing constant for $w_T$ independent of $x_T$.
\end{proposition}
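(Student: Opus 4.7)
The plan is to interpret the right-hand side of \eqref{eq:unbiased-result} as (the $\phi$-moment of) an unnormalized density $\tilde p_T$ and to show that $\tilde p_T/Z_T$ solves the same PDE \eqref{eq:fk-pde-appendix} as $p_T$, so that uniqueness forces equality and yields the claimed identity.

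Concretely, define the unnormalized measure $\tilde p_t$ on $\R^d$ by its action on bounded test functions,
\begin{equation*}
    \int \phi(x)\,\tilde p_t(x)\,dx \;:=\; \mathbb E\!\left[\, w_t\,\phi(x_t)\,\right],
    \qquad w_t = \exp\!\Big(\int_0^t g_s(x_s)\,ds\Big),
\end{equation*}
with $\tilde p_0 = p_0$ and $Z_t := \int \tilde p_t(x)\,dx$. First, I would apply Itô's formula to the product $w_t\,\phi(x_t)$: since $dw_t = w_t g_t(x_t)\,dt$ and $\phi(x_t)$ has generator $L_t\phi := v_t\!\cdot\!\nabla\phi + \tfrac{\sigma_t^2}{2}\Delta\phi$, one gets
\begin{equation*}
    \frac{d}{dt}\,\mathbb E[w_t\,\phi(x_t)]
    \;=\; \mathbb E\!\left[\, w_t\bigl(L_t\phi + g_t\,\phi\bigr)(x_t)\,\right].
\end{equation*}
Rewriting as integrals against $\tilde p_t$ and integrating by parts (the martingale part vanishes in expectation under standard integrability assumptions on $v_t,\sigma_t,g_t$), this yields the unnormalized Fokker--Planck--Feynman--Kac equation
\begin{equation*}
    \partial_t \tilde p_t \;=\; -\nabla\!\cdot\!\bigl(\tilde p_t\, v_t\bigr) + \tfrac{\sigma_t^2}{2}\Delta \tilde p_t + g_t\,\tilde p_t,
\end{equation*}
in the weak (distributional) sense.

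Second, I would normalize. Writing $\hat p_t := \tilde p_t/Z_t$ and noting $\dot Z_t = \int g_t\tilde p_t\,dx = Z_t \int g_t\,\hat p_t\,dx$, a direct quotient-rule computation gives
\begin{equation*}
    \partial_t \hat p_t
    \;=\; -\nabla\!\cdot\!(\hat p_t\, v_t) + \tfrac{\sigma_t^2}{2}\Delta\hat p_t
      + \hat p_t\!\left(g_t - \int g_t(y)\,\hat p_t(y)\,dy\right),
\end{equation*}
which is precisely \eqref{eq:fk-pde-appendix}. Since $\hat p_0 = p_0$ and both $\hat p_t$ and $p_t$ solve the same nonlinear PDE with the same initial data, uniqueness of weak solutions gives $\hat p_t = p_t$ for all $t\in[0,T]$. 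Applying this at $t=T$ and testing against $\phi$ gives $\mathbb E_{p_T}[\phi(x_T)] = Z_T^{-1}\int \phi\,\tilde p_T\,dx = Z_T^{-1}\mathbb E[w_T\phi(x_T)]$, which is exactly \eqref{eq:unbiased-result}.

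The main obstacle I expect is not the Itô/integration-by-parts computation, which is standard, but the \emph{uniqueness} step: one needs regularity or growth conditions on $v_t$, $\sigma_t$, and $g_t$ strong enough to guarantee a unique weak solution of the nonlinear PDE \eqref{eq:fk-pde-appendix}, and integrability conditions (e.g.\ $\mathbb E[w_T^2]<\infty$ and $\phi$ bounded) to justify discarding the stochastic-integral martingale term. A secondary subtlety is handling the nonlinear mean-field correction $\int g_t p_t\,dy$; if desired, one can sidestep uniqueness entirely by instead computing $\partial_t \int \phi\,p_t\,dx$ and $\partial_t \int \phi\,\hat p_t\,dx$ directly from the two PDE formulations and checking they agree for all bounded $\phi$, which gives \eqref{eq:unbiased-result} from the equality of the unnormalized moments without invoking a uniqueness theorem.
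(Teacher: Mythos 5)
The paper does not give its own proof of Proposition~\ref{prop:unbiased}; it simply cites Proposition A.1 of \cite{skreta2025feynman} and Section 4 of \cite{stoltz2010free}. Your argument is the standard Feynman--Kac derivation that those references use: define the unnormalized measure $\tilde p_t$ by $\langle\phi,\tilde p_t\rangle=\mathbb E[w_t\phi(x_t)]$, apply It\^o to $w_t\phi(x_t)$ (noting $dw_t=w_tg_t\,dt$ is of finite variation so there is no cross-variation term), take expectations to kill the martingale, integrate by parts to get the linear Fokker--Planck--Feynman--Kac equation $\partial_t\tilde p_t=-\nabla\cdot(\tilde p_tv_t)+\tfrac{\sigma_t^2}{2}\Delta\tilde p_t+g_t\tilde p_t$, normalize (with $\dot Z_t=\int g_t\tilde p_t$), and match \eqref{eq:fk-pde-appendix}. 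All of this is correct, and you are right that the one genuine analytic burden is uniqueness of weak solutions (plus integrability so that the local martingale $\int w_s\sigma_s\nabla\phi(x_s)\cdot dW_s$ is a true martingale).

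One point to fix: the proposed ``sidestep'' of uniqueness does not actually work as stated. Computing $\partial_t\int\phi\,p_t$ and $\partial_t\int\phi\,\hat p_t$ from the nonlinear PDE gives, for each $\phi$, an ODE whose right-hand side involves the \emph{full} measure (through $\int g_t\,p_t$ and $\int g_t\phi\,p_t$), not just the single moment $\int\phi\,p_t$. The moment equations are therefore not closed, and showing they ``have the same form'' does not by itself yield $\int\phi\,p_t=\int\phi\,\hat p_t$; closing that loop is precisely a uniqueness argument in disguise. A cleaner way to reduce the burden is the standard one: given any solution $p_t$ of the nonlinear PDE \eqref{eq:fk-pde-appendix}, set $Q_t:=p_t\exp\bigl(\int_0^t\!\int g_s(y)p_s(y)\,dy\,ds\bigr)$; then $Q_t$ solves the \emph{linear} FKP equation with $Q_0=p_0$, so uniqueness of the linear equation (a far more standard hypothesis) forces $Q_t=\tilde p_t$ and hence $p_t=\tilde p_t/Z_t=\hat p_t$. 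With that correction your proof matches the cited sources and is complete modulo the usual regularity assumptions.
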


\begin{proof}
    The proof can be found in \textbf{Proposition A.1.} of \cite{skreta2025feynman} or Section 4 of \cite{stoltz2010free}.
\end{proof}

\begin{theorem} [Adaptive Path Correction with time-dependent Exponents (ACE)]
For each $i\in\{1,\dots,n\}$, let $$\gamma_i(t):\R\to\R$$be differentiable with respect to $t$ and $\{q^{(i)}_t\}_{t\in[0,1]}$ denote probability densities in $\mathbb{R}^{d_i}$ with respect to the Lebesgue measure. Suppose each probability path $\{q^{(i)}_t\}_{t\in[0,1]}$ is associated with the stochastic differential equation (SDE):
\begin{align}
    X^{(i)}_0 &\sim q^{(i)}_0, \qquad
    dX^{(i)}_t = \Big(v^{(i)}_t(X^{(i)}_t) + \tfrac{{\sigma^{(i)}_t}^2}{2}\nabla \log q^{(i)}_t(X^{(i)}_t)\Big)dt + \sigma_t^{(i)} dW^{(i)}_t
    \label{eq:thm1_individual_sdes}
\end{align}
or, equivalently, the ordinary differential equation (ODE):
\begin{align}
    X^{(i)}_0 &\sim q^{(i)}_0, \qquad 
    dX^{(i)}_t = v^{(i)}_t(X^{(i)}_t)dt
    \label{eq:thm1_individual_odes}
\end{align}
where $W^{(i)}_t$ is a Wiener process in $\mathbb{R}^{d_i}$, $s^{(i)}_t(x^{(i)}) := \nabla_{x^{(i)}} \log q^{(i)}_t(x^{(i)}), \; x^{(i)}\in\mathbb{R}^{d_i}$ is the score function, and the fields $v^{(i)}_t,\, s^{(i)}_t \in \mathcal{C}^1$ are measurable. 

Let $d = \max_i d_i$, $\pi_i:\mathbb{R}^d \to \mathbb{R}^{d_i}$ be a linear projection map, and $\iota_i:\mathbb{R}^{d_i} \to \mathbb{R}^{d}$ be a linear embedding map such that $\pi_i\circ\iota_i =\operatorname{Id}_i$. For a vector field $f^{(i)}:\mathbb{R}^{d_i}\to\mathbb{R}^{d_i}$, define its canonical extension $\tilde{f}^{(i)}:\mathbb{R}^d \to \mathbb{R}^d$ by $\tilde{f}^{(i)} = \iota_i \circ f^{(i)} \circ \pi_i$. For $q^{(i)}_t:\mathbb{R}^{d_i} \to [0,\infty)$, define the canonical lift $\tilde{q}^{(i)}_t:\mathbb{R}^d \to [0,\infty)$ by $\tilde{q}^{(i)}_t=q^{(i)}_t\circ \pi_i$.

If the assumptions of \cref{prop:existence_of_heterogeneous_product_density} hold, then, for any differentiable vector field $v^*_t:\mathbb{R}^d\to\mathbb{R}^d$, the stochastic process given by the following weighted SDE/ODE with $s^*_t(x):=\sum_{i=1}^n \gamma_i(t) \tilde{s}^{(i)}_t(x)$,
\begin{align}
    X_0 &\sim p^*_0, w_0=\mathbf{1},\log q_0^{(i)}(X_0):\text{ standard gaussian density} \\
    dX_t &= \Big(v^*_t(X_t) + \tfrac{\sigma_t^2}{2}s^*_t(X_t)\Big)dt + \sigma_t dW_t \quad\text{  or }\quad dX_t =v^*_t(X_t)dt\\
    d\log \tilde{q}^{(i)}_t(X_t)&=\left(-\nabla \cdot \tilde{v}^{(i)}_t+(v^*_t-\tilde{v}^{(i)}_t)\cdot \tilde{s}^{(i)}_t+\frac{\sigma_t^2}{2}\left(s^*_t\cdot \tilde{s}^{(i)}_t+\nabla \cdot \tilde{s}^{(i)}_t\right)\right)dt+\sigma _t \tilde{s}^{(i)}_t\cdot dW_t \\ 
    \label{eq:ace_sampler_log_q}
    &\text{or}\\
    d\log \tilde{q}^{(i)}_t(X_t)&=\left[-\nabla \cdot \tilde{v}^{(i)}_t(X_t) + (v^*_t(X_t)-\tilde{v}^{(i)}_t(X_t))\cdot \tilde{s}^{(i)}_t(X_t)\right] dt\\
    \end{align}
    \begin{align}
    d\log w_t(X_t) &= \Bigg[\nabla\cdot v^*_t(X_t) +\sum_{i=1}^n\dot{\gamma}_i(t) \log \tilde{q}^{(i)}_t(X_t) \\
    &\qquad + \sum_{i=1}^n \gamma_i(t)\Big(-\nabla\cdot \tilde{v}^{(i)}_t(X_t) 
    + \big(v^*_t(X_t)-\tilde{v}^{(i)}_t(X_t)\big)\cdot \tilde{s}^{(i)}_t(X_t)\Big)\Bigg]dt
\end{align}
 follows the probability path $ p^*_t(x) \;=\; \frac{1}{Z_t}\prod_{i=1}^n\left(\tilde{q}^{(i)}_t(x)\right)^{\gamma_i(t)},\; x\in\Omega_t$ where $t\in [0,1]$, $\Omega_t := \operatorname{supp}(p^*_t)$ and $Z_t=\int_{\Omega_t} \prod_{i=1}^n\left(\tilde{q}^{(i)}_t(x)\right)^{\gamma_i(t)}dx$ is the normalizing constant only dependent on $t$.
 \label{thm:heterogeneous_fkc}
\end{theorem}

\begin{proof}
    From \cref{eq:thm1_individual_odes} (or, equivalently, \cref{eq:thm1_individual_sdes}), each probability path $\{q^{(i)}_t\}_{t\in[0,1]}$ solves the Fokker-Planck PDE given by $\partial_t q^{(i)}_t=-\nabla \cdot (v^{(i)}_t q^{(i)}_t)$. Dividing both sides by $q^{(i)}_t$, we get,
    \begin{equation}
        \partial_t \log  q^{(i)}_t=-\nabla \cdot v^{(i)}_t -v^{(i)}_t\cdot s^{(i)}_t
    \end{equation}
    By the Leibniz product rule, 
    \begin{equation}
    \frac{1}{Z_t}\partial_t \prod_{i=1}^n\left(\tilde{q}^{(i)}_t\right)^{\gamma_i(t)}
    =
    \frac{1}{Z_t}\partial_t \prod_{i=1}^n\exp\left(\gamma_i(t)\log \tilde{q}^{(i)}_t\right)
    =
    p^*_t \sum_{i=1}^n \left(\dot{\gamma_i}(t)\log \tilde{q}^{(i)}_t+\gamma_i(t) \partial_t\log\tilde{q}_t^{(i)} \right)
    \label{eq:thm2_dynamic_gamma_leibniz}
    \end{equation}
    We derive the unified Feynman--Kac PDE for the heterogeneous product $\{p^*_t\}_{t\in[0,1]}$ by the following:
    \begin{align}
        \partial_t\log p^*_t
        &=\sum _{i=1}^n\left(\dot{\gamma}_i(t)\log \tilde{q}^{(i)}_t+ \gamma_i\partial_t\log \tilde{q}^{(i)}_t\right)-\partial_t\log Z_t \\
        &\overset{(\ref{eq:thm2_dynamic_gamma_leibniz})}{=}\sum _{i=1}^n\left(\dot{\gamma}_i(t)\log \tilde{q}^{(i)}_t+ \gamma_i\partial_t\log \tilde{q}^{(i)}_t\right) -   \int_\Omega p^*_t \sum_{i=1}^n \left(\dot{\gamma_i}(t)\log \tilde{q}^{(i)}_t+\gamma_i(t) \partial_t\log\tilde{q}_t^{(i)}\right)dx\\
        &=-\nabla \cdot v^*_t-v^*_t\cdot s^*_t \\
        &\quad \quad +\underbrace{\nabla \cdot v^*_t + v^*_t\cdot s^*_t+\sum _{i=1}^n\left(\dot{\gamma}_i(t)\log \tilde{q}^{(i)}_t+ \gamma_i\partial_t\log \tilde{q}^{(i)}_t\right)}_{=:g_t} \\
        &\quad \quad - \mathbb{E}_{p^*_t} \left[\sum_{i=1}^n \left(\dot{\gamma_i}(t)\log \tilde{q}^{(i)}_t+\gamma_i(t) \partial_t\log\tilde{q}_t^{(i)} \right)\right]\\
        &=-\nabla \cdot v^*_t-v^*_t\cdot s^*_t + g_t-\mathbb{E}_{p^*_t}\left[ g_t \right]
        \label{eq:thm1_pf_dtlogp}
    \end{align}
    where we used the boundary assumption from \cref{prop:existence_of_heterogeneous_product_density} and the divergence theorem:
    \begin{align}
        \mathbb{E}_{p^*_t}[\nabla \cdot v^*_t+v^*_t\cdot s^*_t]&=\int_{\Omega_t} (\nabla \cdot v^*_t(x)+v^*_t(x)\cdot s^*_t(x))p^*_t(x)dx\\
        &=\int_{\Omega_t} \nabla \cdot (v^*_t(x)p^*_t(x))dx=\int_{\partial\Omega_t} (v^*_t(x)\underbrace{p^*_t(x)}_{0\text{ on }\partial\Omega_t})\cdot d\mathbf{S}=0
    \end{align}
    Multiplying $p^*_t$ on both sides of \cref{eq:thm1_pf_dtlogp} yields:
    \begin{equation}
        \partial_tp^*_t=-\nabla\cdot (v^*_tp^*_t)+p^*_t\left(g_t-\mathbb{E}_{p^*_t}\left[ g_t \right]\right)
    \end{equation}
    where
    \begin{align}
        g_t&=\nabla \cdot v^*_t + v^*_t\cdot s^*_t+\sum _{i=1}^n\left(\dot{\gamma}_i(t)\log \tilde{q}^{(i)}_t+ \gamma_i\partial_t\log \tilde{q}^{(i)}_t\right) \\
        &\overset{}{=}\nabla \cdot v^*_t + v^*_t\cdot s^*_t+\sum _{i=1}^n\left( \dot{\gamma}_i(t)\log \tilde{q}^{(i)}_t+ \gamma_i(-\nabla \cdot v^{(i)}_t -v^{(i)}_t\cdot s^{(i)}_t)\right) \\
        &=\nabla\cdot v^*_t + \sum_{i=1}^n \dot{\gamma}_i(t)\log \tilde{q}^{(i)}_t +\sum_{i=1}^n \gamma_i\Big(-\nabla\cdot \tilde{v}^{(i)}_t + \big(v^*_t-\tilde{v}^{(i)}_t\big)\cdot \tilde{s}^{(i)}_t\Big)
    \end{align}
    We can obtain the value of $\log \tilde{q}^{(i)}_t(X_t)$ by simulating an ODE (if $X_t$ follows an ODE) or an SDE (if $X_t$ follows an SDE).
    \begin{equation}
        \partial_t \log  \tilde{q}^{(i)}_t=-\nabla \cdot \tilde{v}^{(i)}_t -\tilde{v}^{(i)}_t\cdot \tilde{s}^{(i)}_t
    \end{equation}

    \textbf{ODE case: $dX_t = v^*_t(X_t)dt$}
    \begin{align}
        \frac{d}{dt}\log \tilde{q}^{(i)}_t(X_t)&=\partial _t\log \tilde{q}^{(i)}_t(x=X_t)+\nabla \log \tilde{q}^{(i)}_t(x=X_t)\cdot \frac{dX_t}{dt}\\
        &=-\nabla \cdot \tilde{v}^{(i)}_t(X_t)
        + (v^*_t(X_t)-\tilde{v}^{(i)}_t(X_t))\cdot \tilde{s}^{(i)}_t(X_t)
    \end{align}

    \textbf{SDE case: $dX_t=\left(v^*_t(X_t)+\tfrac{\sigma_t^2}{2}s^*_t(X_t)\right)dt+\sigma_tdW_t$}\\
    For the SDE case, we must use It\^o's formula to find the differential $d\log \tilde{q}^{(i)}_t(X_t)$. Given the function $f(t,x)$ and the SDE for $X_t$ above, It\'o's formula states that 
    $$
    df(t,X_t)=\left(\partial_tf+\mu_t\cdot \nabla_X f+\frac{1}{2}\operatorname{Tr}(\Sigma_t^\intercal \nabla_X ^2 f\Sigma_t)\right)dt + (\nabla_X f)^\intercal \Sigma_t dW_t,
    $$
    where $f(t,X_t)=\log \tilde{q}^{(i)}_t(X_t)$, $\mu_t=v^*_t+\tfrac{\sigma_t^2}{2}s^*_t$, and $\Sigma_t=\sigma_tI$. \\Applying that $\nabla_X f=\nabla_X \log \tilde{q}^{(i)}_t=\tilde{s}^{(i)}_t$, $\tfrac{1}{2}\operatorname{Tr}(\Sigma_t^\intercal \nabla_X ^2f\Sigma _t )=\tfrac{\sigma_t^2}{2}\Delta_X f=\tfrac{\sigma_t^2}{2} \nabla \cdot \tilde{s}^{(i)}_t$:
    
    \begin{align}
        d\log \tilde{q}^{(i)}_t(X_t)=\left(-\nabla \cdot \tilde{v}^{(i)}_t+(v^*_t-\tilde{v}^{(i)}_t)\cdot \tilde{s}^{(i)}_t+\frac{\sigma_t^2}{2}\left(s^*_t\cdot \tilde{s}^{(i)}_t+\nabla \cdot \tilde{s}^{(i)}_t\right)\right)dt+\sigma _t \tilde{s}^{(i)}_t\cdot dW_t 
    \end{align}
    
    By Proposition~\ref{prop:unbiased}, we can sample from $p^*_t$ by simulating the following weighted SDE or ODE starting from $X_0 \sim p^*_0, w_0=\mathbf{1}$:
    \begin{align}
        \text{SDE:}\quad dX_t &= \Big(v^*_t(X_t) + \tfrac{\sigma_t^2}{2}s^*_t(X_t)\Big)dt + \sigma_t dW_t, \qquad     d\log w_t = g_t(X_t)dt \\
        \text{ODE:}\quad dX_t &= v^*_t(X_t)dt, \qquad\qquad\qquad\qquad\qquad\qquad\;      d\log w_t = g_t(X_t)dt \
    \end{align}
    Our result extends, subsumes, and unifies prior formulations (\cite{skreta2025feynman, mark2025feynman}) in the literature.
\end{proof}

\begin{remark}
\textbf{Interpretation.}  
The auxiliary weight process $(w_t)$ plays the role of a \emph{likelihood ratio corrector}: it accounts for the discrepancy between the law induced by the forward dynamics $(X_t)$ and the target density $p_t^*$. In other words, although the marginal of $X_t$ alone may not coincide with $p_t^*$, the pair $(X_t,w_t)$ ensures unbiased recovery of expectations under $p_t^*$ via Proposition~\ref{prop:unbiased}. This extends and unifies earlier formulations of weighted Feynman--Kac dynamics in the literature \citep{skreta2025feynman,mark2025feynman}.  
\end{remark}

\begin{remark}
\textbf{Practical simulation.}  
From an algorithmic perspective, the weighted SDE/ODE requires no additional training or architecture-specific modifications. Practitioners only need to simulate sample paths $(X_t)$ according to the chosen dynamics and accumulate weights via the exponential update $d\log w_t = g_t(X_t)\,dt$. In practice, this can be carried out efficiently with Sequential Monte Carlo (SMC) or particle filtering methods, where the weights $w_t$ play the usual role of importance weights.  

Although the heterogeneous Feynman--Kac framework (Theorem~\ref{thm:heterogeneous_fkc}) may appear abstract, its input--output structure is remarkably simple: given the forward dynamics and score functions $\{v^{(i)}_t, s^{(i)}_t\}_i$, one can simulate particles $(X_t,w_t)$ and obtain unbiased estimators for expectations under $p_t^*$. This makes the method broadly applicable without architectural constraints such as attention-map control or model-specific fine-tuning.    
\end{remark}

\subsection{{Algorithmic Formulations and Comparison: FKC vs. ACE}}
\label{subsec:algorithmiccomparisonoffkcandace}

{Here, we present the algorithmic tables for FKC and ACE. As the tables indicate, FKC arises as a special case of ACE when the gamma schedule is constant. Main algorithmic differences are highlighted in \textcolor{blue}{blue}. This results in an extra update for $\log q$ components, which is needed because $\dot\gamma_i(t)\neq 0$.}


\begin{algorithm}[ht]
\caption{Adaptive Correction with Exponents (ACE, Ours)}
\label{alg:ACE}
\begin{algorithmic}[1]

\Require
\Statex $\bullet$ Batch size $N$, initial particles $X_0^j\sim p^*_0$, weights $w_0^j=1/N$
\Statex $\bullet$ Networks: scores $s^{(i)}_{\theta_i}$ and velocities $v^{(i)}_{\theta_i}$ for the paths $q^{(i)}_t$
\Statex $\bullet$ Projection $\pi_i$, embedding $\iota_i$, \textcolor{blue}{\textbf{time-varying exponents $\gamma_i(t)$}}
\Statex $\bullet$ Base drift $v^*_\phi$, noise schedule $\sigma_t$, steps $T$, resampling threshold $\tau$

\State $\Delta t \gets 1/T$

\For{$t = 0, \Delta t, \dots, 1-\Delta t$}
    \State \textbf{Mixture score:} $s^*_t(x)=\sum_{i=1}^n\gamma_i(t)\,\tilde s^{(i)}_t(x)$ with $\tilde s^{(i)}_t(x)=\iota_i\!(s^{(i)}_{\theta_i}(\pi_i(x),t))$.

    \State \textbf{Drift:} $\mu_t(x)=v^*_\phi(x,t)+\tfrac{\sigma_t^2}{2}s^*_t(x)$

    \State \textbf{Component drift correction} with $\tilde v^{(i)}_t(x)=\iota_i(v^{(i)}_{\theta_i}(\pi_i(x),t))$:
    \[
        D^{(i)}_t(x)
        =-\nabla\cdot\tilde v^{(i)}_t(x)
         +(v^*_\phi(x,t)-\tilde v^{(i)}_t(x))\cdot\tilde s^{(i)}_t(x),
    \]
        
    \For{$j = 1,\dots,N$}

        \State \textbf{Propagate particle:} $X^j_{t+\Delta t}=X^j_t + \mu_t(X^j_t)\Delta t + \sigma_t\sqrt{\Delta t}\,\xi^j_t,\qquad \xi^j_t\sim\mathcal N(0,I)$

        \State \textcolor{blue}{\textbf{Update log-components:} $\log q^{(i),j}_{t+\Delta t}=\log q^{(i),j}_t+\Delta\log q^{(i),j}_t\,\Delta t$ with}
        \textcolor{blue}{\[
            \Delta\log q^{(i),j}_t
            =D^{(i)}_t(X^j_t)
              +\tfrac{\sigma_t^2}{2}\left(
                    s^*_t\cdot\tilde s^{(i)}_t
                    +\nabla\cdot\tilde s^{(i)}_t
                \right)
              +\sigma_t\,\tilde s^{(i)}_t\cdot\xi^j_t(1/\sqrt{\Delta t})
        \]}

        \State \textbf{Update weight:} $\log w^j_{t+\Delta t}=\log w^j_t+\Delta\log w^j_t\,\Delta t$ with
        \[
            \Delta\log w^j_t
            =\nabla\cdot v^*_\phi(X^j_t,t)
             \textcolor{blue}{+\sum_{i=1}^n
                \dot\gamma_i(t)\,\log q^{(i),j}_t}
             +\sum_{i=1}^n
                \gamma_i(t)\,D^{(i)}_t(X^j_t)
        \]

    \EndFor

    \State Compute Effective Sample Size (ESS) $\displaystyle=\frac{(\sum_jw^j_{t+\Delta t})^2}{\sum_j (w^j_{t+\Delta t})^2}$
    \If{ESS $< \tau N$}
        \State Resample particles according to $\{\frac{w^j_{t+\Delta t}}{\sum_jw^j_{t+\Delta t}}\}$
        \State Reset $w^j_{t+\Delta t} = 1/N$
    \EndIf

\EndFor
\end{algorithmic}
\end{algorithm}

\clearpage

\begin{algorithm}[H]
\caption{{Feynman--Kac Corrector (FKC, \citep{skreta2025feynman})}}
\label{alg:FKC}
\begin{algorithmic}[1]
\Require
\Statex $\bullet$ Batch size $N$, initial particles $X_0^j\sim p^*_0$, weights $w_0^j=1/N$
\Statex $\bullet$ Pretrained component networks: scores $s^{(i)}_{\theta_i}$ and velocities $v^{(i)}_{\theta_i}$ for $q^{(i)}_t$
\Statex $\bullet$ Projections $\pi_i$, embeddings $\iota_i$, constant exponents $\gamma_i$
\Statex $\bullet$ Base drift $v^*_\phi(x,t)$, noise schedule $\sigma_t$, total steps $T$, resampling threshold $\tau$

\State $\Delta t \gets 1/T$

\For{$t = 0, \Delta t, \dots, 1-\Delta t$}
    \State \textbf{Mixture score:} $s^*_t(x)=\sum_{i=1}^n \gamma_i\,\tilde s^{(i)}_t(x)$ with $\tilde s^{(i)}_t(x)=\iota_i\!(s^{(i)}_{\theta_i}(\pi_i(x),t))$.

    \State \textbf{Drift:} $\mu_t(x)=v^*_\phi(x,t)+\tfrac{\sigma_t^2}{2}s^*_t(x)$

    \State \textbf{Component drift correction} with $\tilde v^{(i)}_t(x)=\iota_i(v^{(i)}_{\theta_i}(\pi_i(x),t))$:
    \[
        D^{(i)}_t(x)
        =-\nabla\cdot\tilde v^{(i)}_t(x)
         +(v^*_\phi(x,t)-\tilde v^{(i)}_t(x))\cdot\tilde s^{(i)}_t(x),
    \]
        
    \For{$j = 1,\dots,N$}
        \State \textbf{Propagate particle:} $X^j_{t+\Delta t}=X^j_t+\mu_t(X^j_t)\Delta t+\sigma_t\sqrt{\Delta t}\,\xi^j_t,\qquad\xi^j_t\sim\mathcal N(0,I)$

        \State \textbf{Update weight:} $\log w^j_{t+\Delta t}=\log w^j_t+\Delta\log w^j_t\,\Delta t$ with
        \[
            \Delta\log w^j_t
            =\nabla\cdot v^*_\phi(X^j_t,t)
             +\sum_{i=1}^n \gamma_i\,D^{(i)}_t(X^j_t)
        \]

    \EndFor

    \State Compute Effective Sample Size (ESS) $\displaystyle=\frac{(\sum_jw^j_{t+\Delta t})^2}{\sum_j (w^j_{t+\Delta t})^2}$

    \If{ESS $< \tau N$}
        \State Resample particles according to $\{\frac{w^j_{t+\Delta t}}{\sum_jw^j_{t+\Delta t}}\}$
        \State Reset $w^j_{t+\Delta t}=1/N$
    \EndIf
\EndFor
\end{algorithmic}
\end{algorithm}

\begin{remark}
\label{app:pathexistencerequirement}
    {\textbf{Remark on computability vs. validity.} The FKC algorithm remains numerically computable even when Marginal Path Collapse occurs: the mixed score $s_t^*$ and the update rules in Algorithm~\ref{alg:FKC} produce finite values at every step. However, this computability does \emph{not} imply that the algorithm is sampling from a valid probability model. FKC is theoretically justified only when the target path $p_t^*(x) \propto h_t(x)$ exists as a family of \emph{normalizable} densities for all $t$ on the discretization grid. If for some $t^*$ the ratio-of-densities integrand $h_{t^*}$ fails to lie in $L^1$ (i.e.\ $Z_{t^*} = \int h_{t^*} = \infty$), then $p_{t^*}^*$ does not exist, and the drift field $s_{t^*}^*$ used by FKC is no longer the score of any probability density. As a result, the reverse SDE/ODE and the weighted Feynman--Kac dynamics no longer transport the intended ratio-of-densities path $\{p^*_t\}$: instead they follow a different density path $\{p'_t\}$ determined by their coefficients.\textbf{ Even though the sampler remains numerically well-defined, its terminal law $p'_t$ is no longer equal to the desired target $p^*_1$.}}

    {Under standard regularity assumptions on the drift and diffusion, the Fokker–Planck equation associated with a given SDE has a unique weak solution for each initial probability density. Therefore, if there exists $t_c$ with $h_{t_c}\notin L^1$ (Marginal Path Collapse), there is no probability path $\{p_t^*\}$ that both (i) coincides with $h_t/Z_t$ whenever $h_t\in L^1$ and (ii) solves the same Fokker–Planck equation globally. The path produced by FKC, $\{p'_t\}$, is hence a different solution induced by its own initial condition and cannot coincide with the intended ratio-of-densities path.}
    
    {Empirically, in every heterogeneous setting we tested, violation of the path-existence criterion (Theorem~\ref{thm:main_compactly_supported}) leads FKC to complete failure (typically through unstable or degenerate importance weights) despite the algorithm itself producing finite updates. This behavior is exactly the Marginal Path Collapse phenomenon. Our path-existence criterion guarantees $Z_t < \infty$ when $C(t)>0$ and detects $Z_t=\infty$ when $C(t)<0$; Theorem~\ref{thm:adaptive_exponents_main} then shows how ACE constructs corrected exponent schedules ensuring that the entire path $\{p_t^*\}_{t\in[0,1]}$ is well-defined, allowing the weighted SDE established in Theorem~\ref{thm:ace_sampling_main} and Algorithm~\ref{alg:ACE} to provide unbiased samples from the desired target.}
\end{remark}

\begin{example}[Finite score fields do not imply valid steering]
    Consider a 4-expert Gaussian path (from Figure~\ref{fig:path_collapse_2d_gaussian}). Let $h_t(x) = \frac{q^1_t(x)q^2_t(x)}{q^3_t(x)q^4_t(x)}$ with $q^{(1)}_t \sim {N}(0,\sigma_1^2(t))$, $q^{(2)}_t \sim {N}(0,\sigma_2^2(t))$, and $q^{(3)}_t=q^{(4)}_t \sim {N}(0,\sigma_3^2(t))$. 

    \begin{itemize}
        \item \textbf{The Collapse:} At $t=0.5$, $C(0.5) = -1/30$ drops below zero, making $h_{0.5}(x) \propto \exp(x^2/60)$ non-integrable. 
        \item \textbf{The Paradox:} The mixed score $s_\text{mix}(x,t) = -C(t)x$ remains perfectly finite ($x/30$ at $t=0.5$). The SDE is well-posed and does not crash.
        \item \textbf{The Detachment:} By standard Fokker-Planck arguments, any Gaussian path $p_t={N}(0,\sigma_t^2I)$ can be simulated by the SDE:
        $$dX_t = \left(\frac{g_t^2}{2}-\sigma_t\dot \sigma_t\right)s(X_t,t)dt + g_tdW_t$$
We can run the well-posed SDE with the \textbf{mixed score} by choosing a reference schedule $\sigma_1(t)$ and constant noise $g_t=\sqrt{2}$ (though the following structural failure applies universally to any choice of reference schedule and $g_t$):
$$dX_t = (1 - \sigma_1(t)\dot{\sigma}_1(t)) s_\text{mix}(X_t,t)dt + \sqrt{2} dW_t$$
Plugging in the mixed score (linear in $x$) gives an induced SDE of the form:
$$dX_t = -a(t)C(t) X_tdt + g_tdW_t$$
Taking expectations gives an ODE for $m(t):=\mathbb{E}[X_t]$:
$$\dot{m}(t)=-a(t)C(t)m(t),\quad m(0)=0$$
Hence, $m(t)=0$ for all $t$. Therefore, $V(t) := \mathbb{E}[X_t^2] = \operatorname{Var}(X_t)$. Applying It\^o's formula,
$$d(X_t^2)=[-2a(t)C(t)X_t^2 + g_t^2]dt + 2g_tX_t dW_t$$
and taking expectations yields:
$$\dot{V}(t)=-2a(t)C(t)V(t) + g_t^2$$
For the concrete choice $g_t=\sqrt{2}$ and $2a(t)=4-3t$, this reduces to the ODE:
$$\dot{V}(t) = (3t - 4) C(t) V(t) + 2$$
To succeed, $V(t)$ must track $1/C(t)$ to reach $V(1)=7$. However, since $V(t) \ge 0$, $V(t)$ cannot become negative. Therefore, once $C(t) < 0$ the sampler cannot remain on the intended path: the target variance $1/C(t)$ is negative and hence not realizable by any probability law. In our concrete well-posed mixed-score SDE above, numerical integration from $V(0)=1.5$ yields $V(1) \approx 2.415 \neq 7$, so the terminal marginal is also wrong. More generally, any later return to the correct endpoint would require specially tuned off-path dynamics rather than automatic recovery.
    \end{itemize}
\label{example:finite-score}
\end{example}
\section{Integrability Preservation Along Stochastic Paths}
\label{sec:property_preservation_along_paths}

The problem of preserving integrability for ratios of products of densities along stochastic paths is non-trivial. While general ratios can exhibit pathological behavior where integrability is lost, imposing a structural condition of component-wise dominance for GMMs, or positivity of a simple criterion for compactly supported densities, is sufficient to prevent such failures. This section demonstrates that under these conditions, integrability, once established at endpoints, is maintained throughout the evolution.

\subsection{Mathematical Preliminaries}

We state mathematical preliminaries.

\begin{definition}[Generative Stochastic Path for Stochastic Interpolants]
    Let $X_0 \sim \N(0, \mat{I})$ and $X_1 \sim p_1(x)$ be independent random variables. A \textbf{generative stochastic path} is a time-indexed random variable $X_t$ for $t \in [0,1]$ defined by the sample-wise interpolation:
    \begin{equation}
        X_t = \alpha_t X_0 + \beta_t X_1
    \end{equation}
    where $\alpha_t, \beta_t$ are non-negative, differentiable functions of $t$ satisfying the boundary conditions $\alpha_0=1, \beta_0=0$ and $\alpha_1=0, \beta_1=1$.
    \label{def:general_stochastic_path_for_stochastic_interpolants}
\end{definition}

\begin{remark} [Ornstein-Uhlenbeck Process and Flow Matching as Special Cases of Stochastic Interpolants]
    The generalized path encompasses the two most common paths in generative modeling.
    \begin{itemize}
        \item \textbf{Flow Matching (Linear Path):} Setting $\alpha_t = 1-t$ and $\beta_t = t$ gives the linear interpolation path $X_t = (1-t)X_0 + tX_1$.
        \item \textbf{Diffusion Models (OU-like Path):} Setting $\alpha_t = e^{-\int_0^t \gamma(s)ds}$ and $\beta_t = \sqrt{1-e^{-2\int_0^t \gamma(s)ds}}$ corresponds to the path generated by an Ornstein-Uhlenbeck process, commonly used in diffusion models.
    \end{itemize}
    Our results hold for the general path, which includes these specific cases.
\end{remark}

\begin{lemma} [The relationship between the velocity and score]
    Suppose $v_t(x)$ is a locally Lipschitz vector field which generates the probability path $p_t$ between $p_0\sim \mathcal{N}(0,I)$ and $p_1$ with differentiable schedules $\alpha_t,\beta_t$ such that $X_t =\alpha_t X_0 + \beta _t X_1 \sim p_t$. Then the score can be expressed as a function of the velocity field by:
    \begin{equation}
        \nabla \log p_t(x)=\frac{1}{\alpha _t \left(\tfrac{\dot{\beta}_t}{\beta_t}\alpha_t-\dot{\alpha}_t\right)}\left(v_t(x)-\tfrac{\dot{\beta}_t}{\beta_t}x\right)
    \end{equation}
    Specifically, for $\alpha_t=1-t,\beta_t=t$ (Flow Matching), the score can be expressed as
    \begin{equation}
        \nabla \log p_t(x)=\frac{tv_t(x)-x}{1-t}
    \end{equation}
    \label{lem:vel_and_score}
\end{lemma}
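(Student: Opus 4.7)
The plan is to work first at the conditional level by conditioning on $X_1$, where both the density and its velocity admit closed-form Gaussian expressions, invert the resulting linear system to write the conditional score as a function of the conditional velocity and $x$, and then marginalize via standard identities to obtain the claimed formula for the marginal score and velocity.

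First, fix $X_1 = x_1$ and note that, since $X_0 \sim \mathcal{N}(0,I)$ is independent of $X_1$, the conditional law is $X_t \mid X_1 = x_1 \sim \mathcal{N}(\beta_t x_1, \alpha_t^2 I)$. This immediately gives the conditional score
\[
\nabla \log p_t(x \mid x_1) = -\frac{x - \beta_t x_1}{\alpha_t^2}.
\]
For the conditional velocity $u_t(x \mid x_1)$, I would differentiate the interpolant and substitute the unique value $X_0 = (x - \beta_t x_1)/\alpha_t$ consistent with $X_t = x$ and $X_1 = x_1$, obtaining
\[
u_t(x \mid x_1) = \dot\alpha_t X_0 + \dot\beta_t x_1 = \frac{\dot\alpha_t}{\alpha_t}\, x + \frac{\dot\beta_t \alpha_t - \dot\alpha_t \beta_t}{\alpha_t}\, x_1.
\]
Solving this relation for $x_1$ and plugging back into the conditional score, after collecting the $x$-coefficient using the elementary identity $\dot\beta_t \alpha_t - \dot\alpha_t \beta_t + \beta_t \dot\alpha_t = \dot\beta_t \alpha_t$, yields
\[
\nabla \log p_t(x \mid x_1) = \frac{1}{\alpha_t\bigl(\tfrac{\dot\beta_t}{\beta_t}\alpha_t - \dot\alpha_t\bigr)}\Bigl(u_t(x \mid x_1) - \tfrac{\dot\beta_t}{\beta_t}\,x\Bigr).
\]
Crucially, the coefficients are deterministic functions of $t$ alone and do not depend on $x_1$.

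Because these coefficients are $x_1$-independent, taking the conditional expectation over $X_1 \mid X_t = x$ commutes with the linear combination. Invoking the two standard marginalization identities
\[
\nabla \log p_t(x) = \mathbb{E}\bigl[\nabla \log p_t(x \mid X_1) \bigm| X_t = x\bigr], \qquad v_t(x) = \mathbb{E}\bigl[u_t(x \mid X_1) \bigm| X_t = x\bigr]
\]
then promotes the conditional identity to the stated marginal one. The flow-matching special case $\alpha_t = 1-t$, $\beta_t = t$ follows by direct substitution: the prefactor collapses to $t/(1-t)$ and the bracket to $v_t(x) - x/t$, recovering $(t\,v_t(x) - x)/(1-t)$.

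The main technical obstacle is justifying the marginal score identity, as this is the only point where the argument leaves the purely algebraic regime. I would prove it by differentiating $p_t(x) = \int p_t(x \mid x_1)\, p_1(x_1)\, dx_1$ under the integral sign (valid under the local-Lipschitz hypothesis on $v_t$ and standard tail control on $p_1$) and then applying Bayes' rule to recognize the result as the stated conditional expectation; the velocity identity is the defining relation for the marginal vector field in the stochastic-interpolant framework and needs no separate argument. A minor caveat is that the formula is only meaningful on the open interval $t \in (0,1)$, since the denominator $\alpha_t\bigl(\tfrac{\dot\beta_t}{\beta_t}\alpha_t - \dot\alpha_t\bigr)$ degenerates at the endpoints, consistent with the known singularity of the score at $t=0$ and $t=1$.
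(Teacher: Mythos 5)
Your proof is correct. The paper does not give its own argument here: it simply cites Appendix~B.4 of \cite{domingo2024adjoint}, and your derivation is a clean, self-contained version of the standard stochastic-interpolant calculation that appears there. The key steps check out: the conditional law $X_t \mid X_1=x_1 \sim \mathcal{N}(\beta_t x_1,\alpha_t^2 I)$ gives the conditional score $-(x-\beta_t x_1)/\alpha_t^2$; the conditional velocity $u_t(x\mid x_1)=\tfrac{\dot\alpha_t}{\alpha_t}x + \tfrac{\dot\beta_t\alpha_t-\dot\alpha_t\beta_t}{\alpha_t}x_1$ follows from substituting $X_0=(x-\beta_t x_1)/\alpha_t$; eliminating $x_1$ produces a relation whose coefficients depend only on $t$, and I verified that the resulting $x$-coefficient is indeed $-\dot\beta_t/[\alpha_t(\dot\beta_t\alpha_t-\dot\alpha_t\beta_t)]$, which matches $-\tfrac{\dot\beta_t}{\beta_t}$ times the stated prefactor $1/[\alpha_t(\tfrac{\dot\beta_t}{\beta_t}\alpha_t-\dot\alpha_t)]$. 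Since those coefficients are $x_1$-free, conditional expectation over $X_1\mid X_t=x$ passes through the linear relation, and the two marginalization identities you invoke (Bayes' rule for the score, tower property for the velocity) are standard. The flow-matching special case also checks out algebraically. One minor point worth making explicit: the lemma implicitly takes $v_t$ to be the \emph{marginal} flow-matching velocity $v_t(x)=\mathbb{E}[\dot X_t\mid X_t=x]$, not an arbitrary vector field transporting $p_t$ (any divergence-free perturbation would also transport $p_t$ but break the identity); your proof correctly uses this interpretation, and it is the one intended by the paper and its cited reference, but the lemma's phrasing "a locally Lipschitz vector field which generates the probability path" does not uniquely pin it down.
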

\begin{proof}
    The proof can be found in \textbf{B.4} of \cite{domingo2024adjoint}.
\end{proof}

\begin{lemma} [Sum of Independent Random Variables]
    Let $A$ and $B$ be two independent random variables in $\R^d$ with probability density functions $p_A(a)$ and $p_B(b)$, respectively. The probability density function of their sum, $C = A+B$, is given by the convolution of their individual PDFs:
    \begin{equation}
        p_C(c) = (p_A * p_B)(c) = \int_{\R^d} p_A(y) p_B(c-y) dy
    \end{equation}
    \label{lem:sum_of_indep_rv}
\end{lemma}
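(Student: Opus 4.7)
The plan is to prove this standard result by the change-of-variables approach. First I would write down the joint density. By independence of $A$ and $B$, the joint density on $\R^d \times \R^d$ factorizes as $p_{A,B}(a,b) = p_A(a)\, p_B(b)$, and I would justify this step by citing the product measure characterization of independence.

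Next I would perform a change of variables. I would introduce the linear bijection $T:(a,b) \mapsto (a, a+b) = (a, c)$, whose inverse is $(a,c)\mapsto(a, c-a)$. The Jacobian determinant of $T$ has absolute value $1$, so the pushforward joint density of $(A, C)$ on $\R^d\times \R^d$ is simply $p_{A,C}(a,c) = p_A(a)\, p_B(c-a)$. Here I would be careful to invoke the transformation rule for absolutely continuous distributions on $\R^{2d}$ rather than just $\R^d$, but this is entirely routine.

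Finally I would marginalize out $A$ to obtain
\begin{equation}
    p_C(c) = \int_{\R^d} p_{A,C}(a,c)\, da = \int_{\R^d} p_A(a)\, p_B(c-a)\, da,
\end{equation}
which is exactly the convolution $(p_A * p_B)(c)$ after renaming $a$ to $y$. I would invoke Fubini/Tonelli to guarantee that this marginal integral is well-defined a.e., using the nonnegativity and integrability of $p_A, p_B$.

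There is essentially no main obstacle: this is a textbook measure-theoretic fact. The only subtlety worth flagging is that $p_C$ is defined only almost everywhere (the convolution integral may fail on a Lebesgue null set), but since densities are equivalence classes this is immaterial for the statement as used in the paper. An alternative two-line proof via characteristic functions, $\phi_{A+B}(\xi) = \phi_A(\xi)\phi_B(\xi)$ combined with the Fourier convolution theorem, could be given as a sanity check, but the change-of-variables route is self-contained and suffices.
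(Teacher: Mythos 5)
Your proof is correct. The paper itself does not give an argument for this lemma; it simply defers to Chapter~6 of Ross's \emph{A First Course in Probability}. Ross's standard presentation computes the CDF $F_C(c) = \mathbb{P}(A+B \le c) = \int\!\int_{\{a+b \le c\}} p_A(a)\,p_B(b)\,da\,db$, substitutes $b \mapsto c' - a$ in the inner integral, and then differentiates under the integral sign with respect to $c$ to recover the convolution formula. Your route via the linear change of variables $(a,b) \mapsto (a,a+b)$ on $\R^{2d}$ followed by marginalization is the other standard proof; it reaches the same identity without differentiating a CDF, which makes the multivariate case ($d>1$) somewhat cleaner since one never has to discuss differentiation of a $d$-dimensional distribution function. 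Your remarks about the unit Jacobian, Tonelli for nonnegative integrands, and the a.e.\ nature of the density are exactly the right points to flag; nothing is missing.
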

\begin{proof}
    The proof can be found in Chapter 6 of \cite{ross1998first}.
\end{proof}

\begin{proposition} [Distribution Along the Path of the Stochastic Interpolant]
    The probability density function $p_t(x)$ of the random variable $X_t = \alpha_t X_0 + \beta_t X_1$ is given by the convolution of the scaled final density with a Gaussian kernel:
    \begin{equation}
        p_t(x) = \left( \frac{1}{\beta_t^d} p_1\left(\frac{\cdot}{\beta_t}\right) \right) * \N(x | \mat{0}, \alpha_t^2 \mat{I})
    \end{equation}
\end{proposition}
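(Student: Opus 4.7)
The plan is to exploit the structure of $X_t$ as a sum of two independent random variables and reduce the claim to a direct application of Lemma~\ref{lem:sum_of_indep_rv}. First, since $X_0$ and $X_1$ are independent by Definition~\ref{def:general_stochastic_path_for_stochastic_interpolants}, so are the scaled variables $Y_0 := \alpha_t X_0$ and $Y_1 := \beta_t X_1$ for any fixed $t$. Thus $X_t = Y_0 + Y_1$ is a sum of independent random variables, and its density is the convolution $p_{Y_0} * p_{Y_1}$.

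Next, I would compute each component density separately via a standard change of variables in $\R^d$. For $Y_1 = \beta_t X_1$, the linear map $x\mapsto \beta_t x$ has Jacobian determinant $\beta_t^d$ (assuming $\beta_t > 0$), so the pushforward density is
\begin{equation*}
    p_{Y_1}(y) = \frac{1}{\beta_t^d}\, p_1\!\left(\frac{y}{\beta_t}\right).
\end{equation*}
For $Y_0 = \alpha_t X_0$, the standard Gaussian scaling identity gives $Y_0 \sim \N(0, \alpha_t^2 \mat{I})$. Substituting both densities into the convolution formula from Lemma~\ref{lem:sum_of_indep_rv} yields exactly the claimed identity.

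The one place I would be careful is handling the endpoint degeneracies. At $t=1$ we have $\alpha_1 = 0$, so the Gaussian factor collapses to a Dirac mass at the origin, and the convolution reduces to the scaled density $\beta_1^{-d} p_1(\cdot/\beta_1) = p_1(\cdot)$, correctly recovering the boundary law. At $t=0$, $\beta_0 = 0$ forces $\beta_t^{-d} p_1(\cdot/\beta_t)$ to be interpreted distributionally as $\delta_0$, and the convolution reduces to $\N(0,\mat{I}) = p_0$. The cleanest presentation is to state the identity for $t\in(0,1)$ where both $\alpha_t,\beta_t$ are strictly positive, and verify the endpoints separately as limiting cases. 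Since the algebra is routine and there is no genuine obstacle beyond these degenerate points, the main value of the proof is organizational: making explicit that the convolution structure of the probability path is a direct consequence of independence together with linear scaling, with no appeal to the Fokker--Planck or continuity equations.
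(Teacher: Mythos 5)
Your proposal is correct and takes essentially the same approach as the paper: decompose $X_t = \alpha_t X_0 + \beta_t X_1$ into independent scaled summands, compute each marginal via the scaling/change-of-variables rule, and apply Lemma~\ref{lem:sum_of_indep_rv} for the convolution. The only addition beyond the paper's proof is your careful treatment of the degenerate endpoints $t\in\{0,1\}$, which is a reasonable clarification but not a different route.
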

\begin{proof}
    $X_t$ is the sum of two independent random variables: $A = \alpha_t X_0$ and $B = \beta_t X_1$. The density of $A$ is $\N(x | \mat{0}, \alpha_t^2 \mat{I})$. The density of $B$ is $\frac{1}{\beta_t^d} p_1(\frac{x}{\beta_t})$. By \cref{lem:sum_of_indep_rv}, the density of their sum $X_t$ is the convolution of their respective densities.
\end{proof}

\begin{lemma}[Integrability of Exponential Functions with Quadratic Exponents]
\label{lemma:integrabilityofratiooftwogaussiancs}
{Let 
\[
f(x) = \exp\!\left(-\tfrac{1}{2} x^{\top} A x + b^{\top} x + c\right)
\]
be an exponential function with a quadratic exponent, where $A\in\mathbb{R}^{d\times d}$ is symmetric.  
Then $f \in L^{1}(\mathbb{R}^d)$ if and only if $A \succ 0$.}
\end{lemma}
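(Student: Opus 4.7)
The plan is to prove both directions by reducing the multivariate integral to a product of one-dimensional Gaussian-type integrals through the spectral decomposition of $A$. The forward direction (sufficiency) will use completion of the square, and the converse will proceed by contrapositive by isolating the ``bad'' eigen-direction.

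For the direction $A\succ 0\Rightarrow f\in L^1(\mathbb{R}^d)$, I would complete the square in the exponent. Since $A$ is symmetric and positive definite, $A^{-1}$ exists, and one can write $-\tfrac{1}{2}x^\top A x+b^\top x+c = -\tfrac{1}{2}(x-A^{-1}b)^\top A(x-A^{-1}b)+\tfrac{1}{2}b^\top A^{-1}b+c$. The integral then reduces to a shifted Gaussian integral with positive-definite covariance $A^{-1}$, yielding the closed-form value $(2\pi)^{d/2}(\det A)^{-1/2}\exp\bigl(\tfrac{1}{2}b^\top A^{-1}b+c\bigr)$, which is finite.

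For the converse, I would argue by contrapositive: assume $A$ is not positive definite and show $f\notin L^1$. Since $A$ is symmetric, it admits a spectral decomposition $A=Q\Lambda Q^\top$ with $\Lambda=\mathrm{diag}(\lambda_1,\dots,\lambda_d)$. Applying the orthogonal change of variables $y=Q^\top x$ (Jacobian $1$) and writing $b'=Q^\top b$, the integral factorizes as $e^c\prod_{i=1}^d\int_\mathbb{R}\exp\bigl(-\tfrac{1}{2}\lambda_i y_i^2+b'_i y_i\bigr)\,dy_i$. Since $A\not\succ 0$, there exists an index $i^*$ with $\lambda_{i^*}\le 0$. If $\lambda_{i^*}<0$ the quadratic exponent tends to $+\infty$ as $|y_{i^*}|\to\infty$, so the $i^*$th factor diverges; if $\lambda_{i^*}=0$ the exponent becomes linear (or constant when $b'_{i^*}=0$), and in either subcase the integral over $\mathbb{R}$ is infinite. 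Because every factor is strictly positive, a single divergent factor forces the whole product (and hence the original integral) to be infinite, contradicting $f\in L^1(\mathbb{R}^d)$.

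The main obstacle I anticipate is ensuring rigor in the factorization step and in the edge case $\lambda_{i^*}=0$: one must justify that Tonelli's theorem applies (legitimate because the integrand is non-negative) so that the multidimensional integral genuinely equals the iterated product of one-dimensional integrals, and then verify that both the $\lambda_{i^*}<0$ and $\lambda_{i^*}=0$ subcases are handled uniformly. Once that bookkeeping is in place, the remainder is a routine Gaussian calculation.
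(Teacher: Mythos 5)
Your proposal is correct and uses essentially the same strategy as the paper: for sufficiency, completing the square to reduce to a finite Gaussian integral, and for necessity, isolating a non‑positive eigendirection of $A$ and showing divergence along it, treating $\lambda<0$ and $\lambda=0$ separately. The only mechanical difference is that you fully diagonalize and invoke Tonelli to factor the $d$‑dimensional integral into a product of one‑dimensional factors (one of which is infinite), whereas the paper lower‑bounds the integral by restricting to a tube $\{\|y\|\le r\}$ around the bad eigenvector; both are valid and rest on the same spectral observation.
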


{\begin{proof}
($\Rightarrow$)  
If $A \succ 0$, then $-\tfrac12 x^{\top} A x$ dominates the linear term $b^{\top} x$, so $f(x)$ is bounded by a Gaussian density and is therefore integrable over $\mathbb{R}^d$.
\medskip
($\Leftarrow$)  
Suppose $A \nsucc 0$.  
Then there exists a unit eigenvector $v$ of $A$ with eigenvalue $\lambda \le 0$.  
Decompose $x = t v + y$ with $y \perp v$; in this orthogonal basis the exponent becomes
\[
-\tfrac12 \lambda t^{2} + b_{1} t 
\;-\; \tfrac12 y^{\top} A_{\perp} y + b_{\perp}^{\top} y .
\]
Consider integrating $f$ over the unbounded tube 
\[
\mathcal{C}_r 
= \{ (t,y) : \|y\|\le r \},
\qquad r>0.
\]
If $\lambda < 0$, the term $-\tfrac12 \lambda t^{2}$ grows \emph{positively} and the integral diverges.  
If $\lambda = 0$, then the exponent along the $t$-direction is at most linear: If $b_1 \neq 0$, the integrand grows (or decays too slowly) linearly; if $b_1 = 0$, the integrand has no decay along $t$. In either case the integral over $\mathcal{C}_r$ diverges.
Hence $f \notin L^{1}(\mathbb{R}^d)$ whenever $A$ is not positive definite.
\end{proof}}

\subsection{Compactly Supported Distributions}
\label{app:compactly_supported}

We consider compactly supported densities, a condition satisfied by the vast majority of real-world data. Examples include:
\begin{itemize}
    \item \textbf{Images/Videos}: Pixel values are bounded, typically in a hypercube like $[0,1]^{H\times W\times C}$. Videos are a sequence of images spread across the time dimension $T$, bounded in a hypercube of even higher dimension, like $[0,1]^{H\times W\times C\times T}$.
    \item \textbf{3D Molecules/Shapes}: Atomic coordinates are constrained within a finite volume centered at the center of mass.
\end{itemize}

\begin{proposition} [Isotropic Gaussian to Compactly Supported Target Density]
    Let $p_0(x)=\N(x;0,\sigma_0^2I)$ and let $p_1$ be a probability density with compact support in $\R^d$. Let $\{p_t\}_{t\in [0,1]}$ be the probability path generated by stochastic interpolant $X_t=\alpha_tX_0+\beta_tX_1$. Then, for any $t\in[0,1)$, {there exist finite positive constants $0<C_{\pm}, \mu_t, V_t, R_t<\infty$ such that $p_t(x)$ is bounded by Gaussian envelopes:
    $$
    C_{-}\exp\left(\frac{-\|x-\mu_t\|^2+ \|\mu_t\|^2 - V_t}{2\alpha^2_t\sigma_0^2}\right)
    \leq 
    p_t(x) 
    \leq 
    C_{+}\exp\left(\frac{-(\|x\|-R_t)^2+R_t^2}{2\alpha_t^2\sigma_0^2}\right)
    $$}
    \label{prop:gaussian_to_cmp_supp_density}
\end{proposition}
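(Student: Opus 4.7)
The plan is to build on the convolution representation of $p_t$ given by the preceding proposition, extended to the case $p_0=\mathcal{N}(0,\sigma_0^2 I)$ by the trivial rescaling $X_0=\sigma_0\tilde X_0$ with $\tilde X_0\sim\mathcal{N}(0,I)$, which simply replaces $\alpha_t^2$ by $\alpha_t^2\sigma_0^2$. This yields
\begin{equation*}
p_t(x) \;=\; (2\pi\alpha_t^2\sigma_0^2)^{-d/2}\,\mathbb{E}_{Z\sim\tilde p_1}\!\left[\exp\!\left(-\frac{\|x-Z\|^2}{2\alpha_t^2\sigma_0^2}\right)\right],
\end{equation*}
where $\tilde p_1(z)=\beta_t^{-d}p_1(z/\beta_t)$ is supported on $\beta_t K$ with $K=\operatorname{supp}(p_1)$ compact. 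Denote $R:=\sup_{y\in K}\|y\|<\infty$ so that $\|Z\|\le\beta_t R$ almost surely.

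For the \emph{upper} envelope, I would apply Cauchy--Schwarz pointwise on the support: for every $z\in\beta_t K$, $\|x-z\|^2\ge\|x\|^2-2\|x\|\|z\|\ge\|x\|^2-2\|x\|R_t=(\|x\|-R_t)^2-R_t^2$ with $R_t:=\beta_t R$. Substituting this into the exponent (which flips the inequality) and using $\int\tilde p_1=1$ to collapse the integral gives the claimed upper bound with $C_+=(2\pi\alpha_t^2\sigma_0^2)^{-d/2}$.

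For the \emph{lower} envelope, I would apply Jensen's inequality, exploiting convexity of the exponential:
\begin{equation*}
\mathbb{E}\!\left[\exp\!\left(-\tfrac{\|x-Z\|^2}{2\alpha_t^2\sigma_0^2}\right)\right] \;\ge\; \exp\!\left(-\tfrac{\mathbb{E}\|x-Z\|^2}{2\alpha_t^2\sigma_0^2}\right).
\end{equation*}
Setting $\mu_t:=\mathbb{E}[Z]=\beta_t\,\mathbb{E}[X_1]$ and $V_t:=\mathbb{E}[\|Z\|^2]=\beta_t^2\,\mathbb{E}[\|X_1\|^2]$, both finite by compact support of $p_1$, the algebraic identity $\mathbb{E}\|x-Z\|^2=\|x\|^2-2x\!\cdot\!\mu_t+V_t=\|x-\mu_t\|^2-\|\mu_t\|^2+V_t$ produces exactly the numerator in the claimed lower bound, with $C_-=(2\pi\alpha_t^2\sigma_0^2)^{-d/2}$. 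Positivity of the constants and finiteness of $\mu_t,V_t,R_t$ follow from $\alpha_t>0$ on $[0,1)$ and compactness of $K$.

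The main technical subtlety is recognizing that the seemingly strange algebraic form of the lower envelope — in particular the extra $+\|\mu_t\|^2$ inside the numerator — is not an accident but precisely the rewriting of Jensen's bound using the second-moment decomposition of $\mathbb{E}\|x-Z\|^2$. Once this identity is spotted, both envelopes drop out of one-line applications of Cauchy--Schwarz and Jensen, respectively. A minor formal check is that the argument covers the boundary $t=0$, where $\beta_0=0$ forces $\mu_0=V_0=R_0=0$ and both envelopes collapse to the exact Gaussian $p_0$.
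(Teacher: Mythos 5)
Your proof is correct and follows essentially the same route as the paper's: both start from the convolution $p_t = (\text{density of }\alpha_t X_0) * (\text{density of }\beta_t X_1)$, use Cauchy--Schwarz and the support radius $R_t$ for the upper envelope, and Jensen's inequality plus the second-moment decomposition of $\mathbb{E}\|x-Z\|^2$ for the lower envelope. The only cosmetic difference is that the paper factors out $\N_t(x)$ and bounds the ratio $p_t/\N_t$, whereas you bound the convolution integral directly; the algebra is identical.
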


\begin{proof}Let $\N_t(x) = \N(x; 0, \sigma_t^2 I)$ be the density of $\alpha_t X_0$, where $\sigma_t^2 = \alpha_t^2 \sigma_0^2$. Let $\rho_t(y)$ be the density of $Y_t = \beta_t X_1$. Since $p_1$ is compactly supported, $\rho_t$ is supported on a compact set $\Omega_t$. The path density is the convolution $p_t = \N_t * \rho_t$.We analyze the ratio $p_t(x)/\N_t(x)$:
$$
\frac{p_t(x)}{\N_t(x)} = \int_{\Omega_t} \frac{\N_t(x-y)}{\N_t(x)} \rho_t(y) dy = \int_{\Omega_t} \exp\left( \frac{2x\cdot y - \|y\|^2}{2\sigma_t^2} \right) \rho_t(y) dy = \E_{Y \sim \rho_t} \left[ \exp\left( \frac{2x\cdot Y - \|Y\|^2}{2\sigma_t^2} \right) \right].
$$

\textbf{Upper Bound:} Let $R_t = \sup_{y \in \Omega_t} \|y\| < \infty$. Using Cauchy-Schwarz, $2x \cdot y - \|y\|^2 \le 2\|x\|R_t$. Thus,
$$
p_t(x) \le \N_t(x) \exp\left(\frac{2\|x\|R_t}{2\sigma_t^2}\right) = \frac{1}{(2\pi\sigma_t^2)^{d/2}} \exp\left( \frac{-\|x\|^2 + 2\|x\|R_t}{2\sigma_t^2} \right).
$$
Completing the square in the exponent yields the form in the proposition statement.

\textbf{Lower Bound:} We apply Jensen's inequality ($\E[e^Z] \ge e^{\E[Z]}$). Let $\mu_t = \E[Y]$ and $V_t = \E[\|Y\|^2]$.
$$
\frac{p_t(x)}{\N_t(x)} \ge \exp\left( \E_{Y \sim \rho_t} \left[ \frac{2x\cdot Y - \|Y\|^2}{2\sigma_t^2} \right] \right) = \exp\left( \frac{2x\cdot \mu_t - V_t}{2\sigma_t^2} \right).
$$
Multiplying by $\N_t(x)$ and rearranging terms yields the lower bound.
\end{proof}

\begin{remark}The compact support assumption is crucial. It ensures $R_t < \infty$ (valid upper bound) and the existence of all moments $\mu_t, V_t$ (valid lower bound). For heavy-tailed target distributions (e.g., Cauchy), these moments may not exist, invalidating the lower bound.
\end{remark}

\begin{theorem} [Integrability Preservation Condition for Compactly Supported Targets]
    For each $i\in\{1,\dots,n\}$, let $\{q^{(i)}_t\}_{t\in[0,1]}$ be probability paths in $\mathbb{R}^{d_i}$ generated by $X^{(i)}_t=\alpha^{(i)}_tX^{(i)}_0+\beta^{(i)}_tX^{(i)}_1$ where $X^{(i)}_0\sim q^{(i)}_0=\N(0,I)$ and $X^{(i)}_1\sim q^{(i)}_1$ has compact support. Let $d:=\max_id_i$ and $\gamma_i(t) \in \mathbb{R}$ for $t\in [0,1]$.

    Let $\pi_i:\R^d\to\R^{d_i}$ be projections onto coordinate sets $I_i$\footnote{We can write $\pi_i(x_1,\dots,x_d)=(x_{k_1},\dots,x_{k_{d_i}})$. Let $I_i:=\{k_1,\dots,k_{d_i}\}$ be the set of coordinate indices that $\pi_i$ projects onto. Lifted densities are $\tilde{q}^{(i)}_t=q^{(i)}_t\circ \pi_i$.}. Define the lifted product $$h_t(x) := \prod_{i=1}^n (\tilde{q}^{(i)}_t(\pi_i(x)))^{\gamma_i(t)}$$
    
    If $h_1(x)$ is integrable and for every coordinate $k \in \{1,\dots,d\}$ and all $t \in [0,1)$,
    \begin{equation}
    C_k(t) := \sum_{i:\,k \in I_i} \frac{\gamma_i(t)}{(\alpha^{(i)}_t)^2} > 0,
        \label{eq:app_criterion_compactly_supported}
    \end{equation}
    then $\{h_t\}_{t \in [0,1]}$ has the path existence property (i.e., $h_t\in L^1(\R^d)$ for all $t\in[0,1]$).
    
    Conversely, if there exists a coordinate $k^*\in\{1,\dots,d\}$ and $t^* \in [0,1)$ such that $C_{k^*}(t^*)<0$, then $\{h_t\}_{t \in [0,1]}$ is not integrable at $t^*$ (Marginal Path Collapse).
    \label{thm:integrability_preservation_condition_for_compactly_supported}
\end{theorem}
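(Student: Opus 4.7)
The plan is to establish both directions by applying \cref{prop:gaussian_to_cmp_supp_density} independently to each expert path $\{q_t^{(i)}\}_{t\in[0,1]}$, obtaining two Gaussian envelopes, lifting them through the projection $\pi_i$, and then multiplying across experts according to the signs of $\gamma_i(t)$. Each envelope depends only on $\|\pi_i(x)\|^2 = \sum_{k\in I_i} x_k^2$, so after taking logarithms and swapping the sum over experts with the sum over coordinates, the leading quadratic form of $\log h_t(x)$ becomes diagonal with coefficient $-\tfrac{1}{2}C_k(t)$ on the $k$-th axis. This diagonal structure is precisely what makes the pointwise criterion $C_k(t)>0$ both necessary and sufficient.

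For sufficiency, I would use the upper envelope whenever $\gamma_i(t)>0$ and the lower envelope whenever $\gamma_i(t)<0$, so that in either case the factor $(\tilde q_t^{(i)})^{\gamma_i(t)}$ is dominated from above. Summing the log-bounds and reindexing yields
\begin{equation*}
\log h_t(x) \;\leq\; c_t \;-\; \tfrac{1}{2}\sum_{k=1}^{d} C_k(t)\, x_k^2 \;+\; L_t(x),
\end{equation*}
where $L_t(x) = O(\|x\|)$ collects sublinear terms from $\|\pi_i(x)\|$ and linear terms from $\mu_t^{(i)}\cdot\pi_i(x)$. When $C_k(t)>0$ for every $k$, the matrix $A_t = \mathrm{diag}(C_1(t),\dots,C_d(t))$ is positive definite, completing the square reduces the bound to a shifted Gaussian, and \cref{lemma:integrabilityofratiooftwogaussiancs} then gives $h_t \in L^1(\R^d)$ for all $t\in[0,1)$. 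The endpoint $t=1$ is included by the hypothesis that $h_1\in L^1$.

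For necessity, I would reverse the envelope choices (lower for $\gamma_i(t)>0$, upper for $\gamma_i(t)<0$), producing a lower bound
\begin{equation*}
\log h_{t^*}(x) \;\geq\; c'_{t^*} \;-\; \tfrac{1}{2}\sum_{k=1}^{d} C_k(t^*)\, x_k^2 \;-\; L'_{t^*}(x)
\end{equation*}
with the same diagonal leading form. If $C_{k^*}(t^*)<0$, then freezing the other coordinates inside a bounded slab, the right-hand side grows like $+\tfrac{1}{2}|C_{k^*}(t^*)|\,x_{k^*}^2$ along the $x_{k^*}$ axis, so one-dimensional integration already blows up; equivalently, \cref{lemma:integrabilityofratiooftwogaussiancs} applied to the non-positive-definite quadratic form forces $\int h_{t^*}(x)\,dx=\infty$.

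The main technical obstacle will be handling the remainders $L_t$ and $L'_t$, since \cref{prop:gaussian_to_cmp_supp_density} produces envelopes involving $\|\pi_i(x)\|$ rather than a genuine linear functional $b^{\top}x$. These can be absorbed via the elementary inequality $2R\|y\|\leq \epsilon\|y\|^2 + R^2/\epsilon$ for arbitrarily small $\epsilon$, which perturbs the diagonal coefficients by an amount that can be made much smaller than $\min_k |C_k(t)|$ and therefore does not affect their signs. A secondary bookkeeping point is that experts with $I_i\subsetneq \{1,\dots,d\}$ contribute only to coordinates $k\in I_i$; this is already encoded in the inner sum defining $C_k(t)$ and needs only care in swapping the order of summation.
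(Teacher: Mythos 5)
Your proposal follows essentially the same route as the paper's proof: apply Proposition~\ref{prop:gaussian_to_cmp_supp_density} to each expert with the sign-of-$\gamma_i$-dependent envelope choice, multiply, reindex the sums to extract the diagonal quadratic form with coefficients $-\tfrac12 C_k(t)$, and invoke Lemma~\ref{lemma:integrabilityofratiooftwogaussiancs} (in the sufficiency case, after completing the square; in the necessity case, via the slab/tube divergence argument). The only visible difference is that you flag, and propose to fix via $\epsilon$-absorption, the fact that the upper envelope produces $\|\pi_i(x)\|$ rather than a genuine linear functional $b^\top x$ — a detail the paper glosses over when it writes the exponent as $-\tfrac12 x^\top\Lambda_t x + O(\|x\|)$ and cites the lemma directly; your version is a touch more careful on that point but is not a different argument.
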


\begin{proof}
\textbf{1. Sufficiency for path existence.}

\textbf{Endpoint ($t=1$):} Integrability of $h_1$ holds by assumption.

\textbf{Interval $t \in [0,1)$:} We construct an integrable upper bound. Applying Proposition~\ref{prop:gaussian_to_cmp_supp_density} to the lifted densities $\tilde{q}^{(i)}_t(x) = q^{(i)}_t(\pi_i(x))$, we use the upper bound formula for $\gamma_i(t) > 0$ and the lower bound formula for $\gamma_i(t) < 0$ (since negative exponents reverse the inequality).
$$
h_t(x) \le \prod_{\gamma_i > 0} \left( C_{+,i} e^{\frac{-\|\pi_i(x)\|^2 + 2\|\pi_i(x)\|R_t^{(i)}}{2(\alpha_t^{(i)})^2}} \right)^{\gamma_i} \prod_{\gamma_i < 0} \left( C_{-,i} e^{\frac{-\|\pi_i(x) - \mu_t^{(i)}\|^2 + \|\mu_t^{(i)}\|^2 - V_t^{(i)}}{2(\alpha_t^{(i)})^2}} \right)^{\gamma_i}
$$
The integrability is determined by the coefficient of the quadratic term $\|x\|^2$ in the exponent. Expanding $\|\pi_i(x)\|^2 = \sum_{k \in I_i} x_k^2$, the aggregate quadratic term in the exponent is:
\begin{equation}
    \sum_{i=1}^n \gamma_i(t) \left( - \frac{\|\pi_i(x)\|^2}{2(\alpha^{(i)}_t)^2} \right) = - \frac{1}{2} \sum_{i=1}^n \sum_{k \in I_i} \frac{\gamma_i(t)}{(\alpha^{(i)}_t)^2} x_k^2 = - \frac{1}{2} \sum_{k=1}^d \left( \sum_{i: k \in I_i} \frac{\gamma_i(t)}{(\alpha^{(i)}_t)^2} \right) x_k^2.
    \label{eq:app_quad_exp}
\end{equation}
By Condition \ref{eq:app_criterion_compactly_supported}, the coefficient for every $x_k^2$ is strictly negative. Thus, the upper bound behaves as $\exp(-\frac{1}{2} x^\top \Lambda_t x + O(\|x\|))$ with positive definite $\Lambda_t$, ensuring integrability ({Lemma~\ref{lemma:integrabilityofratiooftwogaussiancs}}).

\textbf{2. Sufficiency for Marginal Path Collapse at $t^*$.}

We construct a diverging lower bound. Applying Proposition \ref{prop:gaussian_to_cmp_supp_density} using the lower bound formula for $\gamma_i(t)>0$ and the upper bound formula for $\gamma_i(t)<0$, we derive:
$$
h_t(x) \ge \prod_{\gamma_i > 0} \left( C_{-,i} e^{\frac{-\|\pi_i(x) - \mu_t^{(i)}\|^2 + \|\mu_t^{(i)}\|^2 - V_t^{(i)}}{2(\alpha_t^{(i)})^2}} \right)^{\gamma_i} \prod_{\gamma_i < 0} \left( C_{+,i} e^{\frac{-\|\pi_i(x)\|^2 + 2\|\pi_i(x)\|R_t^{(i)}}{2(\alpha_t^{(i)})^2}} \right)^{\gamma_i} 
$$
The quadratic term in exponent is precisely \eqref{eq:app_quad_exp} since the quadratic terms are the same for both upper and lower bounds in Proposition \ref{prop:gaussian_to_cmp_supp_density}. 
Thus, the exponent can be written as $-\tfrac{1}{2}x^\top A x + D(x)$, where $A$ is a diagonal matrix with entries $A_{jj} = \sum_{i: k \in I_i} \frac{\gamma_i(t)}{(\alpha^{(i)}_t)^2}$, and $D(x)$ collects the linear and constant terms. At time $t^*$, since $A_{k^*k^*} < 0$, the matrix $A$ is not positive definite ($A \nsucc 0$), and {by Lemma~\ref{lemma:integrabilityofratiooftwogaussiancs}, the lower bound diverges, leading to Marginal Path Collapse.}
\end{proof}

\begin{remark}
    As a consequence of Theorem \ref{thm:integrability_preservation_condition_for_compactly_supported} (sufficiency), the path $p^*_t=h_t/Z_t,\; t\in[0,1]$ with $Z_t=\int_{\R^d}h_t(x)dx$ is well defined on $\R^d$, establishing the conditions for ACE (Theorem \ref{thm:heterogeneous_fkc}).
\end{remark}

\begin{remark}
    For the boundary case where $C(t)=\min_k C_k(t)=0$, the dominant quadratic term (Eq.~\ref{eq:app_quad_exp}) vanishes, so sub-quadratic terms determine existence or collapse. As we show below in Example ~\ref{example:boundary_case}, $C(t)=0$ can yield either outcome. Hence, $C(t)>0$ and $C(t)<0$ represent the \textbf{sharpest possible universal conditions} under our assumptions.
\end{remark}

\begin{example}[The Boundary Case $C(t)=0$]
    Consider a three-expert 1D unnormalized path $$h_t(x)=\frac{q^{(1)}_t(x)q^{(2)}_t(x)}{q^{(3)}_t(x)}$$ where each expert $q^{(i)}_t$ defines a probability path between noise $\mathcal{N}(0,1)$ and data. Experts 1, 2 target the compactly supported uniform distribution $\mathcal{U}[-a,a]$ with noise schedules $\alpha_t,\beta_t$ and expert 3 targets $\mathcal{U}[-b,b]$ with noise schedules $\tilde{\alpha_t}, \tilde{\beta_t}$. One concrete example is $\alpha_t=1-t^2,\tilde{\alpha_t}=1-t,\beta_t=\tilde{\beta_t}=t$. We assume $a\le b$. 
    \begin{itemize}
        \item \textbf{Valid Boundaries:} Both $h_0,h_1$ are normalizable since $h_0=\mathcal{N}(0,1)$ and at $t=1$, $h_1(x)=\frac{b}{2a^2}\mathrm{1}_{[-a,a]}(x)$.
        \item \textbf{Intermediate State:} At $t^*=\sqrt{2}-1,\alpha_{{t^*}}^2=2\tilde{\alpha}_{t^*}^2$, causing $C(t^*)=0$.
        \item \textbf{Tail Expansion:} By exact convolution (Lemma~\ref{lem:sum_of_indep_rv}) and the Mills ratio $\left( \bar\Phi(z)/(\frac{e^{-z^2/2}}{\sqrt{2\pi}}) \sim 1/z \right)$, the experts' tails decay as $$p_t(x) \sim \frac{C_t}{|x|-a\beta_t} \exp(-\frac{(|x|-a\beta_t)^2}{2\alpha_t^2})$$ where $\sim$ means that the quotient of the two functions converges to 1 as $|x|\to\infty$ \citep{small2010expansions}. For $q^{(3)}$ we may simply replace $a$ with $b$. Canceling quadratic terms at $t^*$, the ratio's tail behaves as (up to a constant factor):
        $$h_{t^*}(x)\sim \frac{1}{|x|}\exp \left( \frac{2(a\beta_{t^*}-b\tilde\beta_{t^*})}{\alpha_{t^*}^2}|x| \right) \text{ as } |x|\to\infty$$
    \end{itemize}
    Near $x=0$, $h_{t^*}$ is finite and positive. Integrability depends on the tails:
    \begin{enumerate}
        \item \textbf{Collapse ($a\beta_{t^*}=b\tilde\beta_{t^*}$):} Exponent is zero. Tails decay as $1/|x|$, which diverges logarithmically.
        \item \textbf{Existence ($a\beta_{t^*} < b\tilde\beta_{t^*}$):} Exponent is strictly negative ($-c|x|$). Tails decay exponentially ($\sim \frac{1}{|x|} e^{-c|x|}$), making it integrable.
    \end{enumerate}
    \label{example:boundary_case}
\end{example}

\subsection{Adaptive Exponents and Bump Function Corrections}

If the criterion in Theorem~\ref{thm:integrability_preservation_condition_for_compactly_supported} is violated, we can adapt the exponents $\gamma_i(t)$ to restore integrability. Since most schedulers $\alpha_t\to 0$ as $t\to1$, analyzing the criterion as $t\to1$ involves singularities. We take a practical approach by considering a discretization of the time interval \(0=t_0<\dots<t_{M}=t_{\mathrm{end}}<1\). 

The standard ratio-of-densities formulation often encounters a singularity at $t=1$ where $\alpha_1=0$. By truncating the steering phase at $t_\text{end}$ (e.g., $0.999$), Theorem~\ref{thm:adaptive_exponents_appendix} ensures the density $p_{t_\text{end}}$ is well-defined.The final transition from $p_{t_\text{end}}$ to $p_1$ is achieved via a single discretization step (Euler-Maruyama):$$X_1 = X_{t_\text{end}} + v^*_{t_\text{end}}(X_{t_\text{end}}) \cdot (1 - t_\text{end})$$Since $1 - t_\text{end}$ is negligible ($\approx 10^{-3}$), the error introduced by assuming a constant drift over this interval is bounded by $\mathcal{O}((1-t_\text{end})^2)$, which is numerically insignificant compared to the stability gained by avoiding the singularity. This effectively "jumps" over the collapse region.

\begin{theorem} [Adaptive Exponents]
    Let $\alpha^{(i)}_t$ be differentiable on $[0,1]$ with $\alpha^{(i)}_0=1, \alpha^{(i)}_1=0$, and $\alpha^{(i)}_t > 0$ for $t \in (0,1)$. Let $\gamma_i(t)$ be the linear interpolation of fixed boundary values $\gamma_i(0), \gamma_i(1)$ such that $$S(t,\{ \gamma_i \}_i) := \sum_{i=1}^n \frac{\gamma_i(t)}{(\alpha^{(i)}_t)^2}$$ satisfies $S(0,\{\gamma_i\}_i) > 0$. Then, there exist differentiable functions $\tilde\gamma_i(t)$ satisfying the boundary conditions such that $S(t,\{\tilde \gamma_i\}_i) > 0$ for all $t \in [0,t_\text{end}]$.
    \label{thm:adaptive_exponents_appendix}
\end{theorem}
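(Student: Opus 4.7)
My plan is to use a single-index bump-function correction: fix any index $j$, leave $\tilde\gamma_i = \gamma_i$ for $i\neq j$, and set
\[
\tilde\gamma_j(t) \;=\; \gamma_j(t) \;+\; B\, t(1-t),\qquad B>0,
\]
for a constant $B$ to be chosen at the end. Because $t(1-t)$ vanishes at $t=0$ and $t=1$, the boundary conditions $\tilde\gamma_j(0)=\gamma_j(0)$ and $\tilde\gamma_j(1)=\gamma_j(1)$ are preserved by construction, and $\tilde\gamma_j$ is differentiable. Denoting $\psi(t) := t(1-t)/(\alpha^{(j)}_t)^2$, the corrected criterion reads
\[
\tilde S(t,\{\tilde\gamma_i\}_i) \;=\; S(t,\{\gamma_i\}_i) + B\,\psi(t),
\]
with $\psi(t)\ge 0$ on $[0,1)$ and strictly positive on $(0,1)$. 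The task then reduces to choosing $B$ so that $\tilde S>0$ throughout $[0,1)$.

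Next, I would split $[0,1)$ into a boundary layer and a compact interior. Using the hypotheses $S(0)>0$ and $\lim_{t\to 1^-}S(t)>0$, together with continuity of $S$ on $(0,1)$ (which follows from the strict positivity of each $\alpha^{(i)}_t$ on $(0,1)$ and the fact that the linear $\gamma_i$ are smooth), I would pick some $\delta\in(0,\tfrac{1}{2})$ small enough that $S(t)\ge c>0$ on $[0,\delta]\cup[1-\delta,1)$. On this boundary set $\psi\ge 0$, so $\tilde S(t)\ge S(t)\ge c$ for \emph{any} $B\ge 0$. On the compact middle interval $[\delta,1-\delta]$, continuity furnishes a (possibly negative) lower bound $S(t)\ge -M$ and, since $t(1-t)$ and $(\alpha^{(j)}_t)^{-2}$ are both strictly positive there, a uniform lower bound $\psi(t)\ge m>0$. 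Any $B>M/m$ then forces $\tilde S(t)\ge -M+Bm>0$ on $[\delta,1-\delta]$, completing the construction.

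The hard part, I expect, will be the boundary-layer justification near $t\to 1^-$: we only know that the \emph{aggregate} $S(t)$ has a positive limit, even though each term $\gamma_i(t)/(\alpha^{(i)}_t)^2$ can individually diverge as $\alpha^{(i)}_t\to 0$. I would therefore be careful that the argument lives on the half-open interval $[0,1)$ and only uses the one-sided limit, since $S$ need not extend continuously to $t=1$. Beyond that, everything reduces to an elementary compactness estimate, and the same construction lifts to the coordinate-wise criterion $C_k(t)$ of Theorem~\ref{thm:main_compactly_supported} either by applying the bump to an index $j$ whose coordinate set $I_j$ covers all affected $k$, or by repeating the argument coordinate by coordinate and taking $B$ to be the maximum of the required constants.
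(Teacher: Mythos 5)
Your proof is correct and follows essentially the same strategy as the paper's: add a single bump $B\,t(1-t)$ to one exponent, observe that it preserves boundary values and can only increase $S$, and use continuity plus compactness on an interior interval $[\delta,1-\delta]$ to pick $B$ large enough to push the criterion positive where it might fail. The paper's version differs only in cosmetic details (it first checks whether the uncorrected linear interpolation already suffices, and selects the bumped index $j$ by a heuristic rather than arbitrarily), neither of which affects correctness.
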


\begin{proof}
    We prove by constructing a working solution. Note that the solution is not unique.

    \textbf{Step 1.} First check if the default solution (linear interpolation of boundary values), $\gamma_i(t)=(1-t)\gamma _i(0)+t\gamma_i(1)$ for all $i$ satisfies $S(t,\{ \gamma_i \}_i)>0 \;\forall t \in [0, t_\text{end}]$. If so, we are done.

    \textbf{Step 2.} If the default $\gamma_i$ fails, it is because there exists some $t$ where the criterion drops below zero.
    \begin{enumerate}
        \item Since $S(0,\{\gamma_i\}_i) > 0$ by assumption and $S$ is continuous in $t$, there exists a small time $\delta > 0$ such that $S(t,\{\gamma_i\}_i) > 0$ for all $t \in [0, \delta]$.
        Thus, the failure must occur in the compact interval $[\delta, t_\text{end}]$. Let $S_\text{min}$ be the global minimum on this interval:
        $$S_\text{min}:=\min _{t\in[\delta,t_\text{end}]} S(t,\{ \gamma_i \}_i).$$
        If $S_\text{min} > 0$, we are done. Assume $S_\text{min} \le 0$.

        \item Define the \textbf{bump function} $b(t)$ such that $b(0)=b(1)=0$ and $b(t) > 0$ on $(0,1)$. We combine a quadratic bump $Q(t)=t(1-t)$ with a linear bump $L_\tau(t)$ using coefficients $B_1, B_2 \ge 0$:
        $$b(t)=B_1Q(t)+B_2L_\tau(t)$$
        where $L_\tau(t) = \min(t, \tau(1-t))$ is a linear bump that increases until $t=\frac{\tau}{1+\tau}$ and drops thereafter. By choosing a sufficiently large $\tau$ (specifically $\tau > \frac{t_\text{end}}{1-t_\text{end}}$), we ensure the peak occurs after $t_\text{end}$, so $L_\tau(t) = t$ for all $t \in [0, t_\text{end}]$.
        \begin{figure}[h]
            \centering
            \includegraphics[width=0.5\linewidth]{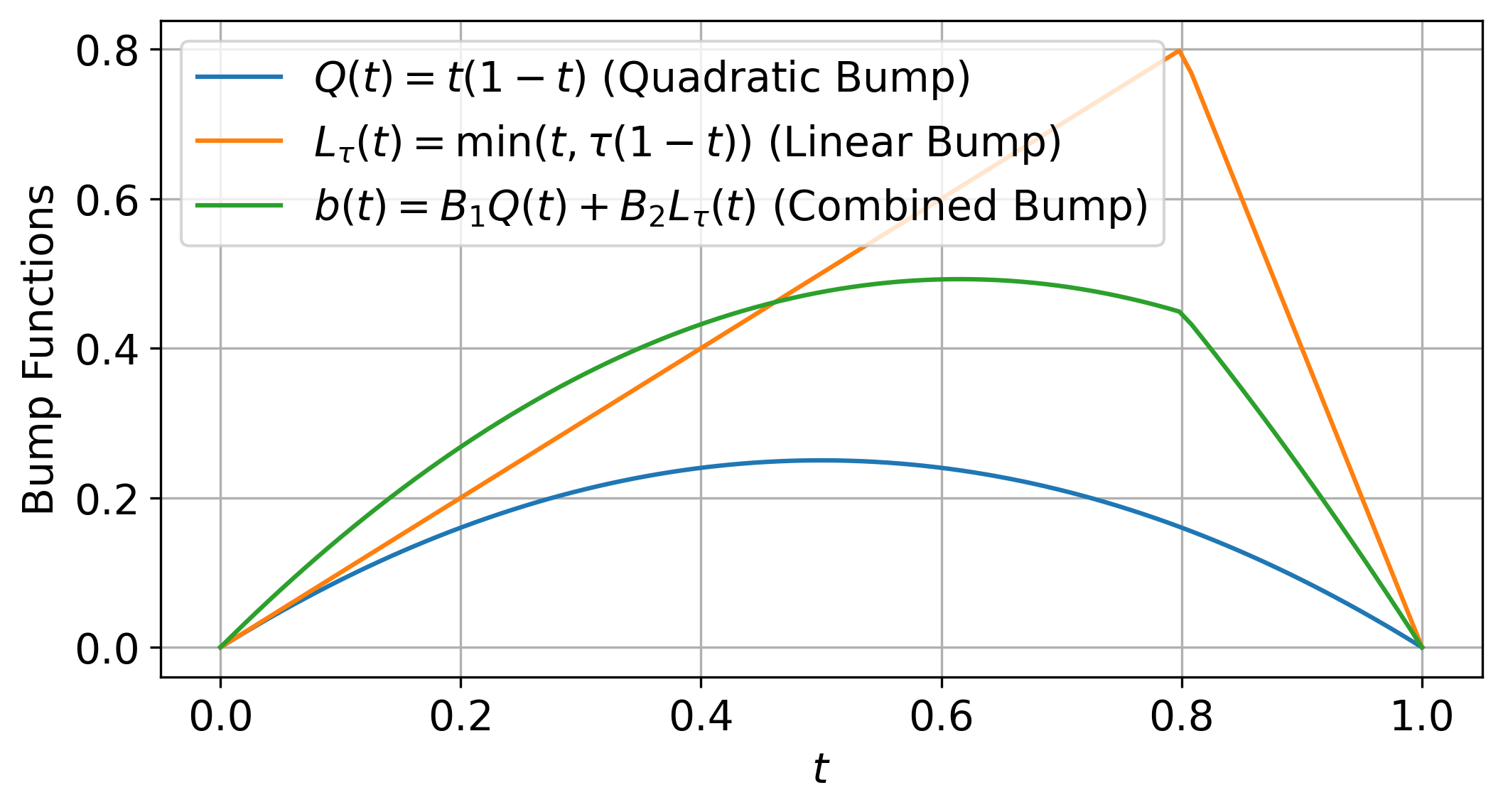}
            \caption{Illustration of the bump function $b(t)=B_1Q(t)+B_2L_\tau(t)$.}
            \label{fig:bump_illustration}
        \end{figure}
        \item Choose an index $j$ where $\gamma_j > 0$ to apply the correction.

        \item Since $\alpha^{(j)}_t$ is bounded and $Q(t), L_\tau(t)$ is strictly positive on $[\delta, t_\text{end}]$, the following minimum exists and is strictly positive:
        $$
        c_1 := \min_{t \in [\delta, t_\text{end}]} \frac{Q(t)}{(\alpha^{(j)}_t)^2} > 0, \quad  c_2 := \min_{t \in [\delta, t_\text{end}]} \frac{L_\tau(t)}{(\alpha^{(j)}_t)^2} > 0. 
        $$

        \item Finally, define the \textbf{adaptive exponent} via
        $$
        \tilde{\gamma} _j(t):= \gamma _j(t)+\underbrace{B_1Q(t)+B_2L_\tau(t)}_{b(t)}
        $$
        where we choose the coefficients $B_1, B_2$ large enough to counteract the worst-case collapse:
        $$B_1c_1+B_2c_2 = |S_\text{min}| + \epsilon$$
        for any $\epsilon > 0$. All other functions are preserved: $\tilde\gamma_i = \gamma_i$ for $i\neq j$. Hyperparameter choices are studied in Tables~\ref{tab:sensitivity_w1}--\ref{tab:sensitivity_mmd}.
    \end{enumerate}

    \textbf{Step 3. Verification.} We verify that ${S}(t,\{\tilde \gamma_i\}_i)$ is strictly positive for all $t\in[0, t_\text{end}]$.
    \begin{itemize}
        \item For $t \in [0, \delta)$: $S(t) > 0$ by the initial continuity argument, and since $b(t) \ge 0$, the sum increases, preserving positivity.
        \item For $t \in [\delta, t_\text{end}]$:
        $$
        \begin{aligned}
        {S}(t,\{\tilde\gamma_i\}_i) &= S(t,\{\gamma_i\}_i) + \frac{B_1Q(t)+B_2L_\tau(t)}{(\alpha^{(j)}_t)^2} \\
        &\geq S_\text{min} + |S_\text{min}| + \epsilon \\
        &= \epsilon > 0.
        \end{aligned}
        $$
    \end{itemize}
    Thus, the modified path satisfies the existence criterion on the entire simulation interval.
\end{proof}

\begin{corollary}[Extension to Partial Support]
    When experts act on heterogeneous supports, we apply the logic of  Theorem~\ref{thm:adaptive_exponents_appendix} repeatedly across coordinate subsets to guarantee the coordinate-wise criterion $C_k(t)>0$ for all coordinates $k$. Apply the following steps:
    \begin{enumerate}
        \item \textbf{Subset Application:} Let $I_{(k)}=\{i:k\in I_i\}$ be the index set of the experts acting on coordinate $k$. $C_k(t)$ is exactly the scalar sum $S(t,\{\gamma_i\}_{i\in I_{(k)}})$.
        \item \textbf{Base Case:} Since Theorem~\ref{thm:adaptive_exponents_appendix} assumes $C(0)=\min_k C_k(0)>0$ and $\alpha^{(i)}_0=1$, we have $S(0,\{\gamma_i\}_{i\in I_{(k)}})=C_k(0)>0\forall k$. 
        \item \textbf{Coordinate-wise Bumps:} Applying Theorem~\ref{thm:adaptive_exponents_appendix} to $I_{(k)}$ yields nonnegative bumps $b_{i,k}(t)$ ensuring $C_k(t) > 0 \;\forall t\in[0,t_\text{end}]$.
        \item \textbf{Global Fix:} Adding a positive bump strictly increases $S$. Thus, for each expert $i$, summing the required bumps $\tilde{\gamma}_i(t) = \gamma_i(t) + \sum_k b_{i,k}(t)$ guarantees $C_k(t) > 0$ simultaneously for all $k$.
    \end{enumerate}
\end{corollary}

\begin{remark}
    In our experiments, modifying a single index $j$ sufficed (one expert covered all coordinates).
\end{remark}

\subsection{Gaussian Mixture Models}
\label{app:gmm}

We investigate integrability preservation for Gaussian Mixture Models (GMMs). Consider probability paths $\{p_t\}_{t\in[0,1]}$ generated by stochastic interpolants (Definition~\ref{def:general_stochastic_path_for_stochastic_interpolants}), where the initial distribution $p_0$ is standard normal $\mathcal{N}(0, I)$ and the final distribution $p_1$ is a GMM.\begin{definition}[Gaussian Mixture Model]A \textbf{Gaussian Mixture Model (GMM)} $p(x)$ in $\R^d$ is a probability density of the form:\begin{equation}p(x) = \sum_{j=1}^{J} w_j \N(x ; \mu_j, \Sigma_j)\end{equation}where weights $w_j > 0$ sum to 1, and each covariance matrix $\Sigma_j$ is positive definite.\end{definition}

\textbf{Product of GMMs.} Similar to the compactly supported target case, integrability at the boundaries does not automatically imply integrability at intermediate times. A counterexample is provided in the main text (Eq.~\ref{eq:gaussian_counterexample} and Fig.~\ref{fig:path_collapse_2d_gaussian}), demonstrating that intermediate paths can diverge even when endpoints are valid.

To guarantee integrability for ratios of product-of-GMMs, we must enforce a stronger structural condition. We first derive precise exponential bounds.

\begin{lemma}[Exponential Bounds for GMMs]
\label{lem:gmm-bounds}
Let $q(x)=\sum_{j=1}^{J} c_j \mathcal{N}(x;\mu_j,\Sigma_j)$. Define the extremal eigenvalues $\lambda_{\max}(q) := \max_j \lambda_{\max}(\Sigma_j)$ and $\lambda_{\min}(q) := \min_j \lambda_{\min}(\Sigma_j)$. Then there exist finite constants $K_\pm, L_\pm > 0$ such that for all $x \in \R^d$:
\begin{align}
q(x) &\le K_+ \exp\left(-\frac{\|x\|^2}{2\lambda_{\max}(q)} + L_+\|x\|\right), \label{eq:gmm-upper}\\ q(x) &\ge K_- \exp\left(-\frac{\|x\|^2}{2\lambda_{\min}(q)} - L_-\|x\|\right).
\label{eq:gmm-lower}
\end{align}
\end{lemma}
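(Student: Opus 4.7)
The plan is to reduce the bound for the mixture $q(x)$ to elementary exponential bounds for a single Gaussian component and then aggregate across the mixture. For any component $\mathcal{N}(x;\mu_j,\Sigma_j)$, the quadratic form in the exponent can be sandwiched by the Rayleigh quotient bounds $\|x-\mu_j\|^2/\lambda_{\max}(\Sigma_j) \le (x-\mu_j)^\top \Sigma_j^{-1}(x-\mu_j) \le \|x-\mu_j\|^2/\lambda_{\min}(\Sigma_j)$. Combined with the elementary inequalities $(\|x\|-\|\mu_j\|)^2 \le \|x-\mu_j\|^2 \le (\|x\|+\|\mu_j\|)^2$, these reduce each Gaussian component to an isotropic exponential bound in $\|x\|$, which is all we need.

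First I would prove the upper bound \eqref{eq:gmm-upper}. For each component, apply the Rayleigh lower bound on the quadratic form, the reverse triangle inequality $\|x-\mu_j\|^2 \ge \|x\|^2 - 2\|x\|\|\mu_j\|$, and the eigenvalue comparison $\lambda_{\max}(\Sigma_j) \le \lambda_{\max}(q)$ to obtain
$$
\mathcal{N}(x;\mu_j,\Sigma_j) \;\le\; A_j \exp\!\left(-\frac{\|x\|^2}{2\lambda_{\max}(q)} + \frac{\|\mu_j\|}{\lambda_{\max}(\Sigma_j)}\,\|x\|\right),
$$
with $A_j = (2\pi)^{-d/2}|\Sigma_j|^{-1/2}$. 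Summing with weights $c_j$, factoring out the common Gaussian envelope, and dominating each $\exp(L_{+,j}\|x\|)$ by $\exp(L_+\|x\|)$ for $L_+ := \max_j \|\mu_j\|/\lambda_{\max}(\Sigma_j)$ (valid since $\|x\|\ge 0$) yields \eqref{eq:gmm-upper} with $K_+ = \sum_j c_j A_j$.

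For the lower bound \eqref{eq:gmm-lower}, I would discard all but one term in the mixture and work only with a component $j^\star$ achieving $\lambda_{\min}(\Sigma_{j^\star}) = \lambda_{\min}(q)$. Applying the Rayleigh upper bound on the quadratic form together with $\|x-\mu_{j^\star}\|^2 \le \|x\|^2 + 2\|x\|\|\mu_{j^\star}\| + \|\mu_{j^\star}\|^2$ produces
$$
q(x) \;\ge\; c_{j^\star} A_{j^\star} \exp\!\left(-\frac{\|x\|^2}{2\lambda_{\min}(q)} - \frac{\|\mu_{j^\star}\|}{\lambda_{\min}(q)}\|x\| - \frac{\|\mu_{j^\star}\|^2}{2\lambda_{\min}(q)}\right),
$$
and absorbing the $x$-independent constant $\exp(-\|\mu_{j^\star}\|^2/(2\lambda_{\min}(q)))$ into a prefactor $K_-$ gives \eqref{eq:gmm-lower} with $L_- = \|\mu_{j^\star}\|/\lambda_{\min}(q)$.

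The argument is essentially mechanical, and I do not anticipate a genuine obstacle; the only place where care is required is the \emph{direction} of the eigenvalue comparisons. For the upper bound one needs $\lambda_{\max}(\Sigma_j) \le \lambda_{\max}(q)$ so that $-\|x\|^2/(2\lambda_{\max}(\Sigma_j)) \le -\|x\|^2/(2\lambda_{\max}(q))$, while for the lower bound one needs $\lambda_{\min}(\Sigma_{j^\star}) \ge \lambda_{\min}(q)$ so that $-\|x\|^2/(2\lambda_{\min}(\Sigma_{j^\star})) \ge -\|x\|^2/(2\lambda_{\min}(q))$. Both hold directly from the definitions of $\lambda_{\max}(q)$ and $\lambda_{\min}(q)$, ensuring the universal envelopes claimed in the statement.
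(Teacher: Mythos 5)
Your proof is correct and follows essentially the same route as the paper's: sandwich the quadratic form with Rayleigh quotients, pass from $\|x-\mu_j\|$ to $\|x\|$ via the triangle inequality, compare eigenvalues to the global extrema, and for the lower bound discard all but one mixture component. The only cosmetic differences are that you keep $\lambda_{\max}(\Sigma_j)$ in the linear coefficient $L_{+,j}$ before maximizing (the paper replaces it by $\lambda_{\max}(q)$, giving a slightly tighter $L_+$), and that you pin $j^\star$ to the minimizing index whereas the paper notes the bound holds for any $j^*$; neither affects the validity of the argument.
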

\begin{proof}
\textbf{Upper Bound:} $q(x) \le \sum_{j} \mathcal{N}(x;\mu_j,\Sigma_j)$. For each component $j$, we bound the quadratic form in the exponent using Rayleigh quotients:$$(x-\mu_j)^\top \Sigma_j^{-1} (x-\mu_j) \ge \lambda_{\max}(\Sigma_j)^{-1}\|x-\mu_j\|^2 \ge \frac{1}{\lambda_{\max}(q)}(\|x\|^2 - 2\|x\|\|\mu_j\|).$$Substituting this into the Gaussian PDF, we define $L_+ = \max_j \frac{\|\mu_j\|}{\lambda_{\max}(q)}$ and collect constants into $K_+$.

\textbf{Lower Bound:} $q(x) \ge c_{j^*} \mathcal{N}(x;\mu_{j^*},\Sigma_{j^*})$ for any index $j^*$. We upper bound the quadratic form:
$$(x-\mu_{j^*})^\top \Sigma_{j^*}^{-1} (x-\mu_{j^*}) \le \lambda_{\min}(\Sigma_{j^*})^{-1}\|x-\mu_{j^*}\|^2 \le \frac{1}{\lambda_{\min}(q)}(\|x\|^2 + 2\|x\|\|\mu_{j^*}\| + \|\mu_{j^*}\|^2).$$
Substituting this yields the form in \cref{eq:gmm-lower} with $L_- = \frac{\|\mu_{j^*}\|}{\lambda_{\min}(q)}$.
\end{proof}

We now state the sufficient and necessary conditions for integrability of heterogeneous products.

\begin{theorem}[Integrability Preservation for Heterogeneous Products]
\label{thm:integrability_for_heterogeneous_product-of-gmms}
Let $\{q^{(i)}_t\}$ be GMM probability paths generated by $X^{(i)}_t = \alpha^{(i)}_t X^{(i)}_0 + \beta^{(i)}_t X^{(i)}_1$ with $X^{(i)}_0 \sim \mathcal{N}(0, I)$. Let $\pi_i: \R^d \to \R^{d_i}$ be projections onto coordinate index sets $I_i$, and define the composite function $g_t(x) := \prod_{i=1}^n (\tilde{q}^{(i)}_t(x))^{\gamma_i}$, where $\tilde{q}^{(i)}_t = q^{(i)}_t \circ \pi_i$.

For each coordinate $k \in \{1, \dots, d\}$ and time $t \in [0,1)$, we define two stability coefficients.
The \textbf{sufficiency criterion} $C^{\text{suff}}_k(t)$ is defined as:
\begin{equation}
\label{eq:criterion_suff}
    C^{\text{suff}}_k(t) := \sum_{i: k \in I_i, \gamma_i > 0} \frac{\gamma_i}{\lambda_{\max}(q^{(i)}_t)} + \sum_{i: k \in I_i, \gamma_i < 0} \frac{\gamma_i}{\lambda_{\min}(q^{(i)}_t)}.
\end{equation}
{The \textbf{necessity criterion} $C^{\text{nec}}_k(t)$ is defined as:
\begin{equation}
\label{eq:criterion_nec}
    C^{\text{nec}}_k(t) := \sum_{i: k \in I_i, \gamma_i > 0} \frac{\gamma_i}{\lambda_{\min}(q^{(i)}_t)} + \sum_{i: k \in I_i, \gamma_i < 0} \frac{\gamma_i}{\lambda_{\max}(q^{(i)}_t)}.
\end{equation}}

If $g_1(x)$ is integrable, then:
\begin{enumerate}
    \item \textbf{(Sufficiency)} If $C^{\text{suff}}_k(t) > 0$ for all $k$ and all $t \in [0,1)$, then $g_t(x)$ is integrable for all $t \in [0,1]$.
    \item {\textbf{(Necessity)} Conversely, if there exists a coordinate $k$ and time $t$ such that $C^{\text{nec}}_k(t) < 0$, then $g_t(x)$ is not integrable.}
\end{enumerate}
\end{theorem}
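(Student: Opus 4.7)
The plan is to mirror the architecture of Theorem~\ref{thm:integrability_preservation_condition_for_compactly_supported}: sandwich $g_t(x)$ between two explicit Gaussian--exponential envelopes obtained from Lemma~\ref{lem:gmm-bounds}, then read off the coefficient of the quadratic form $\|x\|^2$ in each envelope. The key observation is that raising the bounds in Lemma~\ref{lem:gmm-bounds} to a positive exponent preserves the direction of the inequality, while raising them to a negative exponent reverses it; so to get the \emph{tightest} upper (resp.\ lower) bound on the product $g_t$, one should pick \ref{eq:gmm-upper} for factors with $\gamma_i>0$ and \ref{eq:gmm-lower} for $\gamma_i<0$ (resp.\ vice versa). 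Since each $q^{(i)}_t$ is itself a GMM (convolution of $\mathcal{N}(0,(\alpha_t^{(i)})^2 I)$ with a scaled GMM), its covariance eigenvalues are finite and strictly positive on $[0,1)$, so $\lambda_{\max}(q^{(i)}_t)$ and $\lambda_{\min}(q^{(i)}_t)$ are well defined.

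For \textbf{sufficiency}, I would combine Lemma~\ref{lem:gmm-bounds} as
\[
    g_t(x) \le \Bigl(\prod_i K_{\mathrm{sgn}(\gamma_i),i}^{\gamma_i}\Bigr)
    \exp\!\Bigl(-\tfrac12 \sum_{i=1}^n \tfrac{\gamma_i}{\lambda^{(i)}_{t,*}} \|\pi_i(x)\|^2 + L(x)\Bigr),
\]
where $\lambda^{(i)}_{t,*}=\lambda_{\max}(q^{(i)}_t)$ when $\gamma_i>0$ and $\lambda_{\min}(q^{(i)}_t)$ when $\gamma_i<0$, and $L(x)$ collects the linear-in-$\|\pi_i(x)\|$ terms. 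Expanding $\|\pi_i(x)\|^2=\sum_{k\in I_i}x_k^2$ regroups the quadratic form coordinate-by-coordinate and gives exactly the coefficient $-\tfrac12 C^{\text{suff}}_k(t)$ on $x_k^2$. Positivity of every $C^{\text{suff}}_k(t)$ on $[0,1)$ makes the quadratic matrix diagonal and strictly positive definite, so Lemma~\ref{lemma:integrabilityofratiooftwogaussiancs} yields integrability on $[0,1)$; integrability at $t=1$ is given by hypothesis.

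For \textbf{necessity}, I swap the roles: apply \ref{eq:gmm-lower} for $\gamma_i>0$ and \ref{eq:gmm-upper} for $\gamma_i<0$, yielding
\[
    g_t(x) \ge K'\,\exp\!\Bigl(-\tfrac12 \sum_k C^{\text{nec}}_k(t)\,x_k^2 - L'(x)\Bigr),
\]
after the same coordinate-wise regrouping. If $C^{\text{nec}}_{k^*}(t^*)<0$, the diagonal quadratic form has a strictly negative entry at coordinate $k^*$, and the argument in the $(\Leftarrow)$ direction of Lemma~\ref{lemma:integrabilityofratiooftwogaussiancs} applies: integrating the lower bound over the tube $\{x:|x_j|\le r\text{ for }j\ne k^*\}$ diverges in the $x_{k^*}$ direction (the quadratic grows or at worst a linear term dominates), so $g_{t^*}\notin L^1(\R^d)$.

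The main obstacles are largely bookkeeping. First, I have to be careful that the direction of each inequality is preserved when raising to $\gamma_i$, which is why the two criteria $C^{\text{suff}}_k$ and $C^{\text{nec}}_k$ differ only by the swap $\lambda_{\max}\leftrightarrow\lambda_{\min}$; unlike the compactly supported case (Theorem~\ref{thm:integrability_preservation_condition_for_compactly_supported}), the GMM lower and upper envelopes have \emph{different} quadratic coefficients, which is precisely why sufficiency and necessity do not match. Second, linear and constant terms $L(x),L'(x)$ are absorbed harmlessly by the quadratic form whenever that form is definite, but in the indefinite case for necessity I must verify, in the $\lambda = 0$ boundary case, that the linear contribution along the bad eigendirection does not conspire to create decay; this requires a brief case split exactly as in the second half of the proof of Lemma~\ref{lemma:integrabilityofratiooftwogaussiancs}. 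Once these are handled, the coordinate-wise bookkeeping via $\pi_i$ is identical to the compactly-supported proof.
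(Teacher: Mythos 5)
Your proposal is correct and follows essentially the same route as the paper: apply Lemma~\ref{lem:gmm-bounds} with the upper/lower envelope chosen according to the sign of $\gamma_i$, regroup the quadratic form coordinate-by-coordinate via $\|\pi_i(x)\|^2=\sum_{k\in I_i}x_k^2$ to read off $C^{\text{suff}}_k$ (resp.\ $C^{\text{nec}}_k$), and invoke Lemma~\ref{lemma:integrabilityofratiooftwogaussiancs} for both directions. Your remark about the $\lambda=0$ boundary case is good hygiene but not actually needed here, since the hypothesis $C^{\text{nec}}_{k^*}(t^*)<0$ is strict and so puts you directly in the $\lambda<0$ branch of that lemma.
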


\begin{proof}
\textbf{Sufficiency ($C^{\text{suff}}_k(t) > 0$):} 
We construct an integrable upper bound for $g_t(x)$. To upper bound a product with positive and negative exponents, we require an \textit{upper} bound for terms with $\gamma_i > 0$ and a \textit{lower} bound for terms with $\gamma_i < 0$ (since negative exponents invert the inequality). Applying Lemma~\ref{lem:gmm-bounds}:
\begin{align*}
g_t(x) &\le \prod_{\gamma_i > 0} \left( K_{+,i} e^{-\frac{\|\pi_i(x)\|^2}{2\lambda_{\max}(q^{(i)}_t)} + L_{+,i}\|\pi_i(x)\|} \right)^{\gamma_i} \prod_{\gamma_i < 0} \left( K_{-,i} e^{-\frac{\|\pi_i(x)\|^2}{2\lambda_{\min}(q^{(i)}_t)} - L_{-,i}\|\pi_i(x)\|} \right)^{\gamma_i} \\
&= K \exp\left( -\frac{1}{2} \sum_{k=1}^d C^{\text{suff}}_k(t) x_k^2 + \sum_{i=1}^n \delta_i \|\pi_i(x)\| \right).
\end{align*}
The coefficient of the quadratic term $x_k^2$ in the exponent corresponds exactly to $C^{\text{suff}}_k(t)$ defined in Eq.~\ref{eq:criterion_suff}. The linear terms are bounded by $D\|x\|$ for some finite $D>0$ (the constants $\delta_i$ are also finite). Thus, the exponent behaves as $-\frac{1}{2} x^\top \text{diag}(C^{\text{suff}}(t)) x + O(\|x\|)$. Since $C^{\text{suff}}_k(t) > 0$ for all $k$, the quadratic decay dominates the linear growth, ensuring integrability on $\R^d$.

{\textbf{Necessity ($C^{\text{nec}}_k(t) < 0$):}
We prove this by constructing a lower bound that diverges. To lower bound the product, we require a \textit{lower} bound for terms with $\gamma_i > 0$ and an \textit{upper} bound for terms with $\gamma_i < 0$. Applying the reverse inequalities from Lemma~\ref{lem:gmm-bounds}:
\begin{align*}
g_t(x) &\ge \prod_{\gamma_i > 0} \left( K_{-,i} e^{-\frac{\|\pi_i(x)\|^2}{2\lambda_{\min}(q^{(i)}_t)} - L_{-,i}\|\pi_i(x)\|} \right)^{\gamma_i} \prod_{\gamma_i < 0} \left( K_{+,i} e^{-\frac{\|\pi_i(x)\|^2}{2\lambda_{\max}(q^{(i)}_t)} + L_{+,i}\|\pi_i(x)\|} \right)^{\gamma_i} \\
&= \tilde{K} \exp\left( -\frac{1}{2} \sum_{k=1}^d C^{\text{nec}}_k(t) x_k^2 - \tilde{D}(x) \right) 
\end{align*}
where the quadratic coefficient is exactly $C^{\text{nec}}_k(t)$ defined in Eq.~\ref{eq:criterion_nec}, and $\tilde{D}(x)$ captures linear terms. The exponent is quadratic with coefficient matrix $A$, a diagonal matrix whose entries are 
$A_{jj} = C^{\text{nec}}_k(t)$.  
Let $k^*$ be a coordinate such that $C^{\mathrm{nec}}_{k^*}(t) < 0$.  
Then $A_{k^* k^*} < 0$, so $A$ is not positive definite ($A \nsucc 0$), and by 
Lemma~\ref{lemma:integrabilityofratiooftwogaussiancs} the integral over $\mathbb{R}^d$ diverges.}
\end{proof}

{\textbf{Revisiting the counterexample in the main text.} We apply Theorem \ref{thm:integrability_for_heterogeneous_product-of-gmms} to analyze the pathological Gaussian path example presented in Eq.~\ref{eq:gaussian_counterexample} and Figure~\ref{fig:path_collapse_2d_gaussian} of the main text. Since the components are single Gaussians (trivial GMMs with $J=1$), the spectral bounds collapse to the scalar variance: $\lambda_{\max}(q^{(i)}_t) = \lambda_{\min}(q^{(i)}_t) = \sigma_i^2(t)$. Consequently, the sufficiency and necessity criteria coincide: $C^{\text{suff}}_k(t) = C^{\text{nec}}_k(t) =: C(t)$.The dimension is $d=1$. The exponents are $\gamma_1=1, \gamma_2=1$ (numerator) and $\gamma_3=-1, \gamma_4=-1$ (denominator). The variance paths are given by $\sigma_i^2(t) = (1-t)^2 \sigma_{i,0}^2 + t^2 \sigma_{i,1}^2$. We evaluate the criterion at the critical time $t=0.5$:
\begin{align*}
\sigma_1^2(0.5) &= (0.5)^2(1) + (0.5)^2(0.5) = 0.375 \quad
\sigma_2^2(0.5) = (0.5)^2(1) + (0.5)^2(7) = 2.0 \\
\sigma_3^2(0.5) &= (0.5)^2(1.5) + (0.5)^2(1) = 0.625 \quad
\sigma_4^2(0.5) = 0.625 \quad \text{(identical to } q^{(3)}\text{)}
\end{align*}
Substituting these into Eq.~\ref{eq:criterion_nec}, $C(0.5)\approx -0.03<0$.
Since $C(0.5) < 0$, the necessity condition of Theorem \ref{thm:integrability_for_heterogeneous_product-of-gmms} implies that $g_{0.5}(x)$ is \textbf{not integrable}. This theoretical prediction aligns perfectly with the empirical divergence and ``Marginal Path Collapse'' illustrated in Figure~\ref{fig:path_collapse_2d_gaussian}.}

\section{Experimental Setup}
\label{sec:setup_app}

{For full reproducibility, we provide the exact analytical expressions for the noise schedules $\alpha_t$ and $\beta_t$ used in our experiments in Table~\ref{tab:schedule_formulas}. The stochastic interpolant is defined as $X_t = \alpha_t X_0 + \beta_t X_1$.}

\begin{table}[h]
\centering
\caption{{Exact formulations of noise schedules used in experiments.}}
\label{tab:schedule_formulas}
{\begin{tabular}{l c c}
\toprule
\textbf{Schedule Name} & \textbf{Signal Scale} $\alpha_t$ & \textbf{Noise Scale} $\beta_t$ \\
\midrule
\texttt{$1-t$} & $1-t$ & $t$ \\
\texttt{$1-t^2$} & $1-t^2$ & $t$ \\
\texttt{$\cos(\tfrac{\pi}{2}t)$} & $\cos\left(\frac{\pi}{2}t\right)$ & $\sin\left(\frac{\pi}{2}t\right)$ \\
\texttt{DDPM} & $\exp\left(-\frac{1}{4}(19.9t^2 + 0.1t)\right)$ & $\sqrt{1-\alpha_t^2}$ \\
\texttt{Sigmoid}$^\dagger$ & $\sqrt{1 - \exp\left(-\eta(1-t)\right)}$ & $t$ \\
\texttt{Custom} & $1 - 4t + 7t^2 - 4t^3$ & $t$ \\
\bottomrule
\end{tabular}}
\end{table}

\begin{figure}[h]
    \centering
    \includegraphics[width=0.3\linewidth]{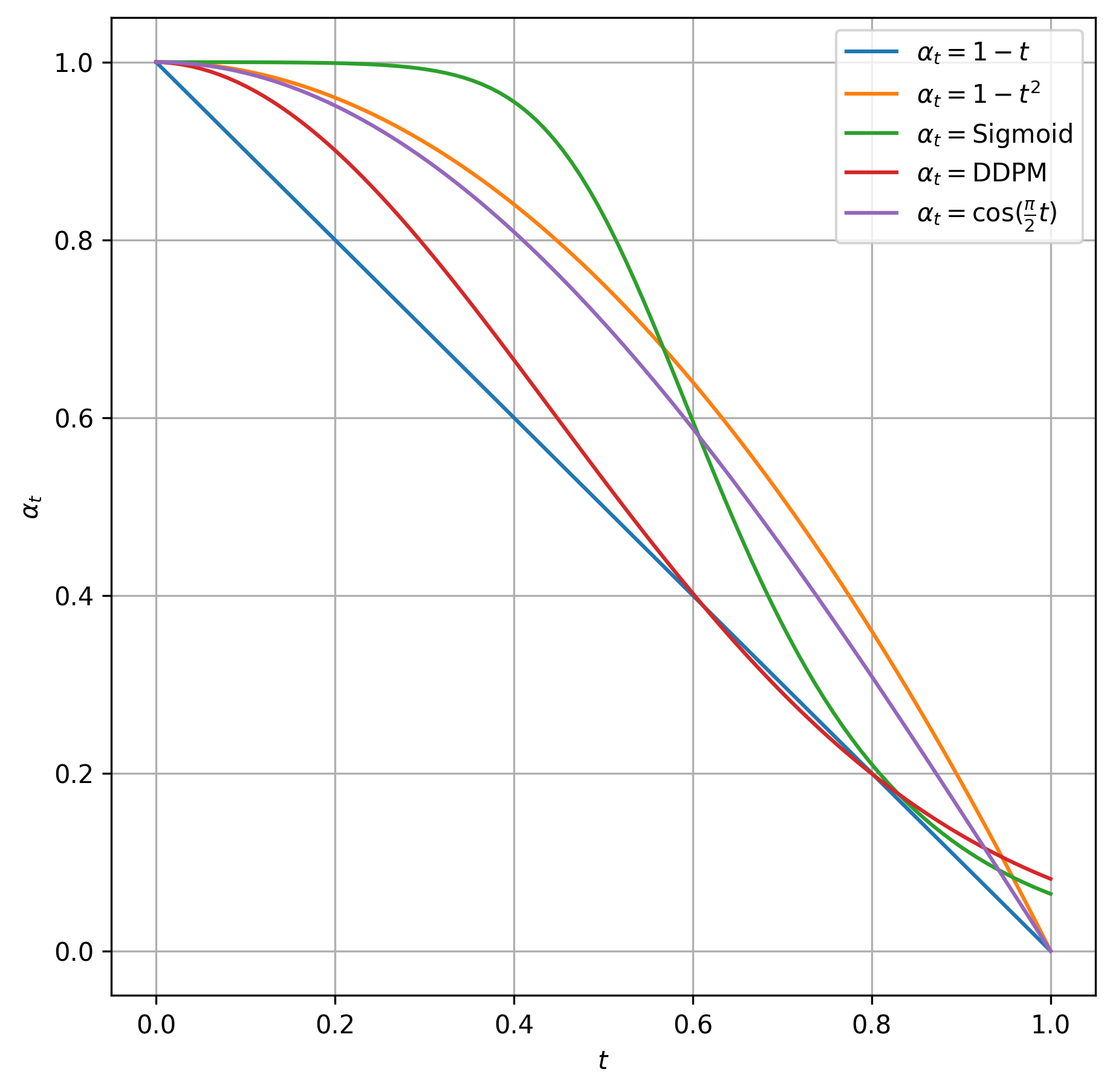}
    \caption{Visualization of common noise schedules used in experiments.}
    \label{fig:common_noise_schedules_plot}
\end{figure}

{\textbf{\texttt{Custom}}: This polynomial schedule was used specifically for the synthetic quantitative benchmark (Table~\ref{tab:metrics}) to test robustness against non-monotonic effective variances. \textbf{\texttt{DDPM}} corresponds to the variance preserving (VP) schedule with linear $\beta$-scheduling from $\beta_{\min}=0.1$ to $\beta_{\max}=20$ (scaled to $t \in [0,1]$). The expression is derived from $\alpha_t = \sqrt{\exp(-\int_0^t \beta(s) ds)}$. For \textbf{\texttt{Sigmoid}} we utilize a numerically stable formulation involving the softplus function. The term $\eta(x)$ is defined as: $$ \eta(x) = \frac{20}{12}\text{softplus}\left(12(x - 0.5)\right) + 0.001x, \quad \text{where } \text{softplus}(z) = \log(1+e^z). $$ }

\subsection{Synthetic Dataset: 2D Checker}
\label{app_subsec:2d_remarks}

{\textbf{Dataset.} We evaluate the fidelity of samples generated by ACE, NR, and FKC by computing distributional discrepancies against the ground truth, reporting Wasserstein distances ($W_1, W_2$) and Maximum Mean Discrepancy with an RBF kernel (MMD). The ground truth 2D Checkerboard distribution on the domain $\mathcal{D}=[-4,4]^2$ is formally defined by the density:
\begin{equation}
p_{\texttt{Checker}}(x,y) = \frac{1}{32} \cdot \mathbf{1}\left( \left\lfloor \frac{x}{2} \right\rfloor + \left\lfloor \frac{y}{2} \right\rfloor \text{ is odd} \right) \cdot \mathbf{1}\left( (x,y) \in \mathcal{D} \right),
\label{eq:checker2d}
\end{equation}
where $\lfloor \cdot \rfloor$ denotes the floor function and $\mathbf{1}(\cdot)$ is the indicator function. The support consists of alternating $2\times2$ squares.}

{\textbf{Implementation.} For all experiments, we sampled 10,000 samples over 1,000 SDE steps with noise level $\sigma_t=0.5$. For the 2D benchmark experiments, we chose to resample based on Effective Sample Size (ESS), a commonly used technique in sequential Monte Carlo~\citep{naesseth2019elements}. We used a threshold of 0.7 (Hyperparameter study in Figure. \ref{fig:ess_ablation}). For both 1D and 2D experts, we trained a multilayer perceptron (MLP) with a time embedding module concatenated to the input. The main network consisted of four hidden layers of 256 units with SiLU activations. Depending on the task dimensionality, the input layer size was $d+256$ (where $d=1,2$ for 1D/2D inputs, respectively) and the output dimension matched the data dimension. Models were trained with the interpolant loss ~\citep{albergo2025stochasticinterpolatnsunifyingframework} using the Adam optimizer at learning rate $2\times 10^{-3}$, for 2000 epochs on 1D tasks and 10,000 epochs on 2D tasks. {We use Hutchinson's estimator ~\citep{hutchinson1989stochastic} to compute divergences.}}

\paragraph{Simple Experiment for Hyperparameter Search.}
To efficiently search for the effective sample size (ESS) threshold used in SMC simulation, we conduct a simplified ablation study using a reduced bump function without the linear term (\(B_2 = 0\)), retaining only the initial bump amplitude \(B_1 = 30\) and the custom scheduler in Table~\ref{tab:schedule_formulas}.
We first evaluate a range of ESS thresholds \(\tau \in \{0.1, 0.3, 0.5, 0.7, 0.9\}\) and find that \(\tau = 0.7\) achieves the best performance (Table~\ref{fig:ess_ablation}).
We then verify that this choice is robust across different values of \(B_1\) through additional experiments, as shown in Fig.~\ref{fig:bump_selection}.
Based on this selected threshold and the schedule combination of Table~\ref{tab:metrics_common_combination}, we further conduct sensitivity analyses using the complete bump function, which includes both quadratic and linear components, as reported in Tables~\ref{tab:sensitivity_w1}, \ref{tab:sensitivity_w2}, and \ref{tab:sensitivity_mmd}. 

\begin{figure}[h]
    \centering
    \includegraphics[width=\linewidth]{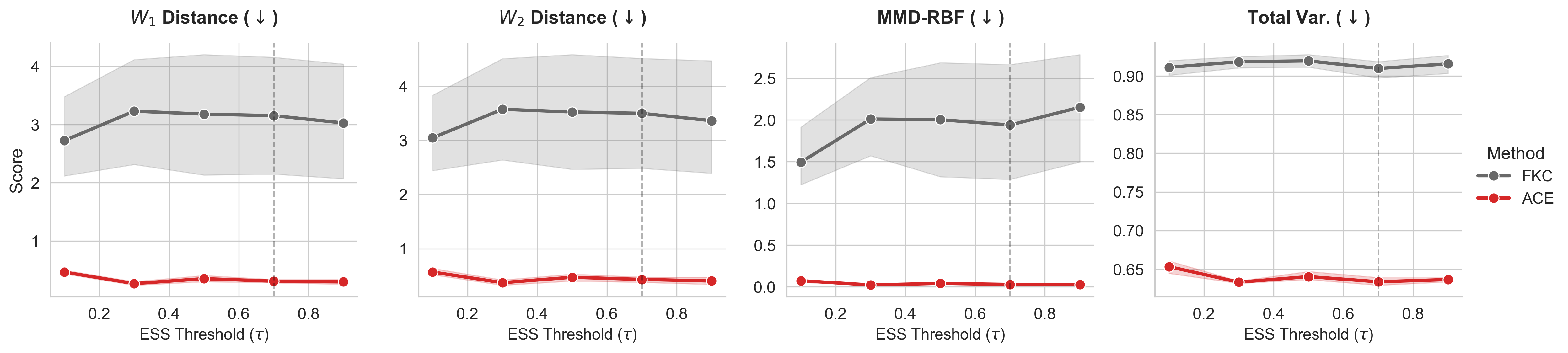}
    \caption{{\textbf{Performance comparison across varying ESS Thresholds.} Across multiple ESS thresholds, $\tau\in\{0.1,0.3,0.5,0.7,0.9\}$, ACE displays stable performance (with best values at $\tau=0.7$) across different hyperparameter settings, outperforming FKC by a large margin. FKC consistently performs worse, implying its performance degradation is due to path collapse and not due to suboptimal parameter tuning. We used the same evaluation setup as in our main experiments. Shaded regions denote variance across 5 seeds.}}
    \label{fig:ess_ablation}
\end{figure}

\begin{figure}[h]
    \centering
    \includegraphics[width=0.8\linewidth]{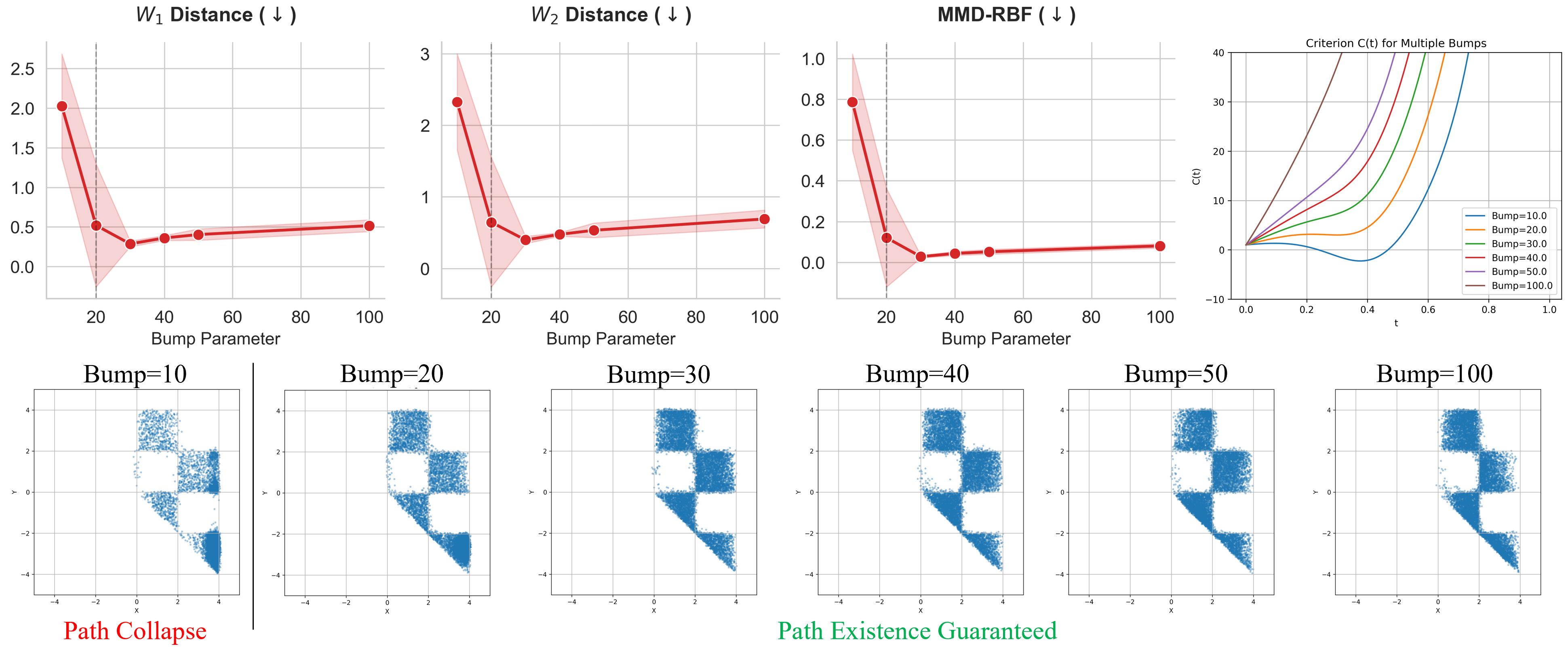}
    \caption{{\textbf{Sensitivity to the bump parameter $B_1$ ($B_2=0$).}
    Performance peaks near $B_1=30$, but remains robust even at high values ($B_1=100$), whereas criterion violations at $B_1=10$ cause significant degradation. Shaded regions denote variance across 5 seeds.}}
    \label{fig:bump_selection}
\end{figure}

{\textbf{Path collapse frequencies.} For annealed three-expert composition $q^{(1)}_t(q^{(2)}_t/q^{(3)}_t)^w$, there are $5^3=125$ total triplets, $120$ heterogeneous triplets (excluding $\alpha^{(1)}_t=\alpha^{(2)}_t=\alpha^{(3)}_t$), and $100$ likelihood-nonhomogeneous ($\alpha^{(2)}_t\neq \alpha^{(3)}_t$) triplets, the heterogeneous steering case. Among the 100 heterogeneous steering cases, path collapse frequencies are given in Table. ~\ref{tab:collapse_freq} with the full list of schedule compositions given in Figure. \ref{fig:criterion_realistic_combinations}.}

{\textbf{Visualization Details.} All sample/trajectory visualizations use seed 0. In criterion plots Figures~\ref{fig:common_noise_schedules} and ~\ref{fig:criterion_realistic_combinations}, we visualize the criterion $C(t)$ to demonstrate the prevalence of path collapse. To maintain a consistent and readable scale across all 100 combinations, we fix the y-axis range to $[-20, 100]$. We observed that in some cases (e.g., high-guidance regimes $w=15$), the criterion may exhibit rapid asymptotic growth followed by a sharp singularity as $t \to 1$. In these cases we cease plotting the line segment after $t_\text{end}$ to prevent visual clutter. This visualization focuses on the practically relevant operational range for discretized sampling ($t \le t_{\text{end}}$, Theorem \ref{thm:adaptive_exponents_appendix}).}

\begin{table}[h]
\centering
\caption{Distributional similarity metrics ($\downarrow$) under the custom schedule. Results exhibit the same trend as Table~\ref{tab:metrics_common_combination}. For each metric, we report the minimum and mean $\pm$ standard deviation across 5 seeds. Best values are in \textbf{bold}. NR denotes no resampling, the common heuristic using only mixed scores. FKC refers to Feynman--Kac correctors, which fail when path-existence conditions are not satisfied. }
\label{tab:metrics}
\resizebox{0.6\linewidth}{!}{
\begin{tabular}{lccccccc c}
\toprule
\multirow{2}{*}{Method}
& \multicolumn{2}{c}{$W_1$ ($\downarrow$)} 
& \multicolumn{2}{c}{$W_2$ ($\downarrow$)} 
& \multicolumn{2}{c}{MMD (RBF) ($\downarrow$)} \\
\cmidrule(lr){2-3} \cmidrule(lr){4-5} \cmidrule(lr){6-7}
& Min & Mean $\pm$ Std 
 & Min & Mean $\pm$ Std 
 & Min & Mean $\pm$ Std \\
\midrule
NR & 0.77 & 0.78 $\pm$ 0.02 
      & 1.06 & 1.07 $\pm$ 0.02 
      & 0.066 & {0.068} $\pm$ 0.001 \\
FKC & 2.09 & 2.13 $\pm$ 0.04 
      & 2.39 & 2.44 $\pm$ 0.05 
      & 1.07 & 1.43 $\pm$ 0.31 \\
\midrule
ACE ($B_1=10, B_2=0$) & 1.47 & 2.02 $\pm$ 0.65
      & 1.77 & 2.32 $\pm$ 0.67
      & 0.44 & 0.78 $\pm$ 0.23 \\
ACE ($B_1=20, B_2=0$) & 0.13 & 0.52 $\pm$ 0.77
      & {0.18} & 0.64 $\pm$ 0.90
      & 0.009 & 0.12 $\pm$ 0.24 \\
ACE ($B_1=30, B_2=0$) & 0.24 & \textbf{0.28} $\pm$ 0.036
      & 0.35 & \textbf{0.40} $\pm$ 0.51
      & 0.019 & \textbf{0.027} $\pm$ 0.0064 \\
ACE ($B_1=40, B_2=0$) & 0.32 & 0.36 $\pm$ 0.031
      & 0.44 & 0.475 $\pm$ 0.025
      & 0.034 & 0.043 $\pm$ 0.0099 \\
ACE ($B_1=50, B_2=0$) & 0.30 & 0.40 $\pm$ 0.07
      & 0.38 & 0.53 $\pm$ 0.10
      & 0.034 & 0.052 $\pm$ 0.01 \\
ACE ($B_1=100, B_2=0$) & 0.43 & 0.52 $\pm$ 0.07
      & 0.56 & 0.69 $\pm$ 0.12
      & 0.070 & 0.080 $\pm$ 0.011 \\
\bottomrule
\end{tabular}
}
\end{table}

\begin{table}[h]
\centering
\caption{ACE Sensitivity Analysis: $W_1$ ($\downarrow$)}
\resizebox{\linewidth}{!}{
\begin{tabular}{lcccccccccc}
\toprule
$B_1$ $\backslash$ $B_2$ & 0.0 & 0.5 & 1.0 & 1.5 & 2.0 & 2.5 & 3.0 & 4.0 & 8.0 & 16.0 \\
\midrule
0.0 & 1.23 \tiny{$\pm$0.87} & 0.59 \tiny{$\pm$0.39} & 1.10 \tiny{$\pm$0.93} & 0.78 \tiny{$\pm$0.15} & 0.37 \tiny{$\pm$0.07} & 0.56 \tiny{$\pm$0.30} & 0.75 \tiny{$\pm$0.06} & 0.66 \tiny{$\pm$0.17} & 0.66 \tiny{$\pm$0.09} & 2.60 \tiny{$\pm$0.58} \\
10.0 & 0.71 \tiny{$\pm$0.02} & 0.52 \tiny{$\pm$0.07} & 0.33 \tiny{$\pm$0.05} & \textbf{0.20} \tiny{$\pm$0.04} & 0.21 \tiny{$\pm$0.04} & 0.53 \tiny{$\pm$0.06} & 0.49 \tiny{$\pm$0.04} & 0.53 \tiny{$\pm$0.16} & 0.70 \tiny{$\pm$0.25} & 1.68 \tiny{$\pm$0.53} \\
20.0 & 0.33 \tiny{$\pm$0.19} & 0.26 \tiny{$\pm$0.07} & 0.26 \tiny{$\pm$0.03} & 0.39 \tiny{$\pm$0.24} & 0.33 \tiny{$\pm$0.07} & 0.47 \tiny{$\pm$0.11} & 0.46 \tiny{$\pm$0.07} & 0.52 \tiny{$\pm$0.13} & 0.63 \tiny{$\pm$0.13} & 1.30 \tiny{$\pm$0.35} \\
30.0 & 0.33 \tiny{$\pm$0.11} & 0.35 \tiny{$\pm$0.17} & 0.33 \tiny{$\pm$0.10} & 0.31 \tiny{$\pm$0.04} & 0.41 \tiny{$\pm$0.10} & 0.36 \tiny{$\pm$0.11} & 0.35 \tiny{$\pm$0.12} & 0.45 \tiny{$\pm$0.12} & 0.83 \tiny{$\pm$0.24} & 1.41 \tiny{$\pm$0.60} \\
40.0 & 0.44 \tiny{$\pm$0.12} & 0.41 \tiny{$\pm$0.08} & 0.42 \tiny{$\pm$0.11} & 0.47 \tiny{$\pm$0.13} & 0.49 \tiny{$\pm$0.10} & 0.51 \tiny{$\pm$0.10} & 0.58 \tiny{$\pm$0.16} & 0.73 \tiny{$\pm$0.53} & 0.78 \tiny{$\pm$0.27} & 1.45 \tiny{$\pm$0.45} \\
50.0 & 0.56 \tiny{$\pm$0.24} & 0.59 \tiny{$\pm$0.12} & 0.59 \tiny{$\pm$0.15} & 0.66 \tiny{$\pm$0.25} & 0.63 \tiny{$\pm$0.08} & 0.71 \tiny{$\pm$0.21} & 0.59 \tiny{$\pm$0.10} & 0.66 \tiny{$\pm$0.15} & 0.84 \tiny{$\pm$0.15} & 1.29 \tiny{$\pm$0.40} \\
100.0 & 0.91 \tiny{$\pm$0.07} & 0.93 \tiny{$\pm$0.20} & 1.12 \tiny{$\pm$0.40} & 0.84 \tiny{$\pm$0.07} & 0.98 \tiny{$\pm$0.13} & 0.98 \tiny{$\pm$0.15} & 0.98 \tiny{$\pm$0.22} & 1.03 \tiny{$\pm$0.18} & 1.66 \tiny{$\pm$0.66} & 2.12 \tiny{$\pm$0.37} \\
\bottomrule
\end{tabular}
}
\label{tab:sensitivity_w1}
\end{table}

\begin{table}[h]
\centering
\caption{ACE Sensitivity Analysis: $W_2$ ($\downarrow$)}
\resizebox{\linewidth}{!}{
\begin{tabular}{lcccccccccc}
\toprule
$B_1$ $\backslash$ $B_2$ & 0.0 & 0.5 & 1.0 & 1.5 & 2.0 & 2.5 & 3.0 & 4.0 & 8.0 & 16.0 \\
\midrule
0.0 & 1.46 \tiny{$\pm$0.92} & 0.74 \tiny{$\pm$0.45} & 1.31 \tiny{$\pm$0.99} & 1.00 \tiny{$\pm$0.17} & 0.50 \tiny{$\pm$0.10} & 0.73 \tiny{$\pm$0.35} & 0.96 \tiny{$\pm$0.07} & 0.86 \tiny{$\pm$0.20} & 0.85 \tiny{$\pm$0.13} & 2.95 \tiny{$\pm$0.67} \\
10.0 & 0.90 \tiny{$\pm$0.02} & 0.67 \tiny{$\pm$0.09} & 0.45 \tiny{$\pm$0.07} & \textbf{0.29} \tiny{$\pm$0.05} & 0.31 \tiny{$\pm$0.06} & 0.70 \tiny{$\pm$0.07} & 0.66 \tiny{$\pm$0.06} & 0.68 \tiny{$\pm$0.20} & 0.87 \tiny{$\pm$0.28} & 1.95 \tiny{$\pm$0.55} \\
20.0 & 0.47 \tiny{$\pm$0.23} & 0.37 \tiny{$\pm$0.09} & 0.39 \tiny{$\pm$0.06} & 0.54 \tiny{$\pm$0.31} & 0.47 \tiny{$\pm$0.10} & 0.64 \tiny{$\pm$0.12} & 0.62 \tiny{$\pm$0.07} & 0.70 \tiny{$\pm$0.16} & 0.80 \tiny{$\pm$0.17} & 1.52 \tiny{$\pm$0.38} \\
30.0 & 0.45 \tiny{$\pm$0.13} & 0.50 \tiny{$\pm$0.21} & 0.47 \tiny{$\pm$0.15} & 0.46 \tiny{$\pm$0.10} & 0.57 \tiny{$\pm$0.13} & 0.47 \tiny{$\pm$0.16} & 0.47 \tiny{$\pm$0.18} & 0.63 \tiny{$\pm$0.15} & 1.02 \tiny{$\pm$0.28} & 1.65 \tiny{$\pm$0.63} \\
40.0 & 0.63 \tiny{$\pm$0.18} & 0.60 \tiny{$\pm$0.12} & 0.57 \tiny{$\pm$0.13} & 0.65 \tiny{$\pm$0.16} & 0.67 \tiny{$\pm$0.14} & 0.72 \tiny{$\pm$0.14} & 0.77 \tiny{$\pm$0.17} & 0.94 \tiny{$\pm$0.55} & 1.00 \tiny{$\pm$0.32} & 1.69 \tiny{$\pm$0.48} \\
50.0 & 0.74 \tiny{$\pm$0.26} & 0.76 \tiny{$\pm$0.17} & 0.79 \tiny{$\pm$0.18} & 0.87 \tiny{$\pm$0.29} & 0.83 \tiny{$\pm$0.11} & 0.94 \tiny{$\pm$0.21} & 0.78 \tiny{$\pm$0.12} & 0.87 \tiny{$\pm$0.18} & 1.08 \tiny{$\pm$0.19} & 1.51 \tiny{$\pm$0.42} \\
100.0 & 1.14 \tiny{$\pm$0.09} & 1.16 \tiny{$\pm$0.23} & 1.38 \tiny{$\pm$0.47} & 1.06 \tiny{$\pm$0.09} & 1.24 \tiny{$\pm$0.14} & 1.23 \tiny{$\pm$0.18} & 1.22 \tiny{$\pm$0.21} & 1.29 \tiny{$\pm$0.19} & 1.97 \tiny{$\pm$0.68} & 2.40 \tiny{$\pm$0.37} \\
\bottomrule
\end{tabular}
}
\label{tab:sensitivity_w2}
\end{table}

\begin{table}[h]
\centering
\caption{ACE Sensitivity Analysis: MMD-RBF ($\downarrow$)}
\resizebox{\linewidth}{!}{
\begin{tabular}{lcccccccccc}
\toprule
$B_1$ $\backslash$ $B_2$ & 0.0 & 0.5 & 1.0 & 1.5 & 2.0 & 2.5 & 3.0 & 4.0 & 8.0 & 16.0 \\
\midrule
0.0 & 0.305 \tiny{$\pm$0.486} & 0.069 \tiny{$\pm$0.065} & 0.297 \tiny{$\pm$0.632} & 0.092 \tiny{$\pm$0.027} & 0.023 \tiny{$\pm$0.006} & 0.064 \tiny{$\pm$0.047} & 0.095 \tiny{$\pm$0.008} & 0.084 \tiny{$\pm$0.034} & 0.130 \tiny{$\pm$0.043} & 2.030 \tiny{$\pm$1.049} \\
10.0 & 0.056 \tiny{$\pm$0.008} & 0.034 \tiny{$\pm$0.009} & 0.017 \tiny{$\pm$0.003} & \textbf{0.012} \tiny{$\pm$0.003} & 0.014 \tiny{$\pm$0.002} & 0.063 \tiny{$\pm$0.011} & 0.054 \tiny{$\pm$0.009} & 0.078 \tiny{$\pm$0.037} & 0.164 \tiny{$\pm$0.119} & 0.799 \tiny{$\pm$0.406} \\
20.0 & 0.025 \tiny{$\pm$0.023} & 0.018 \tiny{$\pm$0.006} & 0.019 \tiny{$\pm$0.003} & 0.035 \tiny{$\pm$0.025} & 0.029 \tiny{$\pm$0.010} & 0.057 \tiny{$\pm$0.022} & 0.054 \tiny{$\pm$0.015} & 0.076 \tiny{$\pm$0.038} & 0.117 \tiny{$\pm$0.031} & 0.598 \tiny{$\pm$0.269} \\
30.0 & 0.035 \tiny{$\pm$0.018} & 0.037 \tiny{$\pm$0.031} & 0.033 \tiny{$\pm$0.016} & 0.032 \tiny{$\pm$0.005} & 0.047 \tiny{$\pm$0.018} & 0.042 \tiny{$\pm$0.019} & 0.040 \tiny{$\pm$0.018} & 0.064 \tiny{$\pm$0.034} & 0.259 \tiny{$\pm$0.140} & 0.764 \tiny{$\pm$0.575} \\
40.0 & 0.051 \tiny{$\pm$0.013} & 0.048 \tiny{$\pm$0.011} & 0.053 \tiny{$\pm$0.019} & 0.072 \tiny{$\pm$0.041} & 0.070 \tiny{$\pm$0.022} & 0.072 \tiny{$\pm$0.018} & 0.098 \tiny{$\pm$0.055} & 0.199 \tiny{$\pm$0.272} & 0.178 \tiny{$\pm$0.102} & 0.845 \tiny{$\pm$0.451} \\
50.0 & 0.094 \tiny{$\pm$0.069} & 0.097 \tiny{$\pm$0.040} & 0.102 \tiny{$\pm$0.046} & 0.153 \tiny{$\pm$0.140} & 0.117 \tiny{$\pm$0.026} & 0.164 \tiny{$\pm$0.089} & 0.113 \tiny{$\pm$0.036} & 0.152 \tiny{$\pm$0.065} & 0.252 \tiny{$\pm$0.083} & 0.626 \tiny{$\pm$0.319} \\
100.0 & 0.281 \tiny{$\pm$0.056} & 0.282 \tiny{$\pm$0.117} & 0.376 \tiny{$\pm$0.206} & 0.243 \tiny{$\pm$0.036} & 0.356 \tiny{$\pm$0.084} & 0.355 \tiny{$\pm$0.115} & 0.344 \tiny{$\pm$0.127} & 0.374 \tiny{$\pm$0.110} & 0.863 \tiny{$\pm$0.712} & 1.234 \tiny{$\pm$0.452} \\
\bottomrule
\end{tabular}
}
\label{tab:sensitivity_mmd}
\end{table}

\subsection{Flexible-Pose Scaffold Decoration}
\label{app:expsetupformolecule}

\textbf{Dataset.} We validate our flexible-pose scaffold decoration framework on the CrossDocked2020 test set~\citep{francoeur2020crossdocked}. Specifically, we select 76 pocket-ligand pairs from the original 100 test pairs, excluding cases where the ligand contains more than 29 atoms, which exceeds the predefined maximum supported by EDM during training. We adhere to this constraint because our SDE-based simulation requires EDM score predictions over the entire molecule. Furthermore, scaffolds are extracted as ring structures, defined as a randomly selected ring along with atoms within a bond distance of two from the ring. Further, due to limited atom type support of EDM, we removed atoms other than C, O, N, and F.

\textbf{Our Implementation.} As described in Sec.~\ref{application:flexibleposescaffolddecoration}, we combine pretrained density models GeoDiff~\citep{geodiff} as CONF, DiffSBDD~\citep{schneuing2024diffsbdd} as SBDD, and EDM~\citep{edm} as DN. DiffSBDD is trained on the CrossDocked dataset, while GeoDiff and EDM are trained on QM9~\cite{qm9}. This combination enables molecule generation that preserves scaffold topology while allowing scaffold poses to adapt flexibly within the binding pocket. We perform 500 denoising steps with $\log q$ updates and apply resampling every 10 steps. For weighting the ratio-of-density term, we use weight values of 1.1, 1.2, 1.3, and 1.4, with corresponding bump function coefficients $B_1 = 30$ and $B_2 = 0.037, 0.136, 0.236,$ and $0.336$, respectively. The $B_2$ values are selected to satisfy the prescribed criterion under the fixed $B_1 = 30$. Since our diffusion models treat molecules as point clouds, we apply a post-processing step for bond prediction. Starting from the predefined scaffold topology, bonds are added between nearby atoms based on geometric proximity and valence constraints. For evaluation, we sample five candidate molecules for each pocket--scaffold pair and compute the metrics reported below. All experiments are conducted using Python 3.8.9 on NVIDIA A6000 GPUs. We additionally note that ACE is evaluated with a batch size of 5 and requires approximately 1.5\,GB of GPU VRAM.

\textbf{Baselines.} To demonstrate the effectiveness of ACE over existing diffusion steering methods, as well as the practical importance of the ACE-based framework for flexible-pose scaffold decoration, we consider two categories of baselines. First, we construct two baselines based on FKC and CFG (classifier-free guidance) to approximate the same target density (Sec.~\ref{application:flexibleposescaffolddecoration}). The CFG-based approach is referred to as NR (Non-Resampling), as its simulation is equivalent to FKC without the resampling step. We note that these baselines share the same underlying models as our method for each component probability path. In addition, to evaluate the practical advantages of flexible-pose scaffold decoration enabled by ACE, we include fixed-pose scaffold decoration baselines: Delete~\citep{zhang2023delete} (with a maximum of 50 steps), and AutoFragDiff~\citep{ghorbani2024autofragdiff}.

\textbf{Metrics.} To compare ACE with the baseline diffusion steering methods NR and FKC, we evaluate the sampling trajectories using three categories of metrics. 
(i) \emph{Vina score}, computed with QVina2~\citep{qvina}, measures pocket compatibility. When QVina2 fails (e.g., due to fragmented molecules), a score of zero is assigned. We report \textit{P.Worst}, defined as the average of the pocket-wise worst Vina scores; \textit{Top25\%}, corresponding to the top 25th percentile among all generated molecules; and the mean and standard deviation computed over all generated molecules. 
(ii) \emph{Optimization Success Rate} (OSR) measures the fraction of generated molecules whose docking scores are better than that of the reference ligand. 
(iii) \emph{Drug-likeness} metrics assess whether the generated distribution preserves the prior distribution of drug-like molecules. Specifically, we report QED~\citep{bickerton2012qed}, SA (synthetic accessibility), and the Lipinski score, defined as the number of satisfied Lipinski rules divided by five~\cite{LIPINSKI19973}.

\subsection{Time-Varying Exponent Baseline}
\label{app:time-varying-baseline}

To disentangle our bump function from general time-varying exponents in prior work, we tested empirical dynamic schedules \citep{wang2024analysis} on scaffold decoration (3 seeds, Figure~\ref{fig:time-varying-scaffold}) and 2D checker (5 seeds, Figure~\ref{fig:time-varying-checker}): linearly increasing ($w\alpha t$), decreasing ($w\alpha(1-t)$), $\Lambda$-shaped ($w\alpha(0.5-|t-0.5|)$), and V-shaped ($w\alpha|t-0.5|$) for $\alpha\in\{0.5,1,2,4,8\}$ ($w=1.4$ for scaffold, $w=1.0$ for 2D). For scaffold decoration, we also tested quadratic bumps $b(t)=B_1 t(1-t)$ on $\gamma_4$ ($B_1\in\{10,20,30\}$), scaled to deliberately fail the Path Existence Criterion. Across all samplers (ACE/FKC/NR), these heuristic exponent schedules failed to prevent path collapse, degrading performance versus ours. These results confirm that the gains come from validity-preserving exponent design rather than time variation alone.

\begin{figure}
    \centering
    \includegraphics[width=0.8\linewidth]{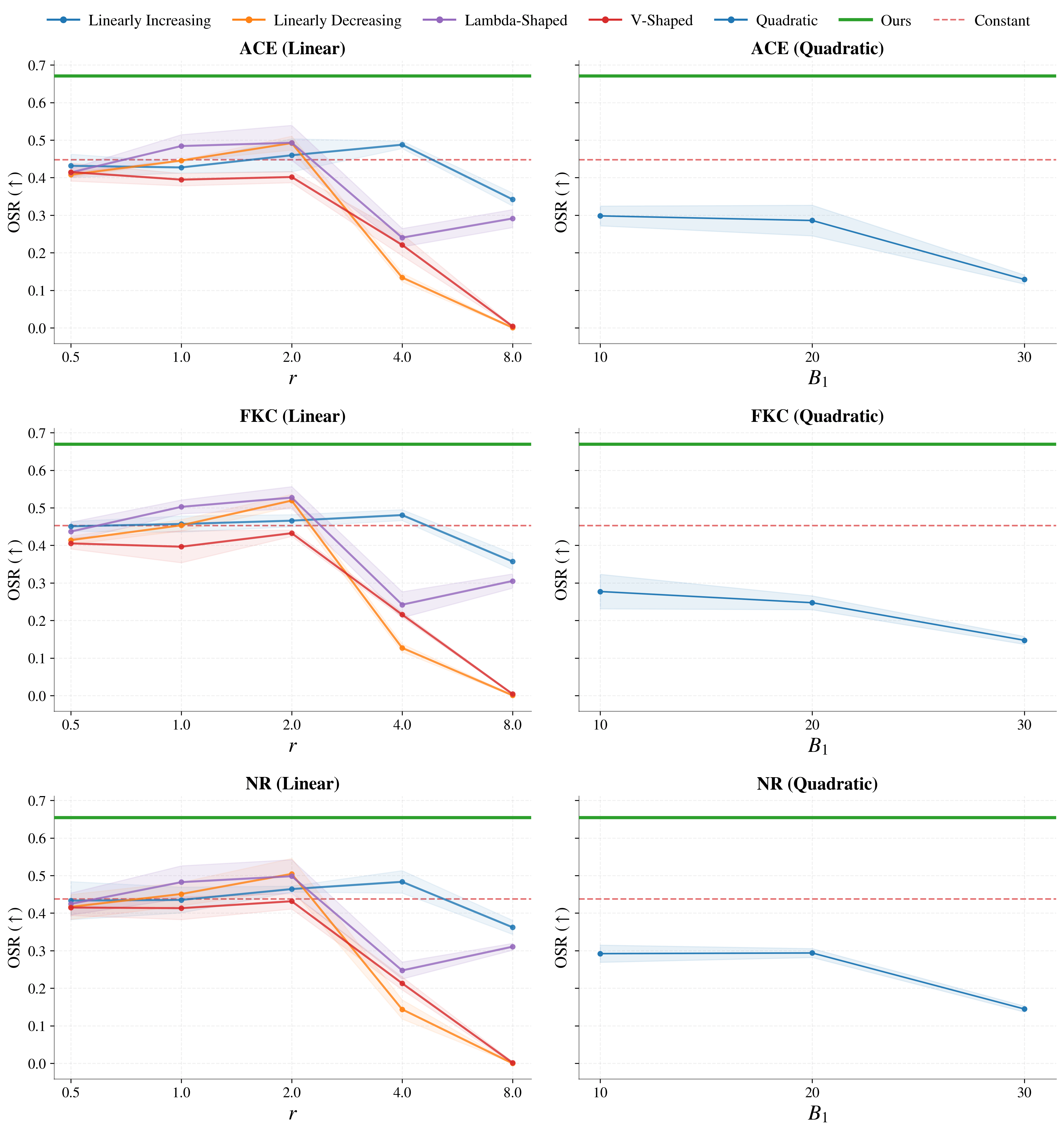}
    \caption{Scaffold decoration OSR (Optimization Success Rate) of ours (green) versus time-varying exponent baselines.}
    \label{fig:time-varying-scaffold}
\end{figure}

\begin{figure}
    \centering
    \includegraphics[width=0.8\linewidth]{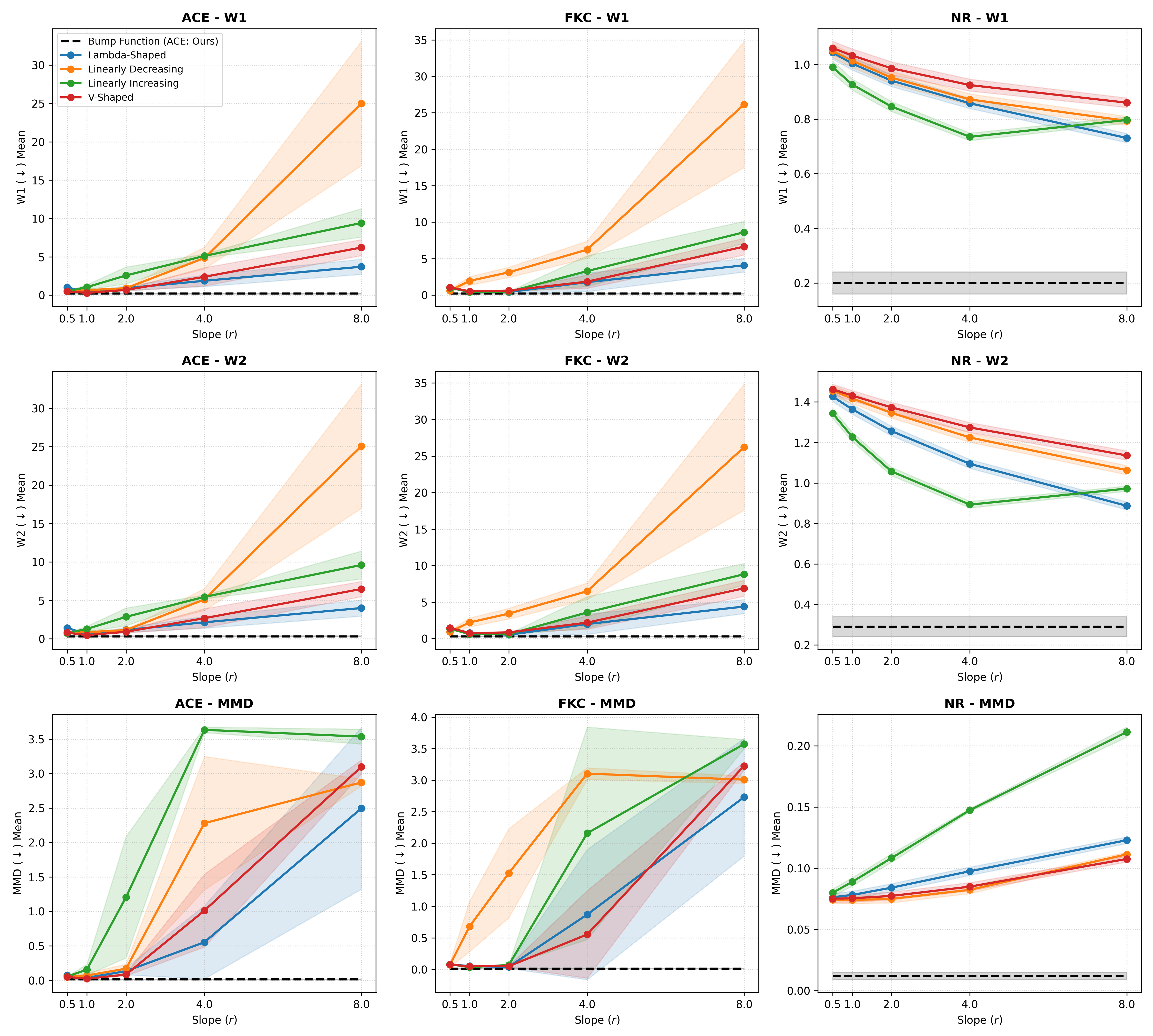}
    \caption{Distributional similarity metrics for the checker distribution of ours (dotted line) versus time-varying baselines.}
    \label{fig:time-varying-checker}
\end{figure}
\section{Background and Related Work}
\label{sec:related}

Our work sits at the intersection of three key areas: inference-time control for generative models, the theory of path sampling correctors, and the practical challenge of composing pretrained models for scientific discovery. We situate our contributions in relation to each (Table~\ref{tab:method_comparison}).

\textbf{Inference-Time Control and Guidance.} 
A major advantage of diffusion models is their steerability at inference time. Classifier-free guidance (CFG) is a widely used heuristic that mixes conditional and unconditional scores to enhance sample fidelity \citep{classifierguidance,chung2025cfg}. More recent work has proposed training-free or post hoc steering frameworks, including universal training-free guidance \citep{liu2024tfg,kumar2024universal}, scalable steering pipelines \citep{chen2025steering}, discrete model steering~\citep{rector-brooks2025steering}, refined steering via manifold-constrained guidance~\citep{li2026guided} and kernel density~\citep{hu2025kernel}, RL-based steering~\citep{wagenmaker2025steeringdiffusionpolicylatent}, and energy- or self-guided sampling approaches \citep{selfguidance2023,freedom2023,lgd2023}. These methods often adopt ratio-of-densities formulations (e.g., $p(x\mid c)^w/p(x)^{w-1}$), a powerful primitive that also underlies contrastive decoding in language models \citep{li2023contrastive} and controlled generation in discrete diffusion models \citep{schiff2025simple,hasan2025discrete}. While practically effective, these approaches focus on how to steer (e.g., by mixing scores) but provide no guarantees that the resulting sampling path remains valid when models are combined multiplicatively. Our work is the first to formalize and solve this underlying path-existence problem.

\textbf{Theoretical Foundations and Path Sampling Correctors.} 
The steering of stochastic processes has been studied using the Feynman--Kac (FK) formula \citep{stoltz2010free}, which connects pathwise sampling weights to principled unbiased estimators. This provides the basis for modern corrector methods in diffusion models \citep{skreta2025feynman,skreta2025superposition,mark2025feynman,hasan2025discrete}. Feynman--Kac Correctors (FKC) can in principle yield unbiased samples, but their guarantees require restrictive assumptions: (\emph{i}) homogeneity of models with compatible noise schedules and dimensions, and (\emph{ii}) constant exponents in the ratio-of-densities. These assumptions exclude important real-world cases, such as heterogeneous pretrained models or time-dependent negative exponents for denominator annealing. When violated, the FK framework provides no guidance, and \emph{Marginal Path Collapse} occurs. Our work extends the FK formulation by introducing a path-existence criterion for heterogeneous products and incorporating time-varying exponents into an FK-style partial differential equation, enabling robust correction beyond prior limits.

\textbf{Time-Dependent Guidance vs. ACE.} Recent works have proposed time-varying guidance schedules to improve sample quality, primarily in the context of image generation \citep{wang2024analysis,anonymous2026stagewise,zhang2025howmuchtoguide,koulischer2025dynamic,tuomas2024applying}. Crucially, these approaches are empirical heuristics designed to optimize the fidelity-diversity trade-off within a regime where the path is already valid (homogeneous CFG). In contrast, ACE targets the more general setting of heterogeneous model composition, where constant schedules often yield mathematically undefined (non-integrable) paths. Unlike heuristic schedules, the ACE schedule is derived from the Path Existence Criterion (Theorem~\ref{thm:main_compactly_supported}) to strictly enforce integrability. Furthermore, while heuristics tune for visual appeal, ACE utilizes the schedule to control the quantile radius of the distribution (Proposition~\ref{prop:radius_bound_general}), theoretically linking schedule intensity to variance reduction. Thus, ACE subsumes CFG heuristics as a special case while solving the fundamental existence problem in complex steering tasks.

\textbf{Composing Pretrained Models for Scientific Discovery.} 
The ability to combine specialized pretrained models is central in domains like structure-based drug design (SBDD). A researcher may wish to compose a protein-pocket conditioned ligand generator \citep{guan2023targetdiff,schneuing2024diffsbdd}, a molecule conformer generator \citep{geodiff}, and a denovo generation model \citep{edm} into a single product distribution. Similar needs arise in scaffold decoration and fragment-based pipelines \citep{zhou2022drlinker,hu2023scaffoldgvae,scaffoldgpt2025}. However, naive multiplicative composition can trigger path collapse, making the generative process invalid. Our ACE framework directly enables robust multi-expert composition by guaranteeing path existence and providing adaptive exponent scheduling. This transforms composition from an unreliable heuristic into a principled, verifiable procedure. Applications to SBDD and scaffold decoration are especially compelling, as contemporary diffusion methods \citep{peng2022pocket2mol,gao2024sbediff,huang2024interdiff,huang2024binddm} often assume fixed poses or homogeneous conditions\textemdash assumptions ACE can relax.

\subsection{{Common Steering Objectives as Ratio-of-Densities Paths.}}
\label{sec:background_ratios}

{Many standard inference-time control techniques can be unified under the ratio-of-densities framework, where the target density takes the form $p^\star(x) \propto \prod_i q^{(i)}(x)^{\gamma_i}$. Along the generative trajectory, this induces a time-indexed family $p_t^\star(x)\propto\prod_i q_t^{(i)}(x)^{\gamma_i}$, whose intermediate densities must remain normalizable for valid sampling. \textbf{Marginal Path Collapse} is the violation of this assumption.}

{\textbf{Classifier-free guidance (CFG):} To enhance samples consistent with condition $y$, CFG reweights the conditional model relative to the unconditional one, $p^\star(x) \propto p_\theta(x\mid y)^{\gamma}\,p_\theta(x)^{1-\gamma}$, implying the path $p_t^\star(x)\propto q_t(x\mid y)^{\gamma}\,q_t(x)^{1-\gamma}$ with score
  $\gamma\nabla\log q_t(x\mid y)+(1-\gamma)\nabla\log q_t(x)$.}

{\textbf{Product-of-experts:} To enforce multiple constraints $c_1,\dots,c_K$, one multiplies separate experts $q^{(k)}(x)=p_\theta(x\mid c_k)$, yielding $p^\star(x)\propto \prod_{k=1}^K q^{(k)}(x)$ and the path $p_t^\star(x)\propto \prod_{k=1}^K q_t^{(k)}(x)$.}

{\textbf{Reward-tilted / RL-style guidance:} Combining a base model $p_\theta(x)$ with a reward $r(x)$ creates a target $p^\star(x)\propto p_\theta(x)\exp(\beta r(x))$. This can be viewed as a product of base density $q_\text{base}(x)$ with a reward-density $q_{\text{rew}}(x)\propto\exp(\beta r(x))$, leading to the path $p_t^\star(x)\propto q_t^{\text{base}}(x)\,q_t^{\text{rew}}(x)$.}

{\textbf{Contrastive / reference model decoding:} To suppress generic patterns from a reference model $p_{\mathrm{ref}}$, the target is defined as $p^\star(x)\propto p_{\mathrm{LM}}(x)^{\gamma}\,p_{\mathrm{ref}}(x)^{-\gamma}$, inducing the path $p_t^\star(x)\propto q_t^{\mathrm{LM}}(x)^{\gamma}\,q_t^{\mathrm{ref}}(x)^{-\gamma}$.}

{\textbf{Bayesian composition:} When only conditionals $p_\theta(x\mid c_k)$ are available, Bayes yields $$p(x\mid c_{1:K})\propto {p(x)}\prod_k p(c_k\mid x)\propto \underset{\text{\tiny prior}}{p(x)}\prod_k p_\theta(x\mid c_k)/\underset{\text{\tiny marginal}}{p_\theta(x)}$$The assumptions and derivation under the heterogeneous conditioning structure is given in Eq. \ref{eq:bayesdecomposition}. This induces the path $p_t^\star(x)\propto q_t^{\mathrm{prior}}(x)\,\prod_k q_t^{(k)}(x)\,q_t^{\mathrm{marg}}(x)^{-1}$ which fits our general ratio-of-densities template.}

{\textbf{Scientific applications.} Our molecular DN/CONF/SBDD compositions for scaffold decoration, protein glue generation, fragment linking are the examples of Bayesian composition; the full derivations are given in
Section~\ref{subsec:heterogeneousconditioningstructureinscientifictasks}.}

\section{{Additional Analyses: Heterogeneous Composition and Path Collapse}} 
\label{app:rebuttal}

\subsection{{Heterogeneous Conditioning Structures in Scientific Tasks}}
\label{subsec:heterogeneousconditioningstructureinscientifictasks}

{\begin{definition}[Heterogeneous Conditioning Structure]
We define a heterogeneous conditioning structure as a property of the \textbf{target distribution} $p(X,Y\mid A,B)$ where conditioning variables exert partial or differing constraints on the system components. For instance, condition $A$ may constrain only the subset $X$, while condition $B$ constrains the joint system $(X,Y)$. This structure is distinct from global conditioning (where a condition affects all variables uniformly) and is pervasive in scientific modeling (see Figure~\ref{fig:hetero-figure}).
\end{definition}}

\begin{figure}[h]
    \centering
    \begin{subfigure}[b]{0.48\textwidth}
        \centering
        \begin{tikzpicture}[>=latex, node distance=0.8cm]
            \node[draw, rounded corners, minimum width=1.5cm, minimum height=0.8cm] (A) {Cond.\ $A$};
            \node[draw, rounded corners, minimum width=1.5cm, minimum height=0.8cm, right=1cm of A] (B) {Cond.\ $B$};

            \node[draw, rounded corners, minimum width=1.2cm, minimum height=0.8cm, below=1cm of A] (X) {$X$};
            \node[draw, rounded corners, minimum width=1.2cm, minimum height=0.8cm, right=1.1cm of X] (Y) {$Y$};

            \node[draw, rounded corners, minimum width=2.0cm, minimum height=0.8cm, below=1cm of X, align=center] (base) {Joint prior\\ $p(x,y)$};

            \draw[->] (A) -- (X);
            \draw[->] (B) -- (X);
            \draw[->] (B) -- (Y);

            \draw[->] (base) -- (X);
            \draw[->] (base) -- (Y);

            \node[align=center, font=\scriptsize, below=0.3cm of base] (eq) 
            {$p(x,y\mid A,B)\;\propto\; p(x,y\mid B)\,\dfrac{p(x\mid A)}{p(x)}$};
        \end{tikzpicture}
        \caption{Heterogeneous conditioning structure.}
        \label{fig:selinf-abstract}
    \end{subfigure}
    \hfill
    \begin{subfigure}[b]{0.48\textwidth}
        \centering
        \begin{tikzpicture}[>=latex, node distance=0.7cm]
            \fill[blue!8] (0,-1.4) rectangle (1.4,1.4);
            \node[blue!60!black, font=\scriptsize] at (1.05,1.15) {$A:\ x\ge0$};

            \draw[red!50!black, dashed] (-1.4,1.4) -- (1.4,-1.4); 

            \fill[green!20!blue!20] (0,0) rectangle (1.4,1.4); 
            \fill[green!20!blue!20] (0,0) -- (1.4,0) -- (1.4,-1.4) -- cycle; 

            \fill[red!8] (-1.4,1.4) -- (0,1.4) -- (0,0)  -- cycle; 

            \node[blue!60!black, font=\scriptsize] at (1.05,1.15) {$A:\ x\ge0$}; 
            \node[red!60!black, font=\scriptsize] at (-0.95,1.05) {$B:\ x+y\ge0$}; 
            
            \node[font=\scriptsize] at (-0.9,-1.1) {$p(x,y\mid A,B)$};

            \draw[->] (-1.4,0) -- (1.4,0) node[anchor=west] {$x$};
            \draw[->] (0,-1.4) -- (0,1.4) node[anchor=south] {$y$};

            \node[draw, rounded corners, minimum width=2.1cm, minimum height=0.8cm, at={(3cm, 2cm)}, align=center] (pxA) {$p(x\mid A)$\\(1D expert)};
            \node[draw, rounded corners, minimum width=2.1cm, minimum height=0.8cm, below=0.8cm of pxA, align=center] (pxyB) {$p(x,y\mid B)$\\(2D expert)};
            \node[draw, rounded corners, minimum width=2.1cm, minimum height=0.8cm, below=0.8cm of pxyB, align=center] (px) {$p(x)$\\(1D marginal)};

            \node[align=center, font=\scriptsize, below=0.1cm of px] (eq2)
            {$p(x,y\mid A,B)\propto p(x,y\mid B)\,\dfrac{p(x\mid A)}{p(x)}$};

        \end{tikzpicture}
        \caption{Synthetic checker: 1D+2D experts.}
        \label{fig:selinf-checker}
    \end{subfigure}
    \caption{{\textbf{Illustration of heterogeneous conditioning structure and its decomposition.}
    (a) Heterogeneous conditioning structure: condition $A$ acts only on $X$, condition $B$ acts on $(X,Y)$,
    leading to the Bayes factorization $p(x,y\mid A,B)\propto p(x\mid A)\,p(x,y\mid B)/p(x)$.
    (b) Synthetic checker: $A=\{x\ge0\}$ constrains only the $x$-axis, while $B=\{x+y\ge0\}$ constrains the joint plane, motivating a heterogeneous composition of two 1D experts and one 2D expert.}}
    \label{fig:hetero-figure}
\end{figure}

{\textbf{Decomposition and Diffusion Steering.} Crucially, these heterogeneous target structures can be decomposed via Bayes' rule (Eq.~\ref{eq:bayesdecomposition}) into products of simpler marginal or conditional distributions. This decomposition is powerful because it allows complex targets to be modeled by composing separate pretrained experts. We visualize this abstractly in Figure~\ref{fig:selinf-abstract}, which highlights how the joint prior governs the fusion of heterogeneous signals. For a concrete example, consider the constrained distribution $p(x,y \mid x \ge 0,\, x+y \ge 0)$ in Figure~\ref{fig:selinf-checker}. It decomposes into $$p(x,y \mid A,B) \propto p(x,y \mid B)\frac{p(x \mid A)}{p(x)}$$allowing a heterogeneous task to be solved by combining a joint expert $p(x,y|B)$ with a marginal expert $p(x|A)$. While this compositional approach aligns with the principles of diffusion steering (e.g., FKC~\citep{skreta2025feynman}), previous frameworks overlook the implications of mixing experts with differing domains and schedules, a "blind spot" that leads to the instabilities discussed in Appendix~\ref{subsec:prevalenceandconsequencesofmarginalpathcollapse}.}

{\textbf{Prevalence in Molecular Generation.} Heterogeneous conditioning is the norm in molecular discovery. Figures~\ref{fig:selinf-mol} and \ref{fig:selinf-tasks} illustrate three critical tasks—\emph{scaffold decoration}~\citep{zhang2023delete,ghorbani2024autofragdiff,xie2024diffdec}, \emph{protein-glue generation}~\citep{mogaki2019molecular}, and \emph{fragment linking}~\citep{difflinker}—that fundamentally rely on this structure. In Examples~\ref{example:scaffold-basedleadoptimization}--\ref{example:fragment-baseddrugdesign}, we demonstrate that each of these tasks decomposes into a combination of three generative primitives: {de novo generation (DN)}~\citep{edm}, {conformer generation (CONF)}~\citep{geodiff}, and {structure-based drug design (SBDD)}~\citep{schneuing2024diffsbdd}. Consequently, solving these high-value problems requires the ability to faithfully compose these specific model families.}

{\textbf{Common Notation and Primitives.}
In the following examples, we represent a molecule with $N$ atoms as a 3D point cloud $\mathcal{M}$ with bond topology $\mathcal{T}$. We denote the target protein pocket by $\mathcal{P}$ (\texttt{.pdb}). The condition of \emph{stable binding} is denoted as $\mathcal{M}\leftrightarrow\mathcal{P}$ (satisfied when docking energy $U^{\text{Dock}} < \tau^{\text{Dock}}$).
We utilize three fundamental pretrained model families:
\begin{itemize}
    \item \textbf{DN}: Unconditional \textit{De Novo} generation ($p(\mathcal{M})$).
    \item \textbf{CONF}: Topology-conditioned \textit{Conformer} generation ($p(\mathcal{M} \mid \mathcal{T})$).
    \item \textbf{SBDD}: Pocket-conditioned \textit{Structure-Based Drug Design} ($p(\mathcal{M} \mid \mathcal{P})$).
\end{itemize}}

\begin{figure}[h]
    \centering

    \begin{subfigure}[b]{0.7\textwidth} 
        \centering
        \begin{tikzpicture}[>=latex, node distance=0.8cm] 
            \node[align=center,draw, rounded corners, minimum width=2.3cm, minimum height=0.9cm] (topo) {Topology cond.\ $A$\\$\mathcal{T}(\mathcal{M}^{\text{sc}})=\mathcal{T}^{\text{sc}}$};
            \node[align=center,draw, rounded corners, minimum width=2.3cm, minimum height=0.9cm, right=2.1cm of topo] (pock) {Pocket cond.\ $B$\\$\mathcal{M}\leftrightarrow \mathcal{P}$};

            \node[draw, rounded corners, minimum width=1.4cm, minimum height=0.8cm, below=1cm of topo] (msc) {$\mathcal{M}^{\text{sc}}$};
            \node[draw, rounded corners, minimum width=1.4cm, minimum height=0.8cm, right=2.1cm of msc] (mr) {$\mathcal{M}^{\text{R}}$};
            \node[left=0.8cm of msc, align=center] {Joint prior \\ $\mathcal{M}=(\mathcal{M}^{\text{sc}},\mathcal{M}^{\text{R}})$};
            \node[draw, rounded corners, minimum width=3.0cm, minimum height=0.8cm, below=1cm of msc, align=center] (conf) {Conformer\\$p(\mathcal {M}^{\text{sc}}\mid A)$};
            \node[draw, rounded corners, minimum width=3.0cm, minimum height=0.8cm, right=0.5cm of conf, align=center] (sBDD) {SBDD\\$p(\mathcal{M}\mid B)$};
            \node[draw, rounded corners, minimum width=3.0cm, minimum height=0.8cm, left=0.5cm of conf, align=center] (dn) {De novo (DN)\\$p(\mathcal{M}^{\text{sc}})$};

            \draw[->] (topo) -- (msc);
            \draw[->] (pock) -- (msc);
            \draw[->] (pock) -- (mr);

            \draw[->] (msc) -- (dn);
            \draw[->] (msc) -- (conf);
            \draw[->] (msc) -- (sBDD);
            \draw[->] (mr) -- (sBDD);

            \node[align=center, font=\scriptsize, below=0.35cm of conf] (eq3)
            {$p(\mathcal{M}\mid A,B)\propto p(\mathcal{M}\mid B)\,\dfrac{p(\mathcal{M}^{\text{sc}}\mid A)}{p(\mathcal{M}^{\text{sc}})}$};
        \end{tikzpicture}
        \caption{Scaffold decoration: DN/CONF/SBDD composition.}
        \label{fig:selinf-mol}
    \end{subfigure}

    \begin{subfigure}[b]{1\textwidth} 
        \centering
    \includegraphics[width=1\linewidth]{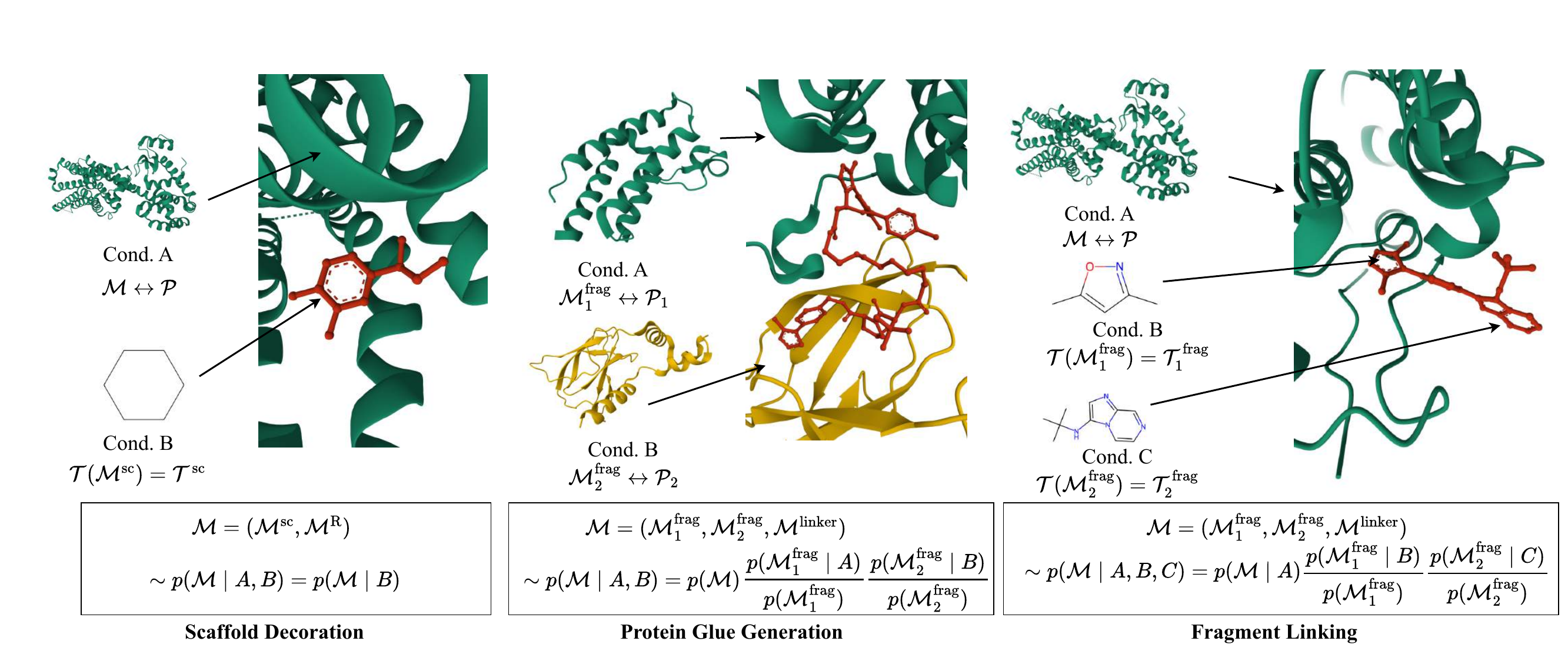}
    \caption{Examples in molecular generation: Scaffold Decoration, Protein Glue Generation, Fragment Linking}
    \label{fig:selinf-tasks}
    \end{subfigure}

    \caption{{\textbf{Formulation of heterogeneous condition generation and its dominance in molecular generative tasks.} (a) Flexible-pose scaffold decoration mirrors the same structure: topology acts only on the scaffold coordinates, pocket binding acts on the full ligand, and the de-novo model provides the marginal over scaffolds; together they form a DN/CONF/SBDD ratio-of-densities target. (b) Illustration of three molecular generative tasks formulated under the heterogeneous-conditioning framework: scaffold decoration, protein-glue generation, and fragment linking. Each panel specifies the corresponding conditions, target distribution, and the decomposition into expert models for DN, CONF, and SBDD.}}
    \label{fig:hetero-figure-mol}
\end{figure}

\begin{example} [Scaffold Decoration]
\label{example:scaffold-basedleadoptimization}
    {Scaffold decoration \citep{xie2024diffdec} is a pivotal strategy in lead optimization, enabling the refinement of physicochemical properties and potency while preserving the biological activity of a validated core structure. The goal is to generate an R-group side chain $\mathcal{M}^\text{R}$ optimized for binding to a pocket $\mathcal{P}$, while preserving a conserved scaffold backbone $\mathcal{M}^\text{sc}$ with fixed topology $\mathcal{T}^\text{sc}$ (\texttt{SMILES}).}
    
    {\textbf{Formulation.} The task is to sample a molecule $\mathcal{M}=(\mathcal{M}^\text{sc},\mathcal{M}^\text{R})$ from:
    \begin{equation}
        (\mathcal{M}^\text{sc},\mathcal{M}^\text{R})\sim p(\mathcal{M}^\text{sc},\mathcal{M}^\text{R} \mid \mathcal{T}(\mathcal{M}^\text{sc})=\mathcal{T}^\text{sc}, (\mathcal{M}^\text{sc},\mathcal{M}^\text{R})\leftrightarrow\mathcal{P}).
        \label{eq:scaffold_decoration_formulation}
    \end{equation}
    This exhibits the heterogeneous conditioning structure $p(X,Y\mid A,B)$ where $X=\mathcal{M}^\text{sc}$ (constrained by topology $A$) and $Y=\mathcal{M}^\text{R}$ (where the joint system is constrained by binding $B$).}
    
    {\textbf{Decomposition.} Applying Bayes' rule, we decompose the target into a ratio of experts:
    \begin{equation}
        p^*(\mathcal{M}^\text{sc},\mathcal{M}^\text{R}) \propto \frac{p(\mathcal{M}^\text{sc} \mid \mathcal{T}(\mathcal{M}^\text{sc})=\mathcal{T}^\text{sc}) \; p(\mathcal{M} \mid \mathcal{M}\leftrightarrow \mathcal{P})}{p(\mathcal{M}^\text{sc})}.
    \end{equation}
    ACE allows us to sample from this using off-the-shelf experts:
    \begin{align}
        p(\mathcal{M}^\text{sc} \mid \mathcal{T}^\text{sc}) &\rightarrow \textbf{CONF} \quad \text{(Fixes scaffold geometry)} \\
        p(\mathcal{M}^\text{sc}) &\rightarrow \textbf{DN} \quad \;\;\; \text{(Prior correction)} \\
        p(\mathcal{M} \mid \mathcal{P}) &\rightarrow \textbf{SBDD} \quad \text{(Optimizes binding)}
    \end{align}}
\end{example}

\begin{example}[Protein Glue Generation]
\label{example:proteingluegeneration}
    {Protein glues \citep{mogaki2019molecular} represent a transformative therapeutic class capable of targeting ``undruggable'' proteins by inducing novel interactions, such as ubiquitin-mediated degradation. This task requires designing a linker $\mathcal{M}^{\mathrm{linker}}$ that connects two molecular fragments $\mathcal{M}^{\mathrm{frag}}_1, \mathcal{M}^{\mathrm{frag}}_2$ such that they simultaneously bind to two distinct protein surfaces $\mathcal{P}_1, \mathcal{P}_2$.} We assume each fragment-specific pocket constraint acts locally through its fragment coordinates, while the global molecular prior couples the full molecule.
    
    {\textbf{Formulation.} The full molecule $\mathcal{M} = (\mathcal{M}^{\mathrm{frag}}_1, \mathcal{M}^{\mathrm{frag}}_2, \mathcal{M}^{\mathrm{linker}})$ is sampled from:
    \begin{equation}
        \mathcal{M} \sim p(\mathcal{M} \mid \mathcal{M}^\mathrm{frag}_1 \leftrightarrow \mathcal{P}_1, \mathcal{M}^\mathrm{frag}_2 \leftrightarrow \mathcal{P}_2).
        \label{eq:protein_glue_formulation}
    \end{equation}
    This corresponds to $p(X,Y \mid A,B)$ where $A$ constrains fragment 1 and $B$ constrains fragment 2.}
    
    {\textbf{Decomposition.} We decompose the posterior to isolate the binding constraints:
    \begin{equation}
        p^*(\mathcal{M}) \propto \frac{
            p(\mathcal{M}^{\mathrm{frag}}_1 \mid \mathcal{P}_1) \;
            p(\mathcal{M}^{\mathrm{frag}}_2 \mid \mathcal{P}_2) \;
            p(\mathcal{M})
        }{
            p(\mathcal{M}^{\mathrm{frag}}_1)\, p(\mathcal{M}^{\mathrm{frag}}_2)
        }.
        \label{eq:protein_glue_pstar}
    \end{equation}
    This task effectively composes two local binding models with a global prior:
    \begin{align}
        p(\mathcal{M}^{\mathrm{frag}}_i \mid \mathcal{P}_i) &\rightarrow \textbf{SBDD} \quad \text{(Fragment-specific binding)} \\
        p(\mathcal{M}^{\mathrm{frag}}_i),\, p(\mathcal{M}) &\rightarrow \textbf{DN} \quad \;\;\; \text{(Global \& Marginal priors)}
    \end{align}}
\end{example}

\begin{example}[Fragment Linking]
\label{example:fragment-baseddrugdesign}
    {Fragment linking \citep{bedwell2022development} is the cornerstone of fragment-based drug discovery (FBDD), enabling the construction of high-affinity ligands by chemically bridging low-affinity fragments bound to adjacent sub-pockets. The goal is to assemble a ligand by connecting two pre-determined fragments $\mathcal{M}^{\mathrm{frag}}_1, \mathcal{M}^{\mathrm{frag}}_2$ (with fixed chemical topologies $\mathcal{T}_1, \mathcal{T}_2$) via a generated linker $\mathcal{M}^{\mathrm{linker}}$, ensuring the final structure binds to a target pocket $\mathcal{P}$.} We assume that the local topology constraints on $\mathcal{M}^\text{frag}_1, \mathcal{M}^\text{frag}_2$ affect the remaining ligand only through the fragment coordinates, while pocket binding acts globally on $\mathcal{M}$.
    
    {\textbf{Formulation.} The molecule $\mathcal{M}$ is sampled from:
    \begin{equation}
        \mathcal{M} \sim p(\mathcal{M} \mid \mathcal{T}(\mathcal{M}^{\mathrm{frag}}_1) = \mathcal{T}_1,\; \mathcal{T}(\mathcal{M}^{\mathrm{frag}}_2) = \mathcal{T}_2,\; \mathcal{M} \leftrightarrow \mathcal{P}).
        \label{eq:fusion_fbdd_formulation}
    \end{equation}
    This represents a complex conditioning structure $p(X,Y\mid A,B,C)$ where $A,B$ enforce local topology and $C$ enforces global binding.}
    
    {\textbf{Decomposition.} The target density factorizes as:
    \begin{equation}
        p^*(\mathcal{M}) \propto \frac{
            p(\mathcal{M}^{\mathrm{frag}}_1 \mid \mathcal{T}_1) \;
            p(\mathcal{M}^{\mathrm{frag}}_2 \mid \mathcal{T}_2) \;
            p(\mathcal{M} \mid \mathcal{P})
        }{
            p(\mathcal{M}^{\mathrm{frag}}_1) \, p(\mathcal{M}^{\mathrm{frag}}_2)
        }.
        \label{eq:fusion_fbdd_pstar}
    \end{equation}
    ACE constructs this path by coordinating three distinct model families:
    \begin{align}
        p(\mathcal{M}^{\mathrm{frag}}_i \mid \mathcal{T}_i) &\rightarrow \textbf{CONF} \quad \text{(Fragment topology)} \\
        p(\mathcal{M}^{\mathrm{frag}}_i) &\rightarrow \textbf{DN} \quad \;\;\; \text{(Marginal correction)} \\
        p(\mathcal{M} \mid \mathcal{P}) &\rightarrow \textbf{SBDD} \quad \text{(Global binding)}
    \end{align}}
\end{example}

\subsection{More Use Cases: Fragment Linking inside a Protein Pocket}
\label{app:expsetupforfragmentlinking}

\textbf{Dataset.}
We validate our fragment linking framework on the CrossDocked2020 test set~\citep{francoeur2020crossdocked}. Following the preprocessing protocol used in flexible-pose scaffold decoration, we exclude complexes whose reference ligand contains more than 29 atoms, since this exceeds the predefined maximum supported by the pretrained EDM model. We also remove atoms other than C, O, N, and F due to the limited atom-type support of EDM. For each eligible pocket--ligand complex, we construct a fragment-linking task by extracting two non-overlapping fragment topologies from the reference ligand. Specifically, we apply the same scaffold extraction procedure used in the scaffold decoration experiment twice, where each fragment is defined as a randomly selected ring together with atoms within a bond distance of two from the ring. We retain one valid pocket--fragment-pair per complex when two non-overlapping fragments can be successfully extracted. This procedure yields 66 valid pocket--fragment pairs, which are used for evaluation.

\textbf{Implementation Details.}
As described in Example~\ref{example:fragment-baseddrugdesign}, fragment linking inside a pocket aims to generate a molecule $\mathcal{M}$ that contains two given fragment topologies, $\mathcal{T}_1$ and $\mathcal{T}_2$, while binding to the target pocket $\mathcal{P}$. We instantiate this target distribution by composing two fragment-preserving conformer/de novo experts with a pocket-conditioned SBDD model, following the ratio-of-densities formulation in Eq.~\ref{eq:fusion_fbdd_formulation}. Concretely, we use the same pretrained density models as in flexible-pose scaffold decoration: GeoDiff~\citep{geodiff} as CONF, DiffSBDD~\citep{schneuing2024diffsbdd} as SBDD, and EDM~\citep{edm} as DN. DiffSBDD is trained on CrossDocked, while GeoDiff and EDM are trained on QM9~\citep{qm9}. The two fragment topology constraints are incorporated by applying the fragment-preserving density-ratio term independently to each extracted fragment, while the SBDD model provides the pocket-binding condition.

We use the same ACE sampling configuration as in the scaffold decoration experiment. We perform 500 denoising steps with $\log q$ updates and apply resampling every 10 steps. For the ratio-of-density weight, we evaluate $\omega \in \{1.1, 1.2, 1.3, 1.4\}$ and use the bump function with $B_1=30$ and $B_2 = 0.037, 0.136, 0.236,$ and $0.336$, respectively. These $B_2$ values are selected to satisfy the prescribed path-existence criterion under the fixed choice of $B_1=30$. Since the pretrained diffusion models generate molecules as point clouds, we apply the same post-processing step for bond prediction. The two input fragment topologies are preserved, and additional bonds are inferred based on geometric proximity and valence constraints. We run the fragment-linking experiment with a single random seed. All experiments are conducted using Python 3.8.9 on NVIDIA A6000 GPUs.

\textbf{Baselines.}
We compare ACE with diffusion-steering baselines that approximate the same composed target density. In particular, we include FKC and NR, where NR corresponds to the non-resampling variant of FKC and is equivalent to a classifier-free-guidance-style simulation without the resampling step. These baselines use the same pretrained component models as ACE, differing only in the sampling and weighting procedure. In addition, we compare against FFLOM~\citep{jin2023fflom}, a task-specific fragment-linking baseline from the literature. FFLOM is included to evaluate the practical advantage of our zero-shot composition framework against a dedicated fragment-linking method.

\textbf{Metrics.}
We use the same evaluation metrics as in the flexible-pose scaffold decoration experiment. Pocket compatibility is measured by the Vina score computed using QVina2~\citep{qvina}. When QVina2 fails, for example due to invalid or fragmented molecules, we assign a score of zero. We report \textit{Pocket-Worst}, the average pocket-wise worst Vina score; \textit{Top25\%}, the top 25th percentile among all generated molecules; and the mean and standard deviation over all generated molecules. We also report Optimization Success Rate (OSR), defined as the fraction of generated molecules whose docking scores are better than that of the reference ligand. Finally, we evaluate drug-likeness using QED~\citep{bickerton2012qed}, SA, and the Lipinski score, defined as the number of satisfied Lipinski rules divided by five~\citep{LIPINSKI19973}. Full results are in Table~\ref{tab:fragment-linking}.

\begin{table*}[h]
\caption{Comparison of ACE, FKC, NR and Task-Specific Baseline (FFLOM) in fragment linking. Boldface and underlining indicate the best and second-best values, respectively. Formatting is omitted when all values are identical.}
\centering
\resizebox{\linewidth}{!}{%
\begin{tabular}{llccccccccccccccc}
\toprule
\multirow{2}{*}{Method} & \multirow{2}{*}{$\omega$} & \multirow{2}{*}{PEC} & \multirow{2}{*}{OSR($\uparrow$)} & \multicolumn{4}{c}{Vina Score ($\downarrow$)} & \multicolumn{3}{c}{QED ($\uparrow$)} & \multicolumn{3}{c}{SA ($\uparrow$)} & \multicolumn{3}{c}{Lipinski ($\uparrow$)}\\
\cmidrule(lr){5-8} \cmidrule(lr){9-11} \cmidrule(lr){12-14} \cmidrule(lr){15-17}
 &  &  & & Pocket-Worst & Top25\% & Avg & Std & Top25\% & Top50\% & Avg & Top25\% & Top50\% & Avg & Top25\% & Top50\% & Avg \\
\midrule

\multirow{4}{*}{ACE}
 & 1.1 & \checkmark & \textbf{0.46} & \underline{-4.07} & \textbf{-7.70} & \underline{-5.32} & 0.87 & 0.60 & 0.51 & 0.49 & 0.62 & 0.54 & 0.52 & 1.00 & 1.00 & 0.97 \\
 & 1.2 & \checkmark & 0.45 & -3.95 & \textbf{-7.70} & -5.28 & 0.86 & \underline{0.61} & \textbf{0.52} & \textbf{0.50} & 0.62 & 0.53 & 0.53 & 1.00 & 1.00 & 0.98 \\
 & 1.3 & \checkmark & \textbf{0.46} & -3.96 & -7.40 & -5.24 & 0.87 & 0.59 & 0.48 & 0.49 & 0.60 & 0.52 & 0.52 & 1.00 & 1.00 & 0.97 \\
 & 1.4 & \checkmark & \textbf{0.46} & -3.64 & \textbf{-7.70} & -5.18 & 0.96 & \textbf{0.64} & \textbf{0.52} & \textbf{0.50} & 0.62 & 0.53 & 0.53 & 1.00 & 1.00 & 0.97 \\
\hline

\multirow{4}{*}{FKC}
 & 1.1 & $\times$ & 0.31 & -3.11 & -7.00 & -4.71 & 1.07 & 0.51 & 0.40 & 0.40 & \underline{0.65} & 0.54 & 0.54 & 1.00 & 1.00 & 0.98 \\
 & 1.2 & $\times$ & 0.34 & -3.35 & -7.00 & -4.86 & 1.04 & 0.55 & 0.42 & 0.43 & 0.63 & 0.53 & 0.53 & 1.00 & 1.00 & \textbf{0.99} \\
 & 1.3 & $\times$ & 0.30 & -3.45 & -7.10 & -4.87 & 0.97 & 0.51 & 0.40 & 0.40 & 0.62 & 0.54 & 0.53 & 1.00 & 1.00 & 0.98 \\
 & 1.4 & $\times$ & 0.36 & -3.27 & -7.20 & -4.81 & 1.04 & 0.54 & 0.43 & 0.42 & \underline{0.65} & \underline{0.55} & \underline{0.55} & 1.00 & 1.00 & \textbf{0.99} \\
\hline

\multirow{4}{*}{NR}
 & 1.1 & $\times$ & 0.37 & -3.28 & -7.40 & -4.89 & 0.99 & 0.56 & 0.43 & 0.41 & 0.58 & 0.51 & 0.52 & 1.00 & 1.00 & 0.98 \\
 & 1.2 & $\times$ & 0.40 & -3.54 & -7.30 & -5.05 & 0.99 & 0.55 & 0.41 & 0.41 & 0.59 & 0.53 & 0.52 & 1.00 & 1.00 & 0.98 \\
 & 1.3 & $\times$ & 0.36 & -3.45 & -7.20 & -4.99 & 1.02 & 0.54 & 0.36 & 0.40 & 0.60 & 0.53 & 0.52 & 1.00 & 1.00 & 0.97 \\
 & 1.4 & $\times$ & 0.33 & -3.62 & -7.30 & -4.92 & 0.96 & 0.52 & 0.40 & 0.40 & 0.59 & 0.51 & 0.51 & 1.00 & 1.00 & 0.98 \\
\hline

FFLOM
 & - & - & 0.24 & \textbf{-5.28} & -6.80 & \textbf{-5.93} & 0.47 & 0.59 & 0.46 & 0.45 & \textbf{0.70} & \textbf{0.61} & \textbf{0.60} & 1.00 & 1.00 & 0.98 \\
\hline
\end{tabular}
}
\label{tab:fragment-linking}
\end{table*}

\subsection{A Survey on Schedules: The Infeasibility of Homogeneous Composition}

{\textbf{From Heterogeneous Targets to Heterogeneous Paths.} While the heterogeneous \textit{conditioning structure} dictates which expert models must be combined to define the target at $t=1$, the resulting \textit{probability path} for $t \in (0,1)$ depends on the noise schedulers of those experts. This creates a systemic challenge: different model families (DN, CONF, SBDD) are historically trained with different noise schedules. We conducted a comprehensive survey of existing diffusion and flow-based models across these domains (summarized in Tables~\ref{tab:schedulersforconformergeneration}, \ref{tab:schedulersfordenovogeneration}, and \ref{tab:schedulersforsbdd}). The results show a lack of standardization, with the literature employing a fragmented mix of sigmoid, polynomial, and cosine schedules. Therefore, constructing generative paths for heterogeneous scientific tasks \emph{inevitably} leads to heterogeneous schedules, where the mismatch in noise contraction rates induces path collapse. This confirms that the collapse phenomenon is not an edge case, but an inherent obstacle arising from the modular nature of scientific generative modeling. We provide a detailed quantitative analysis of this phenomenon, demonstrating its systematic prevalence across standard model combinations and its detrimental impact on sampling, in Appendix~\ref{subsec:prevalenceandconsequencesofmarginalpathcollapse}.}

{\textbf{The Infeasibility of Homogeneous Composition.} Since Tables~\ref{tab:schedulersforconformergeneration}--\ref{tab:schedulersforsbdd} show that several schedulers appear across tasks, one might wonder whether composing models under a \emph{shared} (homogeneous) scheduler could avoid path collapse. We clarify that using a common schedule across experts is generally infeasible in molecular generation due to two fundamental constraints.}

{\textbf{1. Heterogeneity in Representations.}  Unlike images, molecules admit multiple incompatible representations. Some models encode atom types as continuous one-hot vectors $H^{\mathrm{one\text{-}hot}} \in \mathbb{R}^N$ evolving under Gaussian convolution; others use categorical variables with discrete states \citep{flowmol3,d3fg} or operate in latent spaces \citep{edmsyco,drugdiff}. Because these spaces differ in dimensionality, continuity, and semantic meaning, they cannot be embedded into a unified ambient space where a single diffusion path is consistently defined. This fundamentally constrains which experts can be mathematically composed.}

{\textbf{2. The ``Empty Intersection'' of Schedules.} Even if we restrict our scope to models with compatible continuous representations, a homogeneous triplet often does not exist. In our scaffold decoration experiment, all applicable DN models (EDM, GCDM) employ \textbf{quadratic} schedulers, while SBDD models (DiffSBDD, DualDiff) use either quadratic or sigmoid schedules. However, state-of-the-art CONF models do not use the quadratic schedulers but mostly adopt the \textbf{sigmoid} schedules. Consequently, it is impossible to construct a homogeneous DN/CONF/SBDD triplet. This structural mismatch necessitates the use of heterogeneous schedules (quadratic for DN/SBDD, sigmoid for CONF), making the path-existence guarantees of ACE a prerequisite for reliable generation.}

\begin{table*}[h]
\caption{{\textbf{(CONF)} Diffusion and flow-based models for molecular conformer generation. $N$ denotes the number of atoms. Here, we put the data distribution at $t=1$, $\beta_t$ controls the data signal while $\alpha_t$ controls the noise signal.}}
\centering
\scriptsize
\renewcommand{\arraystretch}{1.25}
\resizebox{\textwidth}{!}{%
{\begin{tabular}{p{3cm} p{7.8cm} p{3.2cm} p{1.8cm}}
\hline
\textbf{Model / Paper} & \textbf{Scheduler} & \textbf{Main Datasets} & \textbf{Domain} \\
\midrule
\textbf{GeoDiff}~\citep{geodiff} &
\(
\begin{aligned}
\alpha_t &= 
\exp\!\Big(
-\tfrac56
[
\log(1 + e^{12(t-\tfrac12)})
-
\log(1 + e^{-6})
]
\Big) \\
\beta_t &= \sqrt{1 - 
\exp\!\Big(
- \tfrac53
[
\log(1 + e^{12(t-\tfrac12)})
-
\log(1 + e^{-6})
]
\Big)}
\end{aligned}
\)
& GEOM-QM9, GEOM-Drugs & \(\mathbb{R}^{3N}\) \\
\midrule
\textbf{TorsionDiff}~\citep{torsiondiff} &
\(
\begin{aligned}
\alpha_t &= (\frac{\pi}{100})^t \pi^{1-t}\\ 
\beta_t &= 1
\end{aligned}
\)
& GEOM-Drugs & \(SO(2)^M\) \\
\midrule
\textbf{EC-Conf}~\citep{ecconf} &
\(
\begin{aligned}
\alpha_t &= T(1-t)\\
\beta_t &= 1
\end{aligned}
\)
& GEOM-QM9, GEOM-Drugs & \(\mathbb{R}^{3N}\) \\
\midrule
\textbf{GADIFF}~\citep{gadiff}&
\(
\begin{aligned}
\alpha_t &= 
\exp\!\Big(
-\tfrac56
[
\log(1 + e^{12(t-\tfrac12)})
-
\log(1 + e^{-6})
]
\Big) \\
\beta_t &= \sqrt{1 - 
\exp\!\Big(
- \tfrac53
[
\log(1 + e^{12(t-\tfrac12)})
-
\log(1 + e^{-6})
]
\Big)}
\end{aligned}
\)
& GEOM-QM9, GEOM-Drugs & \(\mathbb{R}^{3N}\) \\
\midrule
\textbf{ET-Flow}~\citep{etflow} &
\(
\begin{aligned}
\alpha_t &= 1 - t\\
\beta_t &= t
\end{aligned}
\)
& GEOM-QM9, GEOM-Drugs & \(\mathbb{R}^{3N}\) \\
\midrule
\textbf{AGDIFF}~\citep{agdiff} &
\(
\begin{aligned}
\alpha_t &= 
\exp\!\Big(
-\tfrac56
[
\log(1 + e^{12(t-\tfrac12)})
-
\log(1 + e^{-6})
]
\Big) \\
\beta_t &= \sqrt{1 - 
\exp\!\Big(
- \tfrac53
[
\log(1 + e^{12(t-\tfrac12)})
-
\log(1 + e^{-6})
]
\Big)}
\end{aligned}
\)
& GEOM-QM9, GEOM-Drugs & \(\mathbb{R}^{3N}\) \\
\midrule
\textbf{LoQI}~\citep{loqi}&
\(
\begin{aligned}
\alpha_t &= 
\cos\!\left( \frac{\pi}{2}t \right)
\\
\beta_t &= 
\sin\!\left( \frac{\pi}{2} t\right)
\end{aligned}
\)
& ChEMBL3D & \(\mathbb{R}^{3N}\) \\
\midrule
\textbf{CoFM}~\citep{cofm} &
\(
\begin{aligned}
\alpha_t &= 1 - t\\
\beta_t &= t
\end{aligned}
\)
& GEOM-QM9 & \(\mathbb{R}^{3N}\) \\
\hline
\label{tab:schedulersforconformergeneration}
\end{tabular}}
} 
\end{table*}

\begin{table*}[h]
\caption{{\textbf{(DN)} Diffusion and flow-based models for de novo molecular generation across coordinate-space, latent-space, and mixed graph–geometry domains. $N$ denotes the number of atoms and $A$ , $B$, $C$ the number of atom types, bond types, and formal charge types. $\mathcal{A} = [A], \mathcal{B}=[B]$, and $\mathcal{C}$ are the categorical spaces of $A$ atom types, $B$ bond types, and $C$ formal charge types. Here, we put the data distribution at $t=1$, $\beta_t$ controls the data signal while $\alpha_t$ controls the noise signal.}}
\centering
\scriptsize
\renewcommand{\arraystretch}{1.25}
\resizebox{\textwidth}{!}{%
{\begin{tabular}{p{3cm} p{6cm} p{3.2cm} p{3.5cm}}
\hline
\textbf{Model / Paper} & \textbf{Scheduler} & \textbf{Main Datasets} & \textbf{Domain} \\
\midrule
\textbf{EDM}~\citep{edm} &
\(
\begin{aligned}
\alpha_t &= \sqrt{1 - (1 - (t - 1)^2)^2} \\
\beta_t &= 1 - (t - 1)^2
\end{aligned}
\)
& GEOM-QM9, GEOM-Drugs  &
\(\mathbb{R}^{3N + AN}\) \\
\midrule
\textbf{MiDi}~\citep{midi} &
\(
\begin{aligned}
\alpha_t &= \cos\!\left(
\frac{\pi}{2}t
\right) \\
\beta_t &= 
\sin\!\left(
\frac{\pi}{2}t
\right)
\end{aligned}
\)
& QM9, GEOM-Drugs &
\(\mathbb{R}^{3N}\times \mathcal{A}^N\times \mathcal{B}^{\frac{N(N-1)}{2}}\) \\
\midrule
\textbf{GCDM}~\citep{gcdm} &
\(
\begin{aligned}
\alpha_t &= \sqrt{1 - (1 - (t - 1)^2)^2} \\
\beta_t &= 1 - (t - 1)^2
\end{aligned}
\)
& GEOM-QM9, GEOM-Drugs &
\(\mathbb{R}^{3N+AN}\) \\
\midrule
\textbf{EDM-SyCo}~\citep{edmsyco} &
\(
\begin{aligned}
\alpha_t &= \sqrt{1 - (1 - (t - 1)^2)^2} \\
\beta_t &= 1 - (t - 1)^2
\end{aligned}
\)
& ZINC250K, GuacaMol &
Latent Euclidean Space \\
\midrule
\textbf{DrugDiff}~\citep{drugdiff} &
\(
\begin{aligned}
\alpha_t &= \sqrt{1 - (1 - (t - 1)^2)^2} \\
\beta_t &= 1 - (t - 1)^2
\end{aligned}
\)
& ChEMBL, GuacaMol &
Latent Euclidean Space \\
\midrule
\textbf{VEDA}~\citep{veda} &
\(
\begin{aligned}
\alpha_t &=
T_{\min}
\left(
\frac{T_{\max}}{T_{\min}}
\right)^{(1-\rho)(1-t)
+ \rho\,\frac{2}{\pi}\arcsin\!\sqrt{1-t}}\\
\beta_t &= 1
\end{aligned}
\)
& GEOM-QM9, GEOM-Drugs &
\(\mathbb{R}^{3N}\times\mathcal{A}^N\) \\
\midrule
\textbf{FlowMol3}~\citep{flowmol3} &
\(
\begin{aligned}
\alpha_t &= 1 - t\\
\beta_t &= t
\end{aligned}
\)
& GEOM-Drugs &
\(\mathbb{R}^{3N}\times\mathcal{A}^N\times\mathcal{B}^\frac{N(N-1)}{2}\times\mathcal{C}^N\) \\
\hline
\label{tab:schedulersfordenovogeneration}
\end{tabular}}
} 
\end{table*}

\clearpage 

\begin{table*}[h]
\caption{{\textbf{(SBDD)} Pocket-conditioned (structure-based drug design) diffusion and flow-based models, including sigmoid, cosine, and quadratic noise schedules across 3D and mixed 3D–categorical domains. $N$ denotes the number of atoms and $A$ the number of atom types. $\mathcal{A} = [A]$ is the categorical space of $A$ atom types. For D3FG, $N_{\mathrm{fg}}$ and $N_{\mathrm{at}}$ denote the numbers of functional-group atoms and remaining atoms, respectively, and $\mathcal{A}_{\mathrm{fg}}$ and $\mathcal{A}_{\mathrm{at}}$ denote the categorical spaces for functional-group types and atom types. Here, we put the data distribution at $t=1$, $\beta_t$ controls the data signal while $\alpha_t$ controls the noise signal.}}
\centering
\scriptsize
\renewcommand{\arraystretch}{1.25}

\resizebox{\textwidth}{!}{%
{\begin{tabular}{p{3cm} p{7cm} p{2.0cm} p{3.5cm}}
\hline
\textbf{Model / Paper} & \textbf{Scheduler} & \textbf{Main Datasets} & \textbf{Domain} \\
\midrule
\textbf{TargetDiff}~\citep{guan2023targetdiff} &
\(
\begin{aligned}
\alpha_t &= 
\exp\!\Big(
-\tfrac56
[
\log(1 + e^{12(t-\tfrac12)})
-
\log(1 + e^{-6})
]
\Big) \\
\beta_t &= \sqrt{1 - 
\exp\!\Big(
- \tfrac53
[
\log(1 + e^{12(t-\tfrac12)})
-
\log(1 + e^{-6})
]
\Big)}
\end{aligned}
\)
& CrossDocked2020 &
\(\mathbb{R}^{3N}\times\mathcal{A}^N\)\\
\midrule
\textbf{D3FG}~\citep{d3fg} &
\(
\begin{aligned}
\alpha_t &= 
\cos\!\left( \frac{\pi}{2}t \right)
\\
\beta_t &= 
\sin\!\left( \frac{\pi}{2}t \right)
\end{aligned}
\)
& CrossDocked2020 &
\(\mathbb{R}^{3(N_\mathrm{fg}+N_\mathrm{at})}\times\mathcal{A}_\mathrm{fg}^N \times \mathcal{A}_\mathrm{at}^N\) \\
\midrule
\textbf{DiffSBDD}~\citep{schneuing2024diffsbdd} &
\(
\begin{aligned}
\alpha_t &= \sqrt{1 - (1 - (t - 1)^2)^2} \\
\beta_t &= 1 - (t - 1)^2
\end{aligned}
\)
& CrossDocked2020 &
\(\mathbb{R}^{3N + AN}\) \\
\midrule
\textbf{BindDM}~\citep{huang2024binddm} &
\(
\begin{aligned}
\alpha_t &= 
\exp\!\Big(
-\tfrac56
[
\log(1 + e^{12(t-\tfrac12)})
-
\log(1 + e^{-6})
]
\Big) \\
\beta_t &= \sqrt{1 - 
\exp\!\Big(
- \tfrac53
[
\log(1 + e^{12(t-\tfrac12)})
-
\log(1 + e^{-6})
]
\Big)}
\end{aligned}
\)
& CrossDocked2020 &
\(\mathbb{R}^{3N}\times\mathcal{A}^N\)\\
\midrule
\textbf{DualDiff}~\citep{dualdiff}&
\(
\begin{aligned}
\alpha_t &= 
\exp\!\Big(
-\tfrac56
[
\log(1 + e^{12(t-\tfrac12)})
-
\log(1 + e^{-6})
]
\Big) \\
\beta_t &= \sqrt{1 - 
\exp\!\Big(
- \tfrac53
[
\log(1 + e^{12(t-\tfrac12)})
-
\log(1 + e^{-6})
]
\Big)}
\end{aligned}
\)
& CrossDocked2020 &
\(\mathbb{R}^
{3N+AN}\) \\
\midrule
\textbf{MolSnapper}~\citep{molsnapper}&
\(
\begin{aligned}
\alpha_t &= 
\cos\!\left( \frac{\pi}{2}t \right)
\\
\beta_t &= 
\sin\!\left( \frac{\pi}{2}t \right)
\end{aligned}
\)
& CrossDocked2020 &
\(\mathbb{R}^{3N}\times\mathcal{A}^N\)\\
\midrule
\textbf{PAFlow}~\citep{PAFlow}&
\(
\begin{aligned}
\alpha_t &= 
\exp\!\Big(
-\tfrac56
[
\log(1 + e^{12(t-\tfrac12)})
-
\log(1 + e^{-6})
]
\Big) \\
\beta_t &= \sqrt{1 - 
\exp\!\Big(
- \tfrac53
[
\log(1 + e^{12(t-\tfrac12)})
-
\log(1 + e^{-6})
]
\Big)}
\end{aligned}
\)
& CrossDocked2020&
\(\mathbb{R}^{3N}\times\mathcal{A}^N\) \\
\midrule
\textbf{MolFORM}~\citep{molform}&
\(
\begin{aligned}
\alpha_t &= 1 - t\\
\beta_t &= t
\end{aligned}
\)
& CrossDocked2020&
\(\mathbb{R}^{3N}\times\mathcal{A}^N\)\\
\hline
\end{tabular}}
} 
\label{tab:schedulersforsbdd}
\end{table*}

\begin{figure}[h]
    \centering
    \includegraphics[width=\linewidth]{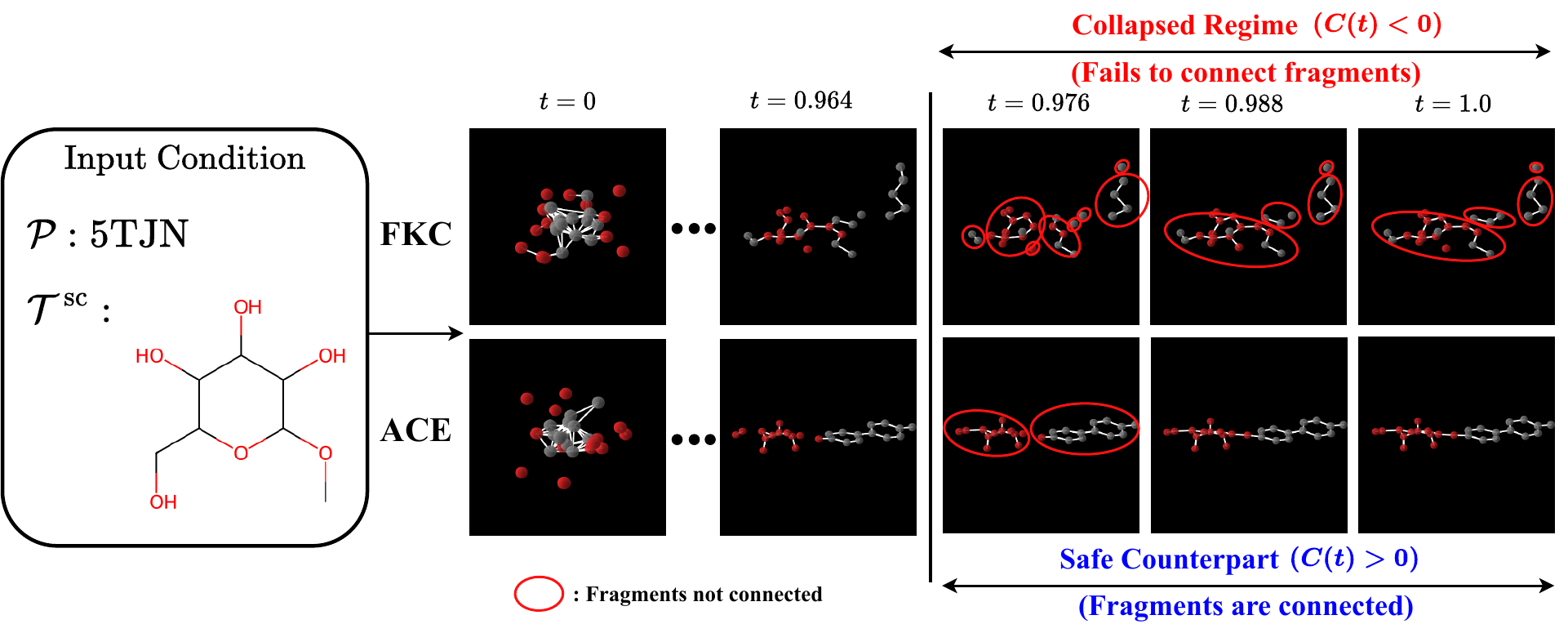}
    \caption{{\textbf{Impact of Marginal Path Collapse on molecular generation.}  
    We compare sampling trajectories for FKC and ACE on the scaffold-decoration task ($\omega = 1.3$, 500 steps). Both methods target the same distribution \(p_\omega(\mathcal{M}) \propto p(\mathcal{M}) (p(\mathcal{M}\mid \mathcal{T}^{\mathrm{sc}}, \mathcal{P})/p(\mathcal{M}))^{\omega}\). 
    Crucially, FKC enters a collapsed regime for \(t \in (0.974, 1)\) where the path existence criterion is violated ($C(t) < 0$). This theoretical singularity manifests empirically as a failure to assemble, resulting in disjoint fragments (top row). In contrast, ACE employs the bump correction \(b(t)=30t(1-t)\) to guarantee \(C(t) > 0\) throughout the trajectory, reliably ensuring the successful formation of a connected, chemically valid molecule (bottom row).}}
    \label{fig:pathcollapseinmoleculardiffusion}
\end{figure}

\subsection{{Prevalence and Consequences of Marginal Path Collapse}}
\label{subsec:prevalenceandconsequencesofmarginalpathcollapse}

{To confirm that Marginal Path Collapse is a systematic vulnerability rather than an edge case, we present a quantitative analysis of its frequency across standard model combinations and examine its downstream impact on generation quality.}

{\textbf{Systematic Prevalence of Collapse.} We evaluated all $5^3=125$ annealed compositions of three experts $h_t=q^{(1)}_t (q^{(2)}_t/q^{(3)}_t)^w$ formed from five standard noise schedules: DDPM \citep{ho2020denoising}, cosine \citep{nichol2021improved}, sigmoid \citep{geodiff}, linear \citep{lipman2023flow}, and polynomial \citep{edm}. Excluding trivial homogeneous combinations, the results confirm that path collapse is a widespread failure mode. As shown in Table~\ref{tab:collapse_freq}, the collapse rate for heterogeneous compositions starts at \textbf{41\%} even at a unit guidance scale ($w=1.0$). Crucially, as the guidance scale increases, as in standard practice to improve sample quality, the collapse rate rises sharply, reaching \textbf{66\%} at moderate guidance ($w=2.0$) and \textbf{80\%} at high guidance ($w=15$). This indicates that in heterogeneous compositions, operating on invalid paths is the statistical norm, not the exception.}

\begin{table}[h!]
\centering
\caption{{Frequency of Marginal Path Collapse across 100 heterogeneous schedule compositions. 
Higher guidance scales $w$ drastically increase the likelihood of encountering invalid paths.}}
{\begin{tabular}{c c c}
\toprule
Guidance Scale ($w$) & \# Collapses & \% Collapses \\
\midrule
1.0 & 41 & 41\% \\
1.1 & 47 & 47\% \\
1.5 & 52 & 52\% \\
2.0 & 66 & 66\% \\
7.5 & 77 & 77\% \\
15  & 80 & 80\% \\
\bottomrule
\end{tabular}}
\label{tab:collapse_freq}
\end{table}

{\textbf{Empirical Consequences of Collapse.} The violation of the path existence criterion has tangible negative impacts on generation performance, as illustrated in Figures~\ref{fig:hetero-figure} and \ref{fig:grid_plot_common_schedules}--\ref{fig:cos_cos_linear}. We observe distinct failure modes across domains:}

\begin{itemize}
    \item {\textbf{Functional Failure (Molecular Domain):} In scaffold decoration, path collapse correlates directly with the generation of chemically invalid structures. As quantified in Tables~\ref{tab:mainresult}--\ref{tab:secondresult}, baselines (NR, FKC) suffer from significantly higher rates of invalid samples when the criterion is violated. Figure~\ref{fig:pathcollapseinmoleculardiffusion} visualizes this stark contrast: ACE produces chemically valid molecules that respect valency constraints, whereas FKC generates fragmented, disconnected structures. While inherent network approximation errors can occasionally yield invalid samples in any method, ACE's failure rate is negligible (maintaining near-perfect validity) compared to the systemic collapse observed in baselines.}
    
    \item {\textbf{Distributional Failure (Synthetic Domain):} On synthetic benchmarks, the consequences manifest as a loss of distributional fidelity (Figure~\ref{fig:grid_plot_common_schedules}). NR fails to effectively remove out-of-distribution samples due to its heuristic nature, regardless of path collapse (Table ~\ref{tab:homogeneous_metrics} shows non-collapse results). More critically, FKC performance degrades because its importance weights rely on the assumption that the drift term corresponds to the score of a valid probability distribution. When path collapse occurs, the simple mixture of scores, while computationally possible, does not correspond to a score of normalizable distribution. This theoretical violation destabilizes the importance weights, leading to unpredictable behavior such as severe \textit{mode collapse} (see trajectory plots in Figures~\ref{fig:DDPM_DDPM_linear}--\ref{fig:cos_cos_linear}).}
\end{itemize}

{Marginal Path Collapse is a pervasive practical problem in heterogeneous model composition. Ignoring it leads to measurable degradation in both sample validity and distributional diversity. ACE provides the necessary theoretical and practical fix to guarantee stable, high-quality generation in these heterogeneous regimes.}

\begin{table}[h]
\centering
\caption{{\textbf{Metrics under Homogeneous Composition (collapse duration 0\%).} Across 5 seeds, We evaluate the scenario where all expert schedules are identical ($\alpha^{(i)}_t = \text{DDPM}$ for all $i$), resulting in a collapse duration of 0.0\%. ACE criterion is naturally satisfied without correction ($B=0$), making ACE mathematically equivalent to FKC. 
Both methods significantly outperform the heuristic NR (CFG) baseline, which suffers from approximation errors even when the path is valid.}}
\label{tab:homogeneous_metrics}
\renewcommand{\arraystretch}{1.2}
\setlength{\tabcolsep}{10pt}
{\begin{tabular}{l c c c}
\toprule
\textbf{Method} & \textbf{$W_1 (\downarrow)$} & \textbf{$W_2 (\downarrow)$} & {MMD ({RBF}) $(\downarrow)$} \\
\midrule
NR (CFG) & 0.859 $\pm$ 0.023 & 1.113 $\pm$ 0.023 & 0.077 $\pm$ 0.003 \\
FKC      & \textbf{0.162 $\pm$ 0.019} & \textbf{0.282 $\pm$ 0.017} & \textbf{0.012 $\pm$ 0.001} \\
ACE      & \textbf{0.162 $\pm$ 0.019} & \textbf{0.282 $\pm$ 0.017} & \textbf{0.012 $\pm$ 0.001} \\
\bottomrule
\end{tabular}}
\end{table}

\begin{figure}[h]
    \centering
    \includegraphics[width=\linewidth]{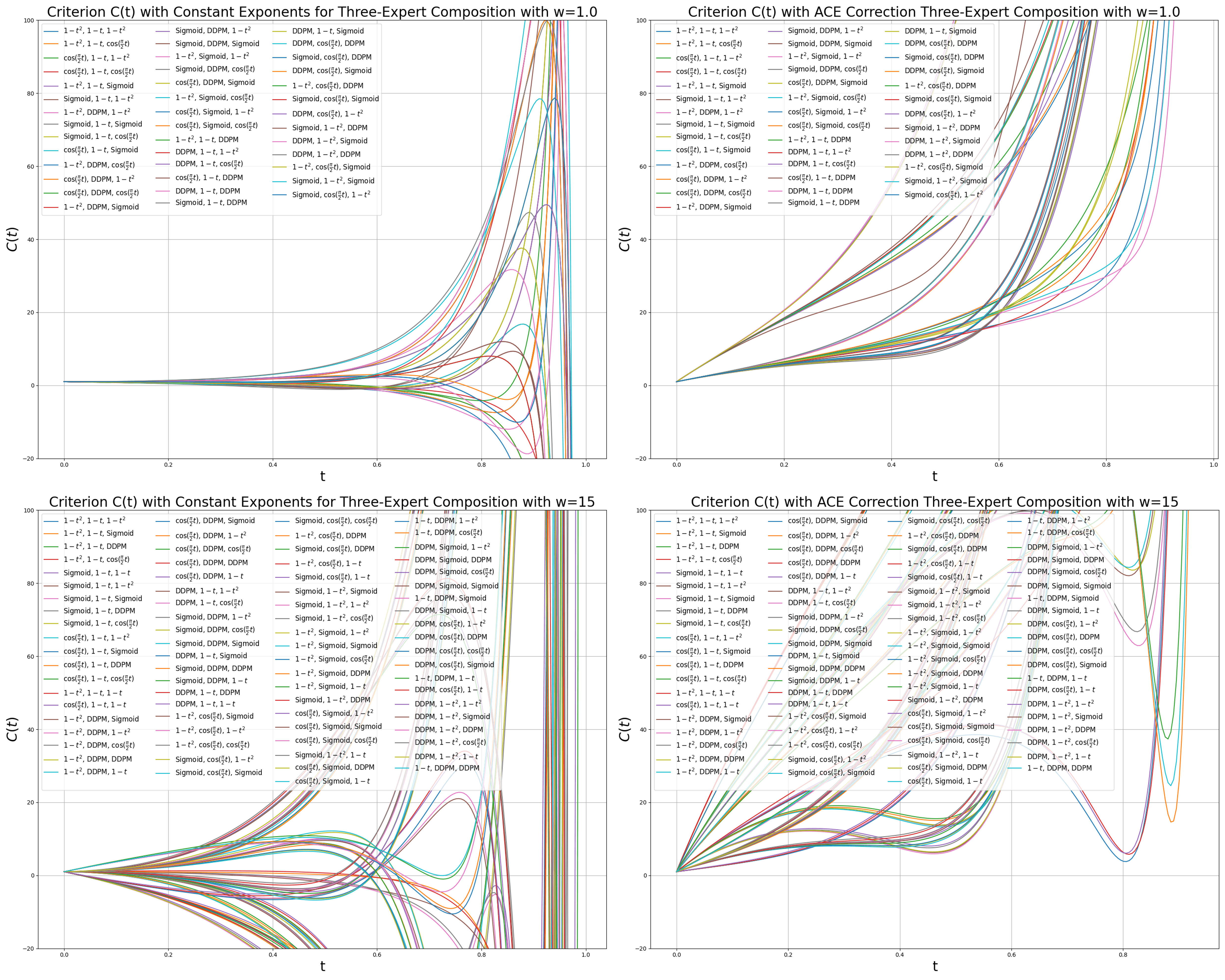}
    \caption{{\textbf{Path existence criterion $C(t)$ across heterogeneous schedule combinations.} 
    We visualize the evolution of $C(t)$ for the 100 heterogeneous triplets from Table~\ref{tab:collapse_freq} under low ($w=1.0$, top) and high ($w=15.0$, bottom) guidance scales.
    \textbf{Left (Constant Exponents):} A significant fraction of combinations violate the existence condition ($C(t)<0$), with the frequency and magnitude of collapse increasing sharply as $w$ increases.
    \textbf{Right (ACE):} By applying the adaptive bump correction, ACE guarantees $C(t) > 0$ for all trajectories, restoring valid probability paths even in high-guidance regimes where baselines fail. Note that the y-axis is clipped to $[-20, 100]$ for readability; values are cut off at $t_\text{end}$ as practical sampling terminates at $t_{\text{end}} < 1$ (see Theorem~\ref{thm:adaptive_exponents_appendix}).}}
    \label{fig:criterion_realistic_combinations}
\end{figure}

\begin{figure}[h]
    \centering
    \includegraphics[width=0.9\linewidth]{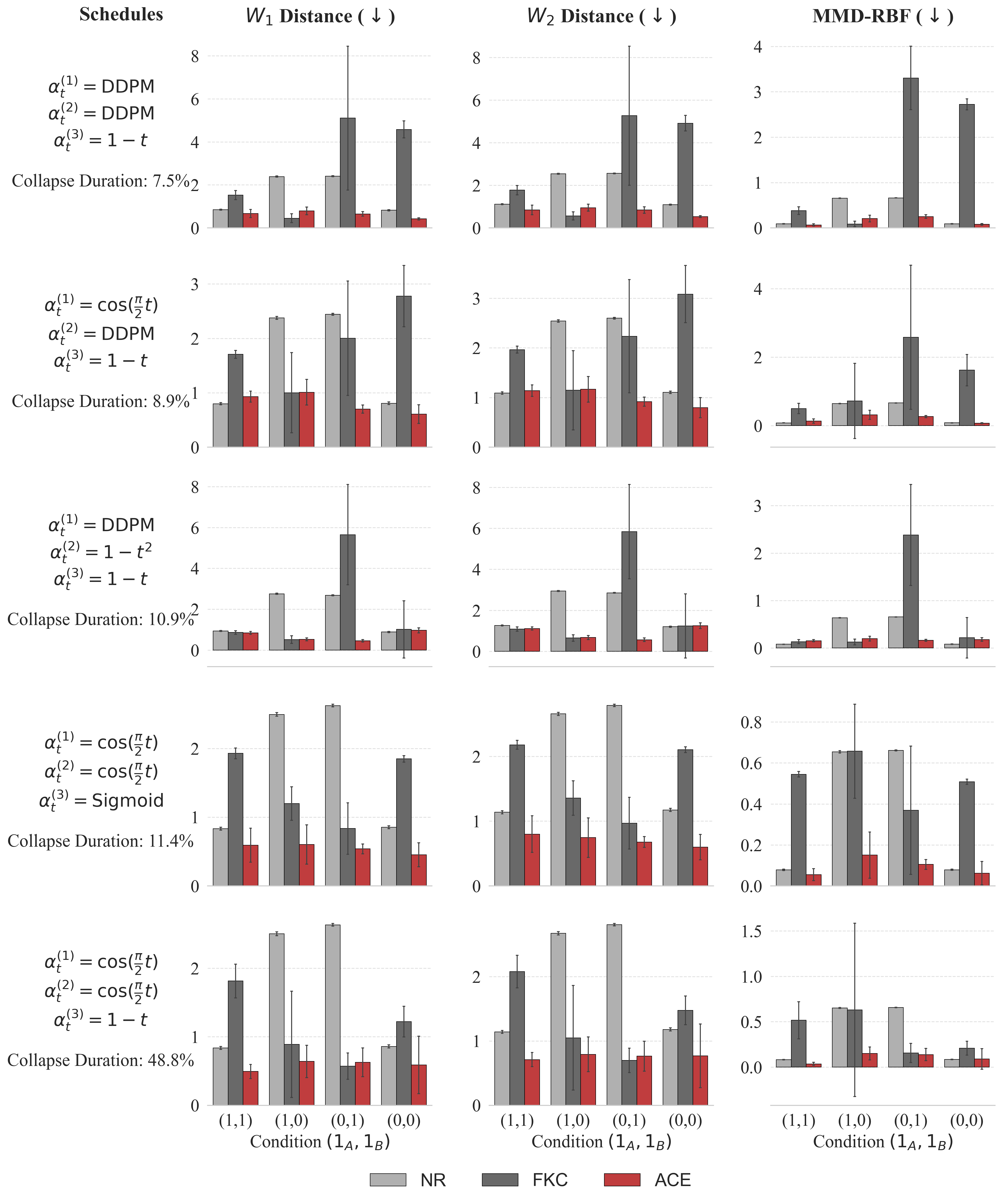}
    \caption{{\textbf{Quantitative evaluation of Heterogeneous Ratio-of-Densities Sampling.} We evaluate on a checkerboard distribution $(x,y)\sim p_\texttt{Checker}[-4,4]$ subject to constraints $A=\{x\geq 0\}, B=\{x+y \geq 0\}$. The $x$-axis denotes the conditioning configuration $(\mathbf{1}_A, \mathbf{1}_B)$. We consider the case of combining three expert models $q^{(1)}_t,q^{(2)}_t, q^{(3)}_t$ to form the heterogeneous ratio-of-densities $h_t=q^{(1)}_tq^{(2)}_t/ q^{(3)}_t$. The rows correspond to configurations of common noise schedules which induce path collapse, where the labels denote the noise schedule $\alpha^{(i)}_t$ assigned to expert $i$. Across varying conditions and metrics $(W_1,W_2,\operatorname{MMD})$, \textbf{ACE (red)} consistently outperforms the baselines \textbf{NR} and \textbf{FKC} (gray). This performance gap reflects the practical benefits of ACE in generating samples across a guaranteed generative path, whereas baselines suffer from path collapse in heterogeneous schedule scenarios. All results are evaluated across 5 seeds with 10k samples, 1k diffusion steps, ESS threshold 0.7, and Bump 30 for ACE. See Figure. \ref{fig:criterion_realistic_combinations} for the corresponding criterion plot.}}
    \label{fig:grid_plot_common_schedules}
\end{figure}

\clearpage

\begin{figure}
    \centering
    \includegraphics[width=0.8\linewidth]{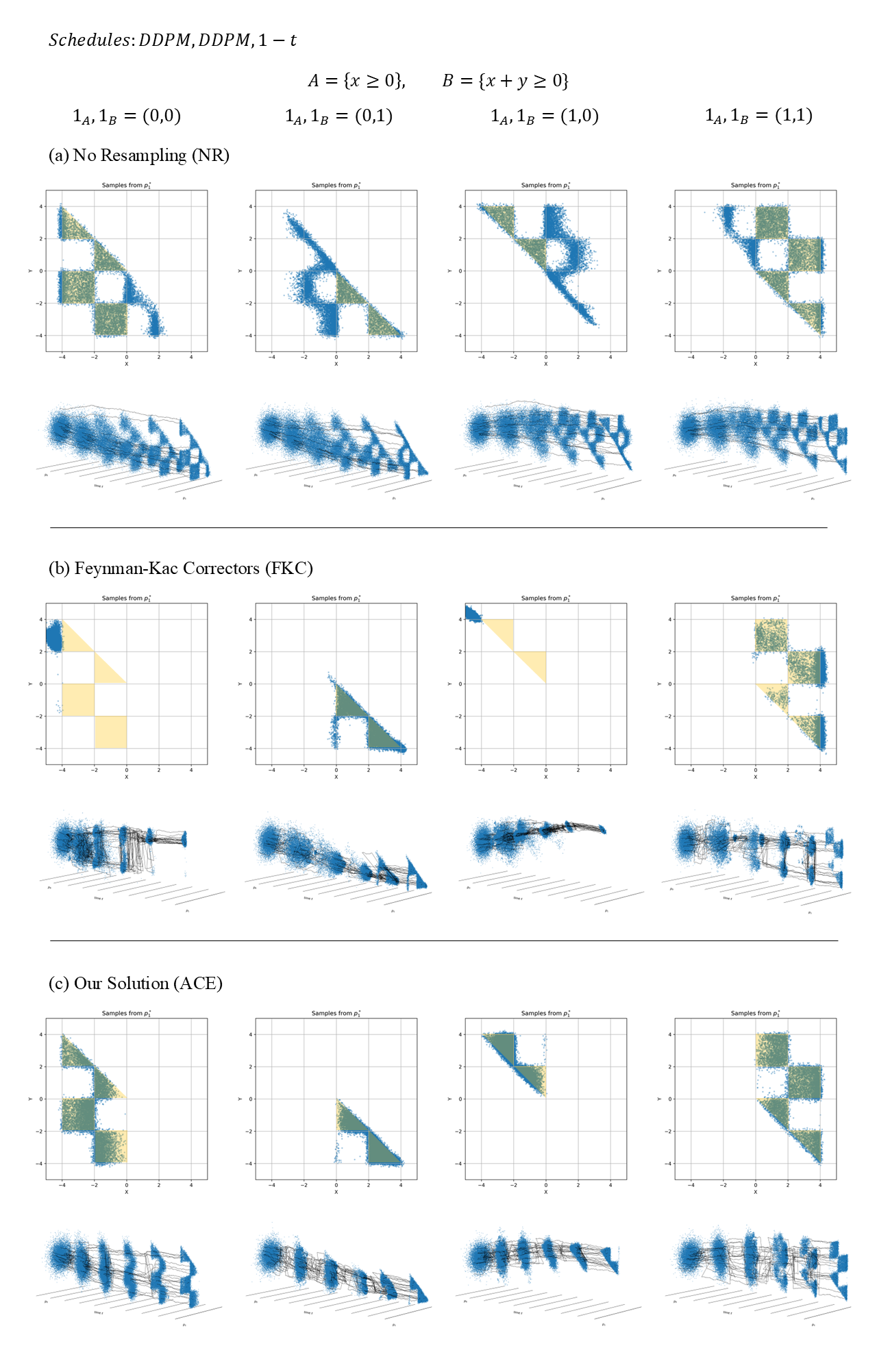}
    \caption{{\textbf{Visualization of generative trajectories and final samples.} We target the composite density $p^*(x,y)\propto p^{(1)}(x,y\mid B){p^{(2)}(x\mid A)}/{p^{(3)}(x)}$ using the heterogeneous schedule configuration $(\alpha^{(1)}_t,\alpha^{(2)}_t,\alpha^{(3)}_t)=(\texttt{DDPM}, \texttt{DDPM}, 1-t)$ and sampling with NR, FKC, and ACE.}}
    \label{fig:DDPM_DDPM_linear}
\end{figure}

\begin{figure}
    \centering
    \includegraphics[width=0.8\linewidth]{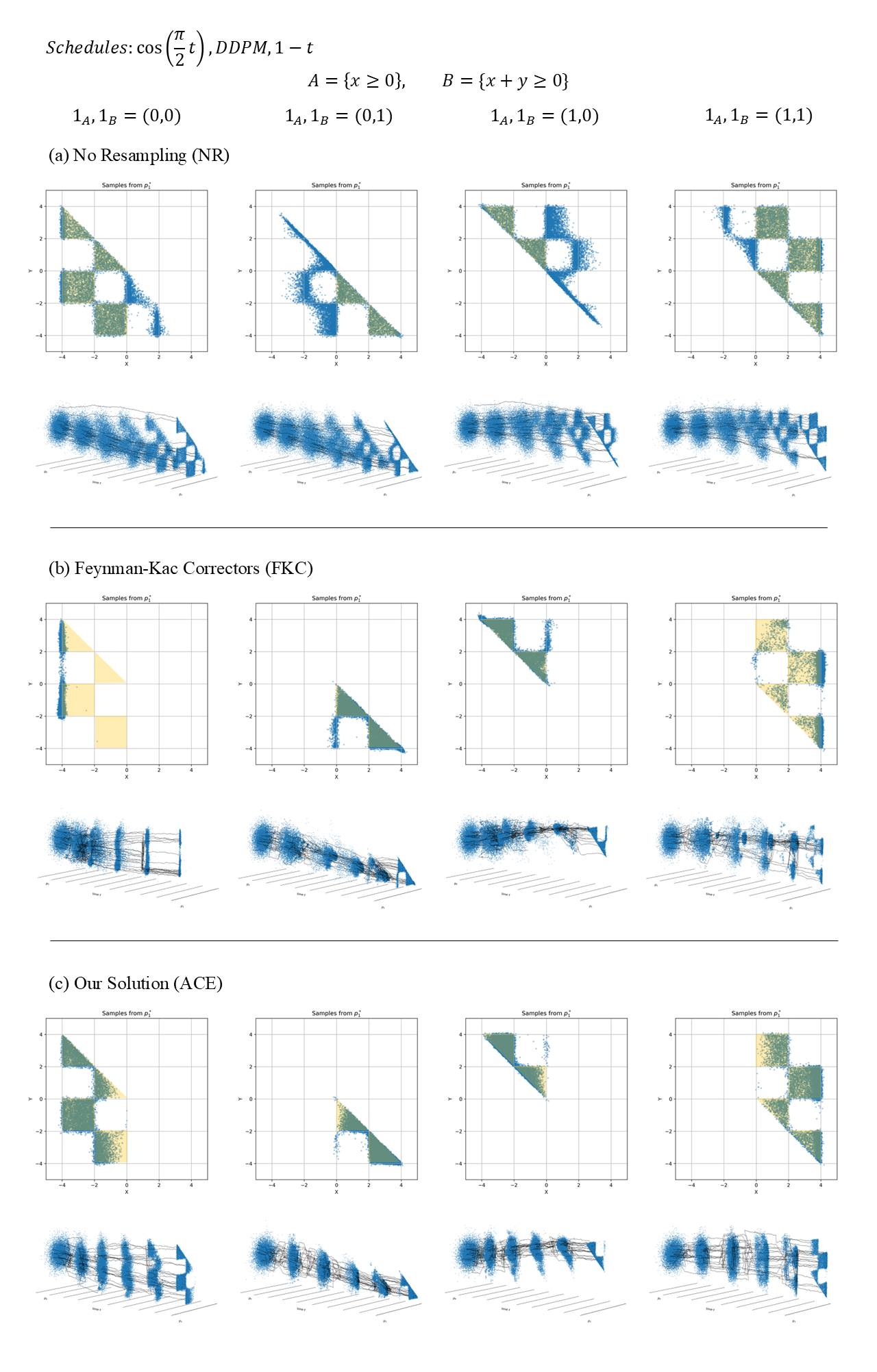}
    \caption{{\textbf{Visualization of generative trajectories and final samples.} We target the composite density $p^*(x,y)\propto p^{(1)}(x,y\mid B){p^{(2)}(x\mid A)}/{p^{(3)}(x)}$ using the heterogeneous schedule configuration $(\alpha^{(1)}_t,\alpha^{(2)}_t,\alpha^{(3)}_t)=(\cos (\tfrac{\pi}{2}t), \texttt{DDPM}, 1-t)$ and sampling with NR, FKC, and ACE.}}
    \label{fig:cos_t_DDPM_linear}
\end{figure}

\begin{figure}
    \centering
    \includegraphics[width=0.8\linewidth]{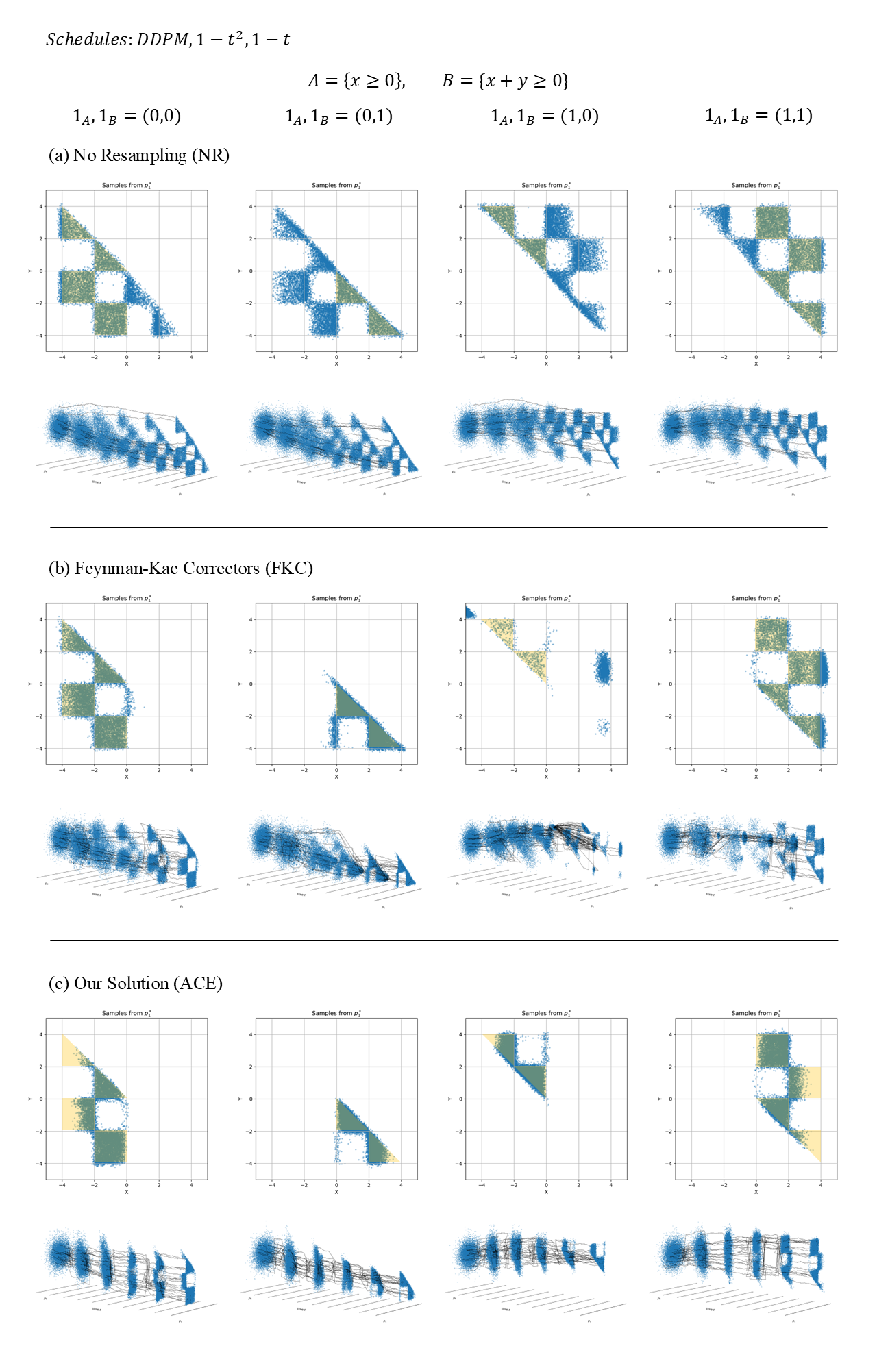}
    \caption{{\textbf{Visualization of generative trajectories and final samples.} We target the composite density $p^*(x,y)\propto p^{(1)}(x,y\mid B){p^{(2)}(x\mid A)}/{p^{(3)}(x)}$ using the heterogeneous schedule configuration $(\alpha^{(1)}_t,\alpha^{(2)}_t,\alpha^{(3)}_t)=(\texttt{DDPM}, 1-t^2, 1-t)$ and sampling with NR, FKC, and ACE.}}
    \label{fig:DDPM_1-t**2_linear}
\end{figure}

\begin{figure}
    \centering
    \includegraphics[width=0.8\linewidth]{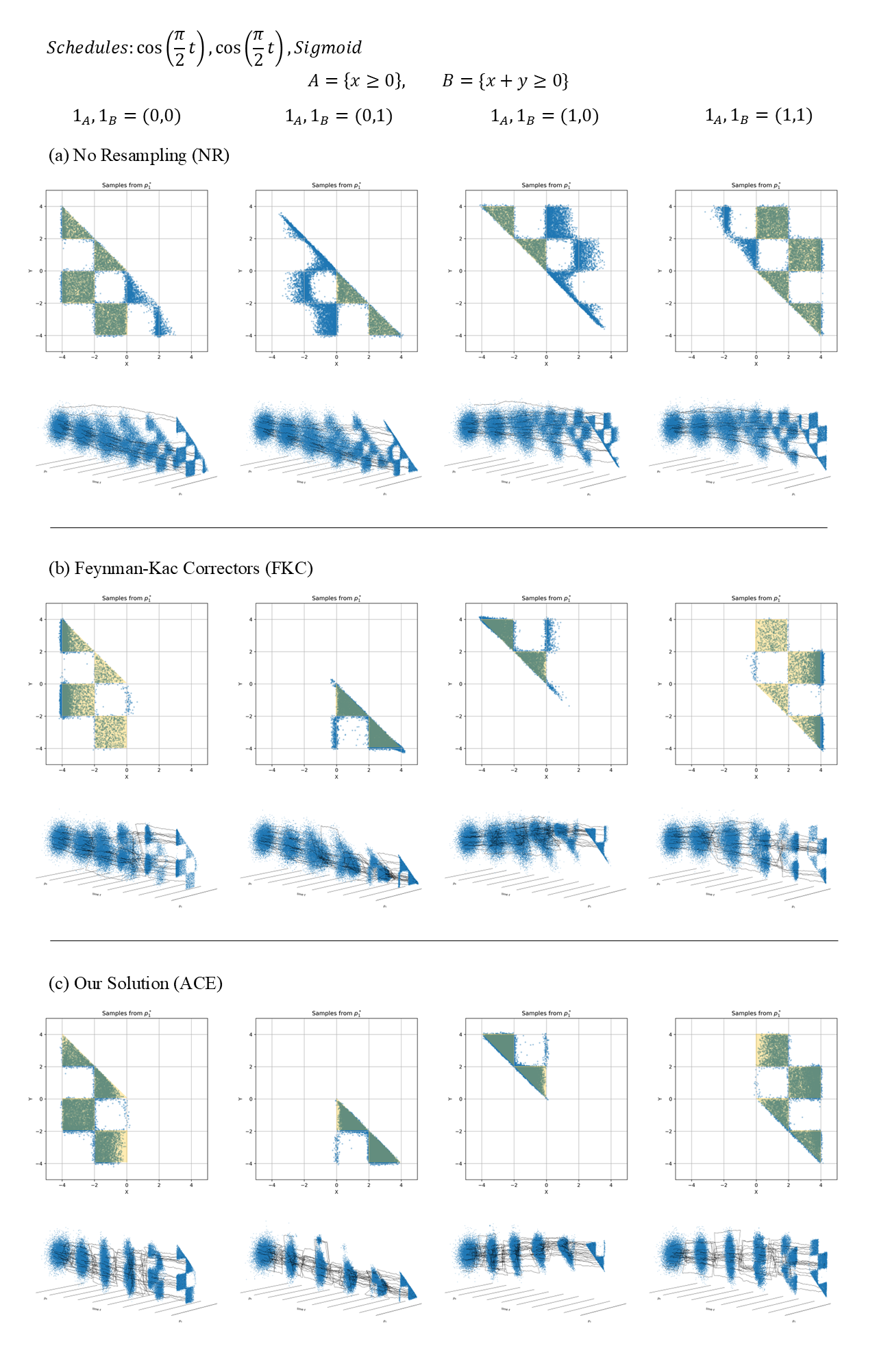}
    \caption{{\textbf{Visualization of generative trajectories and final samples.} We target the composite density $p^*(x,y)\propto p^{(1)}(x,y\mid B){p^{(2)}(x\mid A)}/{p^{(3)}(x)}$ using the heterogeneous schedule configuration $(\alpha^{(1)}_t,\alpha^{(2)}_t,\alpha^{(3)}_t)=(\cos (\tfrac{\pi}{2}t), \cos (\tfrac{\pi}{2}t), \texttt{Sigmoid})$ and sampling with NR, FKC, and ACE.}}
    \label{fig:cos_cos_sigmoid}
\end{figure}

\begin{figure}
    \centering
    \includegraphics[width=0.8\linewidth]{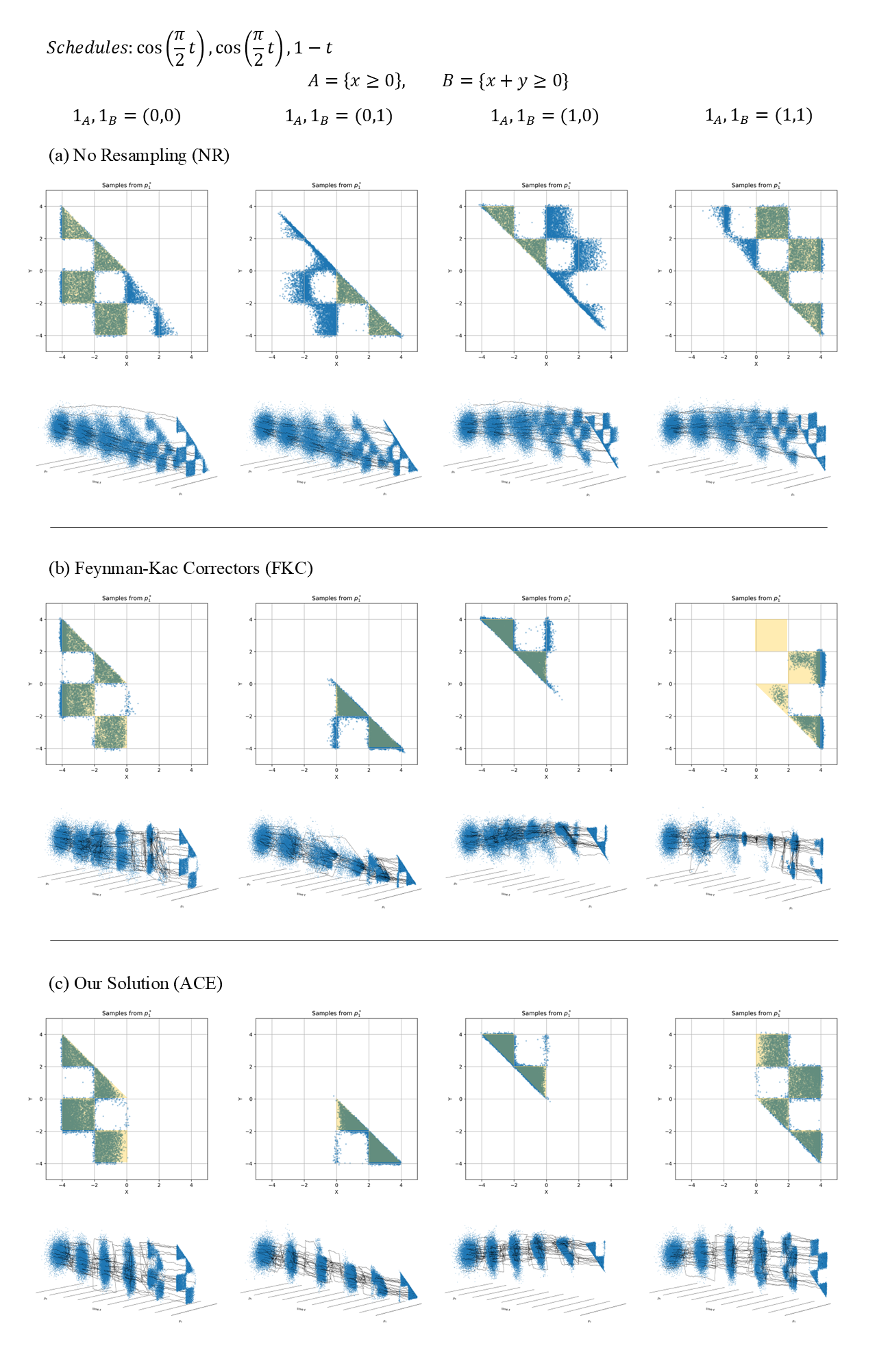}
    \caption{{\textbf{Visualization of generative trajectories and final samples.} We target the composite density $p^*(x,y)\propto p^{(1)}(x,y\mid B){p^{(2)}(x\mid A)}/{p^{(3)}(x)}$ using the heterogeneous schedule configuration $(\alpha^{(1)}_t,\alpha^{(2)}_t,\alpha^{(3)}_t)=(\cos (\tfrac{\pi}{2}t), \cos (\tfrac{\pi}{2}t), 1-t)$ and sampling with NR, FKC, and ACE.}}
    \label{fig:cos_cos_linear}
\end{figure}

\clearpage
\subsection{{Necessity of Task-Specific Noise Schedulers}}
\label{subsec:necessityoftaskspecificnoiseschedulers}
{A growing body of work on task-specific noise scheduling shows that different generative tasks require different allocations of noise across the denoising trajectory. This follows from the view that diffusion operates in two regimes, high-noise for global exploration and low-noise for local refinement, whose relative importance varies by task. We further validate this trend in molecular generation, and note that recent advances in learnable or adaptive noise schedulers also reinforce this perspective. Together, these observations motivate the use of adaptive or customized noise schedulers, and consequently, heterogeneous scheduler combinations.
}

\begin{table}[h]
\centering
\scriptsize
\setlength{\tabcolsep}{6pt}
\caption{{Comparison of noise regimes and average regime statistics per tasks in Tables~\ref{tab:schedulersforconformergeneration},\ref{tab:schedulersfordenovogeneration},\ref{tab:schedulersforsbdd}. 
The comparison is based on the literature perspective~\citep{perceptron,highlownoiseregimemolecule,importancenoisescheduling,lownoiseregime}. Asterisk(*) indicates significant difference from DN ($p<0.001$). 
$|\mathcal{R}_H|$, $|\mathcal{R}_L|$ are the lengths of two regimes.}}
\label{tab:regime_combined}

\renewcommand{\arraystretch}{1.15}
\begin{minipage}{\textwidth}
\label{tab:comprhrlandmolecularexp}
\centering
{
\begin{tabularx}{\linewidth}{X X X X}
\toprule
\multicolumn{4}{c}{\textbf{(A) Comparison of High-Noise vs. Low-Noise Regimes}~\citep{perceptron,highlownoiseregimemolecule,importancenoisescheduling,lownoiseregime}} \\
\midrule
\textbf{Regime} &
\textbf{Dynamics} & 
\textbf{Feature Scale Priority} &
\textbf{Role in Denoising} \\
\midrule
\textbf{$\mathcal{R}_H$ (High-Noise)} &
Large Transitions &
Global structure &
Exploration\\
\midrule
\textbf{$\mathcal{R}_L$ (Low-Noise)} &
Small corrective steps & 
Local fine-grained details &
Refinement \\
\bottomrule
\end{tabularx}
}
\end{minipage}

\renewcommand{\arraystretch}{1.0}
\begin{minipage}{\textwidth}
\centering
{
\begin{tabularx}{\textwidth}{p{1cm} c c c c c c}
\toprule
\multicolumn{7}{c}{\textbf{(B) Average Noise Regime Statistics per Molecular Tasks in Tables~\ref{tab:schedulersforconformergeneration},\ref{tab:schedulersfordenovogeneration},\ref{tab:schedulersforsbdd}} } \\
\midrule
\textbf{Task} &
\textbf{Global vs Local} &
\textbf{Cond. vs Uncond.} &
\textbf{Exploration vs Local Refinement} &
$|\mathcal{R}_{H}|$ &
$|\mathcal{R}_{L}|$ &
$|\mathcal{R}_{H}| / |\mathcal{R}_{L}|$ \\
\midrule
\textbf{DN} & Global & Uncond.  & Exploration & 0.53 & 0.47 & 1.11 \\
\midrule
\textbf{CONF} & Local & Cond. & Local Refinement & 0.47* & 0.53* & 0.89* \\
\textbf{SBDD} & Local & Cond. & Local Refinement & 0.48* & 0.52* & 0.91* \\
\bottomrule
\end{tabularx}
}
\end{minipage}
\end{table}


{\textbf{High-noise vs.\ low-noise regimes.} Recent analyses~\citep{perceptron,highlownoiseregimemolecule,importancenoisescheduling,lownoiseregime} in Table~\ref{tab:introhighlowregime} characterize two complementary denoising phases. The \textbf{high-noise regime} ($\mathcal{R}_{H}$) performs large transitions that support \textbf{global exploration} and formation of coarse structure. The \textbf{low-noise regime} ($\mathcal{R}_{L}$) reduces to \textbf{small corrective steps} that refine \textbf{local, fine-grained features}. Effective scheduling therefore requires balancing these phases according to task needs. This distinction is summarized in Table~\ref{tab:comprhrlandmolecularexp}(A).}

{\textbf{Empirically validated trends in conditional vs.\ unconditional molecular generation.} As shown in \citep{importancenoisescheduling}, the roles of $\mathcal{R}_H$ and $\mathcal{R}_L$ indicate that \emph{larger images benefit from a longer $\mathcal{R}_H$}, reflecting their need for stronger exploratory behavior. The insights naturally transfer to molecules. Unconditional generation (DN) needs to explore many possible molecular shapes, so it naturally relies more on the broader, exploratory behavior of $\mathcal{R}_{H}$. Conditional tasks (CONF, SBDD), however, must fit specific structural requirements, and therefore gain more from the precise, detail-oriented corrections that occur in $\mathcal{R}_{L}$.}

{We empirically confirm that existing molecular schedulers adhere to this pattern (Tables~\ref{tab:schedulersforconformergeneration}, \ref{tab:schedulersfordenovogeneration}, \ref{tab:schedulersforsbdd}). Following~\cite{lownoiseregime}, we divide noise phases using $\mathrm{SNR}\!\ge\!1$ ($\mathcal{R}_{H}$) and $\mathrm{SNR}\!<\!1$ ($\mathcal{R}_{L}$). For each model, we compute the interval lengths $|\mathcal{R}_H|$, $|\mathcal{R}_L|$, and their ratio, then average across models per task. The results in Table~\ref{tab:comprhrlandmolecularexp}(B) show clear trends: DN favors $\mathcal{R}_{H}$, whereas CONF and SBDD prioritize $\mathcal{R}_{L}$.}

{These results reinforce our central claim: \emph{heterogeneous noise schedulers are not only reasonable but necessary} for the molecular tasks considered in Section~\ref{subsec:prevalenceandconsequencesofmarginalpathcollapse}.}

{\textbf{Recent trends in task-specific scheduler design.} Recent works further emphasize that optimal schedules depend on task characteristics (Table~\ref{tab:intronewschedulers}). Image models adopt learned or adaptive schedulers~\citep{learnedadaptive}, while molecular models employ component-specific or trajectory-based schedules~\citep{midi,flexibletrajectories}. These methods directly challenge the assumption of a universal, fixed schedule.}

\begin{table}[h]
\centering
\scriptsize
\renewcommand{\arraystretch}{0.9}
\setlength{\tabcolsep}{3pt}
\caption{{References on Task-Specific and Adaptive Scheduling}}
\label{tab:intronewschedulers}
{\begin{tabularx}{\textwidth}{p{2.5cm}|p{11cm}}
\toprule
\textbf{Reference} & \textbf{Contribution} \\
\midrule
\cite{midi} &
Introduces component-specific molecular noise scheduling. \\
\midrule
\cite{adaptivenoiseschedule} &
Presents a fully data-driven adaptive scheduler for time-series diffusion. \\
\midrule
\cite{learnedadaptive} &
Develops MuLAN: learned multivariate adaptive noise processes. \\
\midrule
\cite{flexibletrajectories} &
Proposes per-element optimized forward trajectories for molecules. \\
\midrule
\cite{imagerefl} &
Demonstrates adaptive allocation between $\mathcal{R}_{H}$ (exploration) and $\mathcal{R}_{L}$ (refinement). \\
\midrule
\cite{channelwisenoiseschedule} &
Uses distinct channel-wise schedules to balance the diversity--accuracy trade-off. \\
\bottomrule
\end{tabularx}}
\end{table}

\subsection{Comparison with time reparameterization}
\label{sec:timereparameterizationisnotaneffectivesolution}
Motivated by the importance of task-specific noise scheduling, a recent study~\cite{stancevic2025a} proposed a time reparameterization technique that modifies the temporal evolution of the SDE to induce a new sampling path. At a first glance, such reparameterization appears to be a plausible solution for avoiding marginal path collapse, as it seemingly aligns heterogeneous component paths to a shared noise schedule, although this need not preserve the task specific schedules that are optimal for each expert.

However, we empirically demonstrate that composition via time reparameterization (denoted as \textbf{FKC-TRP}) leads to degraded performance compared to ACE in terms of accurately approximating the target density. Specifically, in the flexible-pose scaffold decoration task (Section~\ref{application:flexibleposescaffolddecoration}), Table~\ref{tab:TRPcomparison} shows that FKC-TRP suffers from a substantial drop in both docking affinity (Vina score) and drug-likeness metrics, indicating that it follows a suboptimal generative path. Moreover, increasing the guidance scale to $\omega=1.4$ rapidly corrupts the output distribution under FKC-TRP, in stark contrast to the consistent performance improvements observed with ACE.

We attribute this behavior to a speed mismatch across heterogeneous noise schedules. In particular, the resulting time reparameterization $\tau(t)$ exhibits non-uniform growth, which can induce accelerated evolution in certain regions of the trajectory. This may exacerbate the degradation of individual paths by skipping task-critical regions and, in turn, destabilize the composed generative path. This behavior is illustrated by the plot of $t'=\tau(t)$ in Figure~\ref{fig:tau}.

\paragraph{Implementation details.}
We align the component paths $q^{(i)}$ ($i=1,2,3,4$) in Section~\ref{application:flexibleposescaffolddecoration} to a sigmoid noise schedule (Table~\ref{tab:schedule_formulas}) via time reparameterization. Concretely, we apply
\[
\tau(t)
=
\sqrt{\,1 - \sqrt{\,1 - e^{-\eta(1-t)}\,}\,}.
\]
where 
\[
\eta(x)=\frac{20}{12}\,\mathrm{softplus}\!\,\big(12(x-0.5)\big)
\;+\;0.001\,x,
\qquad
\mathrm{softplus}(z)=\log\!\big(1+e^{z}\big),
\]
and define $q'^{(i)}_t = q^{(i)}_{\tau(t)}$ for $i\in\{1,2,4\}$, whose original paths follow polynomial scheduling. The remaining component $q^{(3)}$ already employs sigmoid scheduling and is left unchanged, i.e., $q'^{(3)}_t = q^{(3)}_t$.
We then compose the reparameterized paths as
\[
q'_t
=
q'^{(2)}_t \cdot \left(\frac{q'^{(3)}_t\, q'^{(4)}_t}{q'^{(1)}_t\, q'^{(2)}_t}\right)^\omega.
\]
For this composed path, we perform Feynman--Kac Corrector (FKC) simulation using a batch size of 5 and 500 integration steps (denoted as FKC-TRP). All other experimental settings are identical to those described in Section~\ref{sec:setup_app}.

\begin{table*}[t]
\centering
\caption{Quantitative comparison of TRP(Time Reparameterized Path) and ACE.}
\label{tab:TRPcomparison}
\resizebox{\textwidth}{!}{%
\small
\begin{tabular}{lccccccccccccccc}
\toprule
Method & $\omega$ & OSR ($\uparrow$) & \multicolumn{4}{c}{Vina Score ($\downarrow$)} & \multicolumn{3}{c}{QED ($\uparrow$)} & \multicolumn{3}{c}{SA ($\uparrow$)} & \multicolumn{3}{c}{Lipinski ($\uparrow$)} \\
 & & & P.Worst & Top25\% & Avg & Std & Top25\% & Top50\% & Avg & Top25\% & Top50\% & Avg & Top25\% & Top50\% & Avg \\
\midrule
\multirow{4}{*}{FKC-TRP} & 1.1 & 0.37 & -4.68 & -7.20 & -5.76 & 0.76 & 0.52 & 0.43 & 0.39 & \textbf{0.69} & 0.56 & 0.54 & 1.00 & 1.00 & 0.96 \\
& 1.2 & 0.33 & -4.46 & -7.00 & -5.85 & 0.81 & 0.56 & 0.41 & 0.39 & 0.64 & 0.53 & 0.52 & 1.00 & 1.00 & 0.96 \\
& 1.3 & 0.32 & -4.74 & -7.00 & -5.79 & 0.66 & 0.46 & 0.36 & 0.36 & 0.67 & 0.55 & 0.56 & 1.00 & 1.00 & 0.96 \\
&1.4 & 0.37 & -5.50 & -7.30 & -6.12 & 0.39 & 0.47 & 0.39 & 0.36 & 0.60 & 0.52 & 0.52 & 1.00 & 1.00 & 0.96 \\
\midrule
\multirow{4}{*}{ACE} & 1.1 & \underline{0.71} & -6.74 & -8.30 & -7.02 & 0.19 & \textbf{0.65} & \underline{0.50} & \underline{0.51} & \textbf{0.69} & 0.56 & \textbf{0.57} & 1.00 & 1.00 & \textbf{ 0.99} \\
& 1.2 & 0.65 & \underline{-6.78} & \underline{-8.40} & \underline{-7.08} & 0.20 & 0.63 & 0.49 & \underline{0.51} & 0.65 & 0.55 & 0.55 & 1.00 & 1.00 & \underline{0.98} \\
& 1.3 & 0.68 & -6.64 & -8.20 & -6.91 & 0.19 & 0.62 & 0.49 & 0.50 & 0.67 & \textbf{0.58} &\textbf{0.57} & 1.00 & 1.00 & 0.97\\
& \cellcolor{blue!8} 1.4  & \cellcolor{blue!8} \textbf{0.75} & \cellcolor{blue!8} \textbf{-6.84} & \cellcolor{blue!8} \textbf{-8.70} & \cellcolor{blue!8} \textbf{-7.10} & \cellcolor{blue!8} 0.19 & \cellcolor{blue!8} \underline{0.64} & \cellcolor{blue!8} \textbf{ 0.54 }&  \cellcolor{blue!8} \textbf{0.53} & \cellcolor{blue!8} 0.65 & \cellcolor{blue!8} \underline{0.57} & \cellcolor{blue!8} 0.56 & \cellcolor{blue!8} 1.00 & \cellcolor{blue!8} 1.00 & \cellcolor{blue!8} \underline{0.98}\\
\bottomrule
\end{tabular}%
}
\end{table*}

\begin{figure}
    \centering
    \includegraphics[width=0.4\linewidth]{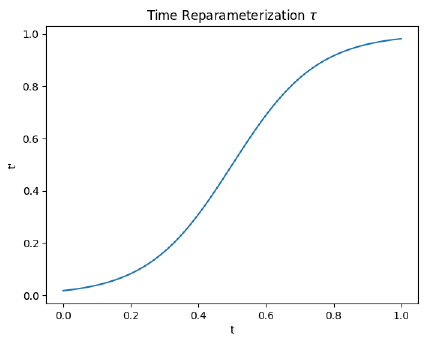}
    \caption{Time reparameterization showing non-uniform time evolution.}
    \label{fig:tau}
\end{figure}

{\textbf{Foundational regime analysis.}
Table~\ref{tab:introhighlowregime} compiles key works formalizing these roles. The low-noise regime ($\mathcal{R}_{L}$) is consistently identified as the phase responsible for \emph{precise, fine-scale refinement}, while the high-noise regime ($\mathcal{R}_{H}$) drives \emph{exploration and diversity}. These findings directly support task-specific allocations of denoising effort.}

\begin{table}[h]
\centering
\scriptsize
\renewcommand{\arraystretch}{0.9}
\setlength{\tabcolsep}{3pt}
\caption{{Foundational References for Noise Regime Mechanism}}
\label{tab:introhighlowregime}
{\begin{tabularx}{\textwidth}{p{2.5cm} |p{11cm}}
\toprule
\textbf{Reference} & \textbf{Contribution} \\
\midrule
\cite{perceptron} &
Identifies that certain noise levels offer a proper
pretext task for the model to learn rich visual concepts. \\
\midrule
\cite{highlownoiseregimemolecule} &
Anaylze the diffusion dynamics in molecular generative models. \\
\midrule
\cite{importancenoisescheduling} &
Shows that optimal schedules vary with task and resolution. \\
\midrule
\cite{lownoiseregime} &
Identifies $\mathcal{R}_{L}$ as the phase in which precision and fine-scale structure dominate. \\
\bottomrule
\end{tabularx}}
\end{table}

\subsection{{ACE Beyond Collapse: Compositional Image Generation in the Homogeneous Regime}}
\label{sec:image}

{In the main text, ACE is used primarily to \emph{repair} heterogeneous compositions where the path-existence criterion fails and Marginal Path Collapse occurs. In this section, we show that the same theory also yields \emph{gains beyond collapse avoidance} on a compositional image generation benchmark, even in a strictly homogeneous setting where $C(t)>0$ everywhere.}

{\textbf{Tail concentration in the homogeneous $\alpha_t$ regime.}
Even when the path-existence criterion $C(t)>0$ holds everywhere (i.e., no collapse), the coefficient $C(t)$ still controls how tightly the composed distribution $p_t^*$ is concentrated. The following result summarizes this dependence and motivates why time-varying exponents can improve sampling quality even in the homogeneous regime.}

\begin{proposition}[Tail and quantile control from a quadratic envelope]
\label{prop:radius_bound_general}
Let $\{h_t\}_{t\in[0,1]}$ be a family of nonnegative functions on $\mathbb{R}^d$ and let
$p_t^*(x):=h_t(x)/Z_t$ be well-defined with
$Z_t:=\int_{\mathbb{R}^d}h_t(x)\,dx<\infty$ for all $t\in[0,1]$.
Assume there exist constants $m_*>0$, $K>0$, $B>0$ and a function $C:[0,1]\to(0,\infty)$ such that
$Z_t\ge m_*$ for all $t$ and
\[
h_t(x)\;\le\;K\exp\!\Bigl(-\frac12 C(t)\|x\|^2 + B\|x\|\Bigr)
\qquad \forall x\in\mathbb{R}^d,\; t\in[0,1].
\]
Then there exist constants $K_0>0$ and $c_0>0$, depending only on $(d,K,m_*)$,
such that for all $t\in[0,1]$ and all $R\ge 0$,
\begin{equation}
\mathbb{P}_{X\sim p_t^*}\big(\|X\|>R\big)
\;\le\;
K_0\,
\exp\!\Bigl(\frac{B^2}{C(t)}\Bigr)\,C(t)^{-\frac d2}\,
\exp\!\bigl(-c_0\,C(t)\,R^2\bigr).
\label{eq:radius_tail_bound_fixed}
\end{equation}
Consequently, for any $\varepsilon\in(0,1)$, the $(1-\varepsilon)$--quantile radius
\[
R_t(\varepsilon)
:=\inf\{R>0:\,\mathbb{P}(\|X\|\le R)\ge 1-\varepsilon\}
\]
satisfies
\begin{equation}
R_t(\varepsilon)
\;\le\;
\sqrt{\frac{1}{c_0\,C(t)}\Bigg[
\log\frac{K_0}{\varepsilon}
+\frac{B^2}{C(t)}
-\frac d2 \log C(t)
\Bigg]_+},
\qquad [u]_+ := \max\{u,0\}.
\label{eq:radius_quantile_bound_fixed}
\end{equation}
In particular, for fixed $(d,\varepsilon)$, larger $C(t)$ yields tighter tails and smaller effective radii
(up to logarithmic factors and the $B^2/C(t)$ correction).
\end{proposition}

\begin{proof}
Fix $t$ and abbreviate $C:=C(t)>0$. By the AM-GM inequality,
for all $r\ge 0$,
\[
B r \;\le\;\frac{C}{4}r^2 + \frac{B^2}{C}
\quad\Rightarrow\quad
-\frac12 C r^2 + Br \;\le\; -\frac{C}{4}r^2 + \frac{B^2}{C}.
\]
Therefore,
\[
h_t(x)\;\le\;K\exp\!\Bigl(\frac{B^2}{C}\Bigr)\exp\!\Bigl(-\frac{C}{4}\|x\|^2\Bigr).
\]
Integrating outside a ball and using $Z_t\ge m_*$ gives
\[
\mathbb{P}_{X\sim p_t^*}(\|X\|>R)
=
\frac{1}{Z_t}\int_{\|x\|>R}h_t(x)\,dx
\;\le\;
\frac{K}{m_*}\exp\!\Bigl(\frac{B^2}{C}\Bigr)
\int_{\|x\|>R}\exp\!\Bigl(-\frac{C}{4}\|x\|^2\Bigr)\,dx.
\]
It remains to bound the Gaussian tail integral uniformly. Write $\alpha:=C/4$ and use polar coordinates and a constant $S_d$:
\[
\int_{\|x\|>R}e^{-\alpha\|x\|^2}\,dx
=
S_d\int_R^\infty e^{-\alpha r^2}r^{d-1}\,dr
=
S_d\,\alpha^{-d/2}\int_{\sqrt{\alpha}R}^\infty e^{-y^2}y^{d-1}\,dy.
\]
Define $g(u):=e^{u^2/2}\int_u^\infty e^{-y^2}y^{d-1}\,dy$ for $u\ge 0$.
Then $g$ is continuous, $g(0)=\int_0^\infty e^{-y^2}y^{d-1}dy<\infty$, and
$g(u)\to 0$ as $u\to\infty$ (since the integral decays like $u^{d-2}e^{-u^2}$).
Hence $M_d:=\sup_{u\ge 0}g(u)<\infty$.
Therefore, for all $u\ge 0$,
\[
\int_u^\infty e^{-y^2}y^{d-1}\,dy \;\le\; M_d\,e^{-u^2/2}.
\]
Plugging $u=\sqrt{\alpha}R$ yields
\[
\int_{\|x\|>R}e^{-\alpha\|x\|^2}\,dx
\;\le\;
S_d\,M_d\,\alpha^{-d/2}\exp\!\Bigl(-\frac{\alpha R^2}{2}\Bigr).
\]
With $\alpha=C/4$, this becomes
\[
\int_{\|x\|>R}\exp\!\Bigl(-\frac{C}{4}\|x\|^2\Bigr)\,dx
\;\le\;
S_d M_d\,4^{d/2}\,C^{-d/2}\exp\!\Bigl(-\frac{C R^2}{8}\Bigr).
\]
Combining all factors, we obtain \eqref{eq:radius_tail_bound_fixed} with
$c_0:=1/8$ and $K_0 := (K/m_*)\,S_d M_d\,4^{d/2}$.
Finally, enforce the RHS of \eqref{eq:radius_tail_bound_fixed} to be at most $\varepsilon$
and solve for $R$; taking $[\cdot]_+$ yields \eqref{eq:radius_quantile_bound_fixed}.
\end{proof}

{\textbf{Application to heterogeneous ratio-of-densities.}
In the setting of Theorem~\ref{thm:integrability_preservation_condition_for_compactly_supported}, each lifted expert $\tilde q_t^{(i)}$ admits upper bounds of the form
\[
\tilde q_t^{(i)}(x)\;\le\;C_{+,i}\, \exp\Bigl(-\tfrac{1}{2}\sum_{k\in I_i} a_{i,k}(t)\,x_k^2 + B_i\|x\|\Bigr),
\]
with $a_{i,k}(t)\ge 0$ and constants $C_{+,i},B_i>0$ independent of $t$. Multiplying these bounds and raising to the exponents $\gamma_i(t)$ yields
\[
h_t(x) \;\le\; K \exp\Bigl(-\tfrac{1}{2}\sum_{k=1}^d C_k(t)\,x_k^2 + B\|x\|\Bigr),
\]
where $K,B>0$ are uniform constants and $C_k(t)$ are precisely the coordinate-wise coefficients from Equation~\ref{eq:criterion_compactly_supported}. Defining $C(t) := \min_k C_k(t)$, we have $\sum_k C_k(t)x_k^2 \ge C(t)\|x\|^2$, so the envelope matches the form in Proposition~\ref{prop:radius_bound_general}. Theorem~\ref{thm:integrability_preservation_condition_for_compactly_supported} also implies that $h_t\in L^1(\mathbb{R}^d)$ and that the normalizing constants $Z_t$ are uniformly bounded below on any compact set where $C_k(t)>0$. Therefore, all assumptions of Proposition~\ref{prop:radius_bound_general} hold for our heterogeneous ratio-of-densities path $p_t^* = h_t/Z_t$, and the tail and quantile-radius bounds apply directly with this choice of $C(t)$.}

{This proposition shows that even when $C(t)>0$ everywhere (no collapse), the \emph{magnitude} of $C(t)$ still governs how concentrated $p_t^*$ is: for fixed $\varepsilon$, the radius $R_t(\varepsilon)$ shrinks as $C(t)$ grows (up to logarithmic factors). Thus, increasing $C(t)$ at intermediate times (for example, via the time-varying exponents of ACE) tightens the tails of $p_t^*$ and reduces its effective spatial extent. In practice, this suppresses large intermediate spreads, stabilizes importance weights, and improves sample quality, as we observe both in the 1D ratio-of-Gaussians trajectory and image experiment described below.}

\begin{example}[Compositional Image Generation]
\label{example:compositionalimagegeneration}
    We consider compositional text-to-image generation with region-wise prompts and bounding boxes. The task is generating an image $X$ conditioned on $n$ region-specific object prompts $c_{1:n}$ (with foreground masks $F_{1:n}$) and a global context prompt $C$. The target distribution factorizes via Bayes' rule into a product of expert likelihoods:
    \begin{equation}
        p(X \mid c_{1:n}, C) = \frac{1}{Z} \underbrace{p(X \mid C)^{\gamma_0(t)}}_{\substack{\text{Global} \\ \text{Consistency}}} \prod_{i=1}^n \underbrace{\left(\frac{p(F_i \mid c_i)}{p(F_i)}\right)^{\gamma_i(t)}}_{\substack{\text{Local} \\ \text{Specifics}}}
        \label{eq:compositionalimagegeneration}
    \end{equation}
    
    \textbf{ACE Implementation with a Single Backbone:}
    Crucially, we approximate all terms using a \textit{single} pretrained text-to-image model (Stable Diffusion \citep{rombach2022high} v1.5 / v2.1).
    \begin{itemize}
        \item $p_\theta(X \mid C)$: the base model conditioned on the global prompt.
        \item $p_\theta(F_i \mid c_i)$: the same model, applied to the cropped region $F_i$
          (via masking) with prompt $c_i$.
        \item $p_\theta(F_i)$: the model on $F_i$ with a null prompt.
    \end{itemize}
    All experts share architecture and noise schedule, placing us in the homogeneous regime of Theorem~\ref{thm:main_compactly_supported}, where the coefficient $C(t)$ remains strictly positive and no Marginal Path Collapse occurs. In this limit, path existence is guaranteed even for constant exponents.
\end{example}

{\textbf{ACE vs. FKC and NR in the homogeneous regime.} We apply our generic ACE sampler (Algorithm~\ref{alg:ACE}) to this setup by treating each region-specific term in \ref{eq:compositionalimagegeneration} as an expert and composing their scores and drifts as in Section~\ref{sec:method}. Since the schedules are homogeneous, ACE does not need to ``rescue'' a collapsing path. Instead, we use a small bump $B=5$ in the region exponents $\gamma_i(t)$, which increases $C(t)$ locally at intermediate times while preserving the endpoint distributions. Proposition~\ref{prop:radius_bound_general} implies that this yields a tighter intermediate concentration for $p_t^*$, reducing mass in off-manifold regions and stabilizing importance weights.}

{We compare three steering schemes using the same backbone, schedules, and hyperparameters\footnote{For the image experiments, it sufficed to resample once at $t_s=0.3$ due to a small batchsize of $N=3$.}:
\begin{itemize}
    \item \textbf{NR (CFG-like):} score-difference heuristic without importance weights.
    \item \textbf{FKC:} constant exponents with Feynman--Kac weighting and resampling (Algorithm~\ref{alg:FKC}).
    \item \textbf{ACE:} time-varying exponents with only a quadratic bump $B_1=5$ (Algorithm~\ref{alg:ACE}).
\end{itemize}
Because $C(t)>0$ everywhere, FKC and ACE both operate on valid probability paths; the only difference is the exponent schedule (i.e., constant vs. time-varying).}

\textbf{Comparison with Existing Paradigms.} Existing approaches to this task generally fall into three categories: (1) {Methods that train adapters} like GLIGEN~\citep{gligen}, Make-It-Count~\citep{lital2025makeitcount}; (2) {Auxiliary-guided methods} like 3DIS~\citep{zhou20253dis} that rely on external signals (depth maps, LLMs); and (3) {Architectural interventions} such as box-layout guidance~\citep{wang2024instancediffusion} or attention-editing techniques~\citep{chefer2023attend,qiu2025self} that modify model's internal signals. In contrast, ACE offers a sampling strategy that achieves unbiased modular composition using a single fundamental expert model without requiring retraining, external data, or architectural changes.

\textbf{Empirical results on COCO-MIG.}
We evaluate on the COCO-MIG benchmark~\citep{migc} and report the instance attribute success ratio (\%) and mIoU (\%) where the instance attribute success ratio measures the fraction of region-wise instructions correctly matched in the generated image. We compare ACE against NR and FKC, multiple diffusion baselines~\citep{rombach2022high,podell2023sdxl,esser2024scaling,flux2024}, and task-specific adapters (GLIGEN~\citep{gligen}, InstanceDiff~\citep{wang2024instancediffusion}, MIGC~\citep{migc}, 3DIS~\citep{zhou20253dis}). {As shown in Table~\ref{tab:validationofimagetask}, we observe a consistent hierarchy \emph{NR $<$ FKC $<$ ACE} across backbones: FKC improves substantially over the heuristic NR baseline, and ACE with $B=5$ further improves both attribute alignment and spatial metrics, despite being applied in a regime with no path collapse.} Remarkably, ACE matches or exceeds a task-specific adapter GLIGEN~\citep{gligen} on several metrics, while remaining completely training-free.

Qualitatively, Fig.~\ref{fig:qualitativesamplesforimagetask} shows that ACE produces sharply localized objects with correct attributes in their designated boxes, whereas the Stable Diffusion baseline struggles to localize objects or separate attributes. {These results provide an empirical complement to Proposition~\ref{prop:radius_bound_general}: even when the path-existence criterion is satisfied, carefully shaping the exponent schedule can yield \emph{stronger} conditioning and improved sample quality.}

\begin{table}[h]
\centering
\caption{Instance attribute success ratio (\%) with the corresponding mIoU (\%) shown in parentheses. L2–L6 denote the success ratios for tasks with 2 to 6 region-wise guidance conditions. ‘CLIP’ and ‘Local CLIP’ refer to CLIP scores computed on the entire image versus the full prompt, and on local crops versus the corresponding local prompts, respectively. }
\scriptsize
\setlength{\tabcolsep}{3pt}
\resizebox{\linewidth}{!}{
\begin{tabular}{lccccccccc}
\toprule
Method & Backbone & L2 & L3 & L4 & L5 & L6 & Avg & CLIP$\uparrow$ & Local CLIP$\uparrow$ \\
\midrule
\multicolumn{10}{c}{\textit{Base Diffusion Model with global prompt}} \\
\midrule
SD1.5 & SD1.5 & 5.59 (18.83) & 4.79 (17.43) & 2.83 (14.95) & 2.41 (13.93) & 2.21 (15.94) & 3.11 (15.75) & 24.64 & 18.36 \\
SDXL & SDXL & 5.59 (19.78) & 4.48 (18.54) & 2.81 (16.67) & 2.14 (15.72) & 2.80 (18.42) & 3.17 (17.55) & 25.71 & 18.63 \\
SD3.5-M & SD3.5-M & 8.05 (21.57) & 8.46 (21.37) & 6.07 (18.98) & 5.02 (17.39) & 4.40 (17.80) & 5.86 (18.85) & {26.41} & 18.77 \\
Flux.1-dev & Flux.1-dev & 8.83 (22.00) & 7.50 (20.93) & 4.86 (17.75) & 4.53 (16.77) & 3.22 (16.49) & 5.08 (18.03) & {26.17} & 18.56 \\
\midrule
\multicolumn{10}{c}{\textit{Inference-time Control using \textbf{only} the base Diffusion Model}} \\
\midrule
{NR}& SD1.5 & {24.61} (30.50) & {24.19} (29.81) & {19.36} (25.64) & {17.67} (24.42) & {19.54} (25.91) & {20.24} (26.53) & 24.96 & {20.21} \\
{FKC}& SD1.5 & 33.02 (34.43) & 31.18 (33.34) & 28.80 (31.53) & 25.46 (29.63) & 22.88 (28.61) & 28.27 (31.51) & 25.44 & 20.66 \\
\rowcolor{blue!8}
\textbf{ACE ($B_1=5, B_2=0$)}& SD1.5 & 45.31 (41.48) & 42.50 (40.60) & 36.25 (35.20) & 32.75 (32.66) & 32.40 (33.40) & 37.84 (36.67) & 25.59 & 21.18 \\
{NR}& SD2.1 & {26.52} (31.18) & {25.76} (30.50) & {20.94} (27.24) & {18.83} (24.77) & {18.78} (25.31) & {21.04} (26.93) & 24.89 & {19.96} \\
{FKC}& SD2.1 & {39.69} (37.78) & {34.93} (35.91) & {31.98} (33.95) & {28.42} (32.07) & {24.93} (30.02) & 31.99 (33.95) & 25.27 & {20.64} \\
\rowcolor{blue!8}
\textbf{ACE ($B_1=5, B_2=0$)}& SD2.1 & 46.25 (42.24) & 41.04 (39.45) & 41.72 (37.96) & 38.38 (37.17) & 34.90 (34.82) & 40.46 (38.33) & 24.94 & 21.09 \\

\midrule
\multicolumn{10}{c}{\textit{Adapter rendering methods (Requires Additional Training on new data or External Models)}} \\
\midrule
GLIGEN & SD1.4 & 38.36 (33.96) & 32.79 (29.58) & 28.67 (25.95) & 25.02 (23.88) & 26.98 (24.93) & 28.84 (26.47) & 24.91 & 20.78 \\
InstanceDiffusion & SD1.5 & 68.24 (62.67) & 60.47 (55.75) & 59.88 (54.15) & 53.92 (49.02) & 57.14 (51.34) & 58.49 (53.12) & 25.97 & 21.90 \\
MIGC & SD1.4 & 66.37 (57.02) & 63.10 (54.47) & 61.27 (52.48) & 57.25 (49.49) & 59.13 (51.38) & 60.41 (52.16) & 25.39 & 21.42 \\
3DIS & SD1.5 & 58.09 (52.76) & 51.48 (46.92) & 46.15 (42.46) & 40.39 (38.16) & 41.22 (38.47) & 45.23 (41.89) & 24.02 & 21.24 \\
\bottomrule
\end{tabular}
}
\label{tab:validationofimagetask}
\end{table}

\begin{table*}[h]
\centering
\caption{Runtime per image and peak GPU memory usage for generating a single image. Lower is better. Each method was evaluated on an NVIDIA A100 GPU, except 3DIS which was evaluated on an NVIDIA A6000 GPU.}
\scriptsize
\setlength{\tabcolsep}{4pt}
\begin{tabular}{lccccccc|ccccccc}
\toprule
\multirow{2}{*}{Method} & \multirow{2}{*}{Backbone} 
& \multicolumn{6}{c|}{\textit{Time (s) $\downarrow$}} 
& \multicolumn{6}{c}{\textit{VRAM (GB) $\downarrow$}} \\
\cmidrule(lr){3-8} \cmidrule(lr){9-14}
 & & L2 & L3 & L4 & L5 & L6 & Avg & L2 & L3 & L4 & L5 & L6 & Avg \\
\midrule
SD1.5             & SD1.5   & 1.48 & 1.48 & 1.54 & 1.52 & 1.52 & 1.51 & 2.64 & 2.64 & 2.64 & 2.64 & 2.64 & 2.64 \\
{NR}              & SD1.5   & 12.02 & 14.03 & 17.58 & 21.95 & 25.10 & 18.14 & 4.64 & 4.65 & 4.65 & 4.65 & 4.86 & 4.69 \\
{FKC}             & SD1.5   & 12.13 & 14.07 & 17.47 & 21.07 & 24.05 & 17.76 & 4.64 & 4.65 & 4.65 & 4.65 & 4.86 & 4.69 \\
\rowcolor{blue!8}
\textbf{ACE}      & SD1.5   & 15.47 & 18.75 & 23.53 & 28.12 & 32.01 & 23.58 & 4.64 & 4.65 & 4.65 & 4.65 & 4.86 & 4.69 \\
\midrule
SDXL              & SDXL    & 3.45 & 3.36 & 3.29 & 3.36 & 3.38 & 3.37 & 8.98 & 8.98 & 8.98 & 8.98 & 8.98 & 8.98 \\
SD3.5-M           & SD3.5-M & 4.56 & 4.55 & 4.55 & 4.55 & 4.55 & 4.55 & 17.61 & 17.61 & 17.61 & 17.61 & 17.61 & 17.61 \\
Flux.1-dev        & Flux.1-dev & 47.73 & 47.69 & 47.71 & 47.71 & 47.71 & 47.71 & 67.64 & 67.64 & 67.64 & 67.64 & 67.64 & 67.64 \\
GLIGEN            & SD1.4   & 2.61 & 2.62 & 2.62 & 2.62 & 2.61 & 2.62 & 6.01 & 6.01 & 6.01 & 6.01 & 6.01 & 6.01 \\
InstanceDiffusion & SD1.5   & 6.77 & 8.22 & 9.70 & 11.14 & 12.67 & 9.70 & 6.60 & 6.60 & 6.60 & 6.60 & 6.60 & 6.60 \\
MIGC              & SD1.4   & 2.58 & 2.58 & 2.56 & 2.57 & 2.58 & 2.57 & 5.43 & 5.43 & 5.43 & 5.43 & 5.43 & 5.43 \\
3DIS \scriptsize{\textit{(A6000)}}     & SD1.5   & 10.56 & 10.84 & 11.03 & 11.14 & 11.38 & 10.99 & 7.55 & 7.55 & 7.55 & 7.55 & 7.55 & 7.55 \\
\bottomrule
\end{tabular}
\label{tab:time_vram_combined}
\end{table*}

\begin{figure}
    \centering
    \includegraphics[width=0.8\linewidth]{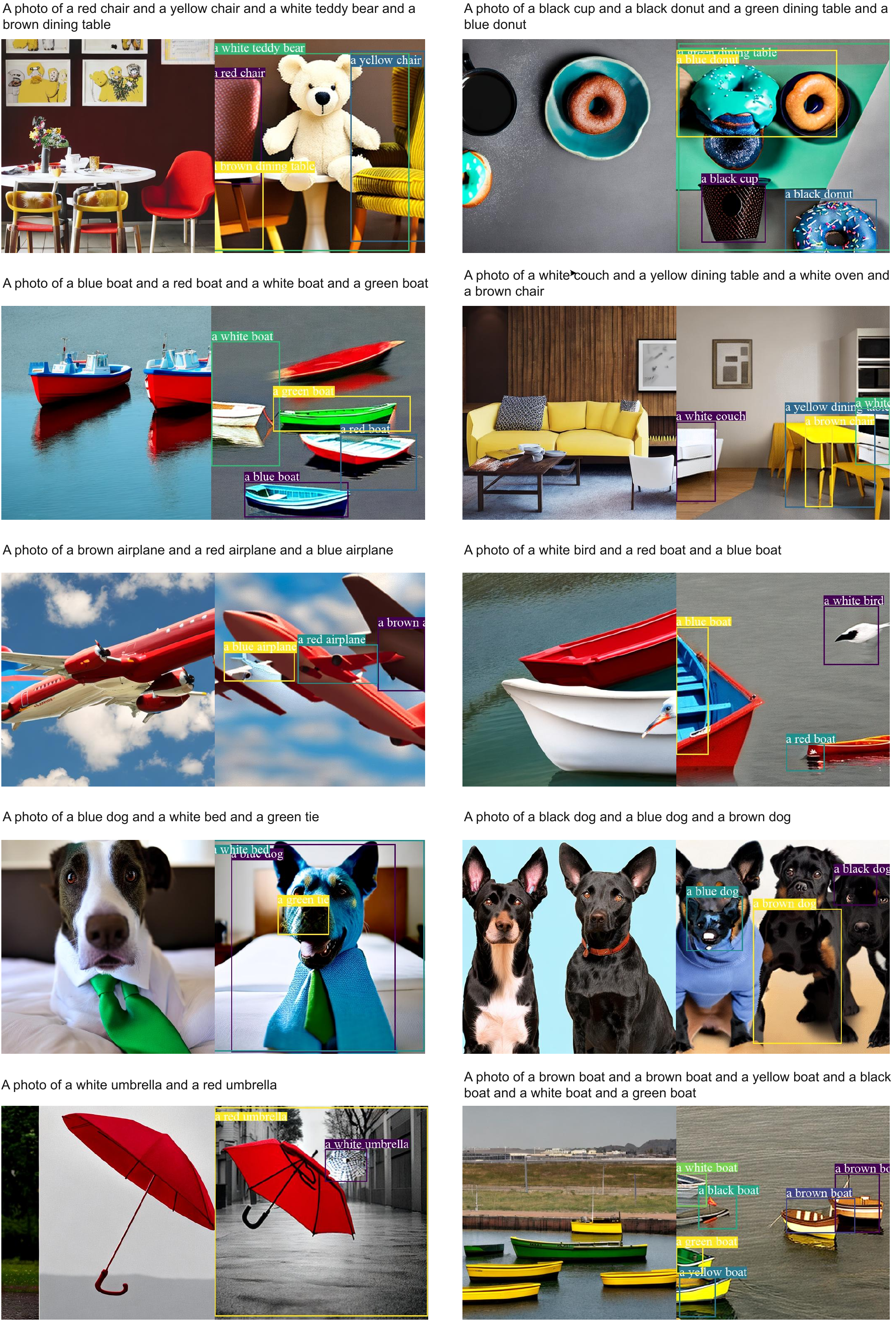}
    \caption{Qualitative results of compositional image generation with ACE. Compared to the base Stable Diffusion 2.1 model (left), simulating the same model with ACE (right) yields better prompt alignment and layout guidance -- completely without additional training or external models.}
    \label{fig:qualitativesamplesforimagetask}
\end{figure}

\section{Future Directions for ACE}
\label{sec:future_directions}

We conclude by outlining limitations and several promising directions for further development of the ACE framework and heterogeneous model composition, both in theory and in practice.

\textbf{Error propagation in model composition.}
A natural open question concerns the propagation of errors when composing multiple expert models using ACE. In practice, does the modular composition of experts lead to error accumulation that degrades performance compared to training a single, larger network? A systematic error analysis could clarify whether modularity introduces compounding approximation error or whether the benefits of specialization dominate in practical scenarios.

{\textbf{Extension to arbitrary transport.} We assumed that the stochastic interpolation is between a Gaussian and compactly supported distribution, which is a common formulation in generative modeling. However, tasks such as molecule or image editing, where a sample from the source distribution is given and the task is to transport that sample to a target distribution, require a different formulation since these models interpolate between two compactly supported distributions. Future work could explore when Marginal Path Collapse occurs in these general transport problems. 
} 

\textbf{Extension to infinite-dimensional spaces.}
Our reasoning was kept as general as possible so that it may be naturally extended to infinite-dimensional settings. Viewing each data sample as a function, rather than a point in finite-dimensional space, is both theoretically appealing and practically relevant in certain generative modeling applications. For instance, by replacing $\mu$ in \cref{prop:existence_of_heterogeneous_product_density} with a Gaussian measure on a Hilbert space, one can develop a function-space analogue of ACE with direct implications for models defined over functional data.

\textbf{Practical efficiency via distillation.}
A key limitation of ACE in practice is that the inference cost scales linearly with the number of experts. However, the weighted SDE/ODE formulation of \cref{thm:heterogeneous_fkc} suggests that existing techniques for model distillation---such as consistency models, flow map matching, or efficient SDE/ODE solvers---could be adapted to mitigate this cost.

\begin{remark}[Distillation of Weighted SDE]
    The probability path $\{p^*_t(X_t)\}_{t\in[0,1]},\; X_t\in \mathbb{R}^d$ can equivalently be simulated by solving the ODE
    $
        Y_0 \sim p^*_0 \otimes \delta(\mathbf{1}), \qquad 
        dY_t = \left(\begin{bmatrix}
            v^*_t\\ g_t
        \end{bmatrix}\circ \pi\right)(Y_t)  dt.
    $
    where $Y_t = \begin{bmatrix}
            X_t \\ \log w_t
        \end{bmatrix}\in\mathbb{R}^{d+1}$ and $\pi$ projects $Y_t$ to $X_t$. Applying distillation techniques to this formulation will enable efficient sampling while maintaining fidelity to the target distribution.
\end{remark}

\textbf{Expanding modular generation to scientific frontiers.} We demonstrated ACE's capacity to resolve path collapse in high-dimensional, multi-modal settings through the scaffold decoration task, where it successfully coordinated distinct modalities (bond topology, protein pocket, and 3D conformation). This success suggests that ACE can extend to even more complex scientific tasks, such as protein-glue generation and fragment linking, by decomposing them into families of existing models (de novo, conformer, and SBDD). Appendix~\ref{app:expsetupforfragmentlinking} provides an initial fragment-linking validation, while broader empirical validation--especially for protein-glue generation--remains future work.

{\textbf{Optimizing exponent schedules for scientific and creative tasks.} Our current ACE schedules use simple bump functions chosen to satisfy the path-existence criterion and, in the homogeneous regime, to modestly increase $C(t)$ at informative times. The COCO-MIG experiments in Appendix~\ref{sec:image} show that even such simple time-varying exponents can significantly improve sample quality over both NR and FKC, despite the absence of path collapse. This suggests a promising direction: treating the exponent schedules $\gamma_i(t)$ themselves as objects to be optimized for downstream objectives such as alignment or diversity. A complementary line of work is to extend ACE to truly heterogeneous creative pipelines that combine distinct backbones and modalities (e.g., global image-editing models with local text-conditioned generators), where incompatible noise schedules would otherwise induce collapse. In both cases, ACE provides the theoretical guarantees needed to explore richer steering schemes without sacrificing validity.}

{\textbf{Extension to hybrid processes in mixed continuous--categorical domains.} 
As discussed in Appendix~\ref{subsec:heterogeneousconditioningstructureinscientifictasks}, many scientific and molecular generative tasks involve data living in a product space of continuous coordinates (e.g., 3D positions) and categorical variables (e.g., atom and bond types). Current diffusion-based steering methods, including ACE, operate on continuous domains where intermediate densities are well defined under Gaussian interpolation. However, categorical variables do not admit Gaussian smoothing, and thus lack a natural notion of intermediate densities or scores along the diffusion path. This mismatch makes ratio-of-densities steering, and the study of Marginal Path Collapse, substantially more challenging in hybrid domains. Future work could develop principled stochastic interpolants for mixed continuous–discrete representations, identify conditions under which well-defined joint paths exist, and characterize when collapse arises in such hybrid settings. Such an extension would enable theoretically sound steering for full molecular structures, jointly evolving coordinates and atom/bond types, thereby broadening ACE's applicability to richer molecular generation pipelines.}

\textbf{A long-term vision: Towards a modular generative ecosystem.} 
ACE demonstrates that complex conditional generation can be achieved through the modular composition of pretrained expert models, offering a scalable alternative to monolithic, task-specific architectures. In the long run, this suggests a paradigm shift toward a standardized practice for generative modeling on continuous domains. Instead of training a bespoke model for every novel application, the community could curate a library of highly efficient expert models for fundamental marginal densities, composing them ad hoc using ACE. This modular paradigm opens the door to a principled framework for the collective intelligence of AI, where distinct models collaborate to solve complex tasks beyond the scope of their individual training.

\end{document}